\documentclass[aoas]{imsart}

\RequirePackage{amsthm,amsmath,amsfonts,amssymb}
\RequirePackage[authoryear]{natbib}
\RequirePackage[colorlinks,citecolor=blue,urlcolor=blue]{hyperref}
\RequirePackage{graphicx}

\newcommand{\figref}[1]{Figure~\ref{#1}}

\newcommand{\tabref}[1]{Table~\ref{#1}}

\newcommand{\secref}[1]{Section~\ref{#1}}

\newcommand{\appref}[1]{Section~\ref{#1}}

\newcommand{\algoref}[1]{Algorithm~\ref{#1}}
\newcommand{\eqnref}[1]{Equation~\ref{#1}}
\renewcommand{\eqref}[1]{\eqnref{#1}}

\newcommand{\cifar}[0]{CIFAR-10\xspace}
\newcommand{\cifarb}[0]{CIFAR-100\xspace}
\newcommand{\cinic}[0]{CINIC-10\xspace}
\newcommand{\wrn}[0]{Wide-ResNet\xspace}

\newcommand{\contract}[0]{contraction ratio\xspace}
\newcommand{\varia}[0]{variation coefficient\xspace}

\newcommand{\mia}[0]{MIA\xspace}
\newcommand{\mias}[0]{MIAs\xspace}
\newcommand{\lira}[0]{LiRA\xspace}
\newcommand{\rmia}[0]{RMIA\xspace}
\newcommand{\plmia}[0]{PL-MIA\xspace}
\newcommand{\plmiap}{PL-MIA$^{+}$\xspace}

\usepackage{booktabs}
\usepackage[table]{xcolor}
\usepackage{algorithm}
\usepackage{algorithmic}
\usepackage{pifont}
\usepackage{xspace}
\usepackage{multirow}
\usepackage{adjustbox}
\usepackage[percent]{overpic}
\usepackage{subcaption}
\usepackage{makecell}

\startlocaldefs
\theoremstyle{plain}
\newtheorem{theorem}{Theorem}[section]

\newtheorem{lemma}[theorem]{Lemma}

\theoremstyle{definition}
\newtheorem{definition}[theorem]{Definition}
\newtheorem{assumption}[theorem]{Assumption}

\endlocaldefs

\begin{document}

\begin{frontmatter}
\title{Membership Inference via Pairwise Likelihood Ratios}
\runtitle{Pairwise Likelihood MIA}

\begin{aug}
\author[A]{\fnms{Shengjie}~\snm{Niu}\ead[label=e1]{shengjie.niu@connect.polyu.hk}}
\author[B]{\fnms{Zebin}~\snm{Yun}\ead[label=e2]{zebinyun@mail.tau.ac.il}}
\author[A]{\fnms{Yeheng}~\snm{Ge}\ead[label=e3]{yeheng.ge@polyu.edu.hk}}
\author[A]{\fnms{Jian}~\snm{Huang}\ead[label=e4]{j.huang@polyu.edu.hk}}
\address[A]{The Hong Kong Polytechnic University, Hong Kong, China\printead[presep={,\ }]{e1}\printead[presep={,\ }]{e3}\printead[presep={,\ }]{e4}}

\address[B]{Tel Aviv University, Israel \printead[presep={,\ }]{e2}}
\end{aug}

\begin{abstract}
Membership inference attacks (MIAs) are the standard tool for auditing the privacy risks of machine learning models. Given a query point, an MIA aims to determine whether that point was used to train the target model. In practice, such inference must rely on the statistical signals exposed by the model's outputs, such as confidence scores, logits, and intermediate feature representations. However, existing methods often fail to efficiently summarize and combine these statistical signals. To address this limitation, we propose Pairwise Likelihood MIA (PL-MIA), a unified method that combines a Gaussian likelihood-ratio (GLR) statistic with population calibration and the Cauchy combination test. We characterize theoretically how the GLR retains variance-contraction signals and establish conditions under which population calibration and Cauchy combination improve attack power. We obtain $p$-values from pairwise comparisons between the query point and reference points not used for training, and aggregate these continuous signals using the Cauchy combination test. This preserves the evidence strength that is discarded when each pairwise comparison is reduced to a binary vote. Extensive experiments demonstrate that PL-MIA outperforms strong baselines, improving the true positive rate (TPR) by over 25\% in the critical low-false-positive regime, corroborating our theoretical findings. These results demonstrate how statistical principles can turn noisy model outputs into more powerful, calibrated, and reproducible evidence for membership privacy auditing.
\end{abstract}

\begin{keyword}
\kwd{Membership Inference Attack}
\kwd{Hypothesis Test}
\kwd{Cauchy Combination Test}
\end{keyword}

\end{frontmatter}

\section{Introduction}\label{sec:intro}

Machine-learning models trained on individual-level data may reveal whether a particular record was included in their training sets.
Such membership information can itself be sensitive. For example, identifying an individual as a member of a biomedical, genomic, or financial dataset may disclose private information about that individual or their participation in a sensitive study~\citep{homer2008resolving,backes2016membership,hernandez2024financial}.
Membership inference attacks (MIAs)~\citep{mia} provide a data-driven tool for auditing this risk: given black-box access to a trained model and a query point, an auditor attempts to determine whether that point was used for training.
A central practical question is therefore whether an auditor can make reliable membership inferences while keeping false accusations of non-members rare.
This requirement makes the low false-positive-rate (low-FPR) regime particularly important.
An attack with satisfactory average discrimination ability may still be unsuitable for privacy auditing if most of its positive findings are unsupported.
Therefore, effective MIAs must achieve high power in this low-FPR regime.
However, this objective is challenging in practical black-box settings, where membership can only be inferred from model outputs that reflect not only membership status but also intrinsic properties of the query point.
For instance, an intrinsically easy point may receive high confidence even when it was not used for training, whereas a difficult training point may receive relatively low confidence.
A credible audit must therefore separate membership-related signal from query-level heterogeneity.

Membership inference is fundamentally a statistical decision problem: we must effectively extract membership information from limited and heterogeneous model outputs, reliably aggregate diverse statistical evidence, and translate it into sound membership decisions. This objective motivates the development of principled statistical methodologies, each addressing a distinct part of the pipeline.
First, likelihood theory guides the construction of efficient test statistics that extract membership signal from noisy model outputs, while population calibration separates the membership-related component of the signal from query-level heterogeneity arising from query difficulty. Second, combination tests aggregate multiple, potentially dependent statistical signals into a single membership score, maximizing the information retained in the final statistic. Third, statistical uncertainty quantification converts the resulting scores into decisions with controlled FPRs, a particularly critical capability in the extreme low-FPR regime.
While these three components form a coherent framework, the signal aggregation via combination tests and decision-making with rigorous FPR control constitute the primary methodological contributions of our article.
In particular, existing MIAs typically determine decision thresholds on additional held-out data~\citep{lira}, which both impose extra data requirements and introduce variability into threshold estimation. We therefore derive decision thresholds analytically, without held-out data, while keeping the realized FPR at or below its nominal level across heterogeneous queries.

This statistical perspective also reveals the limitations of existing strong MIAs. Recent methods increasingly formulate membership inference as a hypothesis-testing problem~\citep{lira}.
For each query point, reference models are used to estimate the distributions of model outputs when the point is included in or excluded from the training set, referred to as the IN and OUT distributions, respectively~\citep{sablayrolles2019white,lira,watson2021importance,rmia}.
LiRA~\citep{lira} models these distributions separately and uses a Gaussian likelihood ratio (GLR) as a test statistic to extract a fine-grained membership signal. This statistic exploits both the shift in predictive confidence and the empirical contraction of predictive variance observed for training members. However, directly applying a global threshold to this statistic does not account for query-level heterogeneity.
RMIA~\citep{rmia} addresses query-level heterogeneity through population calibration by comparing the query point with randomly sampled population points, but its test statistic does not explicitly exploit the variance-contraction signal.
Moreover, its population calibration reduces each pairwise comparison to a binary indicator, treating a marginal comparison and an overwhelmingly strong comparison as equally informative. Thus, existing methods do not simultaneously extract the available membership signal, adjust for query-level heterogeneity, and preserve the strength of pairwise evidence.

Motivated by this gap, we propose a unified framework that decomposes MIAs into two components: (1) a pointwise statistic that extracts membership signal for each query point, and (2) a population calibration strategy that normalizes the query point's statistic using population data to reduce intrinsic query-level heterogeneity.
Building on this unified framework, we propose Pairwise Likelihood MIA (\textbf{\plmia}).
\emph{First}, we adopt the GLR as the pointwise statistic.
We theoretically show that the GLR explicitly captures the \emph{variance-contraction} signal and therefore yields greater separation between the member and non-member statistic distributions than the BLR.
\emph{Second}, we design an improved population calibration strategy based on \emph{continuous} pairwise evidence.
For each pairwise comparison between the query point and a randomly sampled population point, we convert the pairwise difference into a continuous $p$-value and then aggregate multiple $p$-values using the Cauchy combination rule~\citep{cct}.
In contrast to the threshold-based binary evidence~\citep{rmia}, continuous $p$-values preserve the strength of each pairwise comparison.
Meanwhile, the Cauchy combination rule improves sensitivity to subtle membership signals~\citep{liu2019acat} and enables an analytic decision threshold at a target FPR.
\emph{Theoretically}, we show that population calibration centers non-member scores around their corresponding neutral baselines, providing a common scale for thresholding membership evidence.
We further derive a unified expression for attack power across strong attacks, showing that \plmia improves over \lira and \rmia by simultaneously exploiting the variance-contraction signal and reducing query-level heterogeneity.

The contributions of this study are summarized as follows:
\textbf{(1)} We propose a unified framework that characterizes MIAs as the composition of a \emph{pointwise statistic} and a \emph{population calibration} strategy.
\textbf{(2)} We propose Pairwise Likelihood MIA (PL-MIA), which integrates the Gaussian likelihood ratio, population calibration, and Cauchy combination into a statistically principled attack.
\textbf{(3)} We theoretically characterize the advantage of \plmia over strong baselines by simultaneously exploiting variance contraction and reducing query-level heterogeneity.
\textbf{(4)} Extensive experiments demonstrate that \plmia consistently improves attack performance across datasets and achieves substantial gains in the critical low-FPR regime.

\section{Preliminary}
\label{sec:preliminary}

Throughout the paper, we denote the target model, parameterized by $\theta_t$, as $f_{\theta_t}$.
Membership Inference Attacks~(\mias)~\citep{mia} aim to determine whether a query point was used to train the target model.
We formalize MIA through a security game between a challenger (model owner) and an adversary (privacy auditor)~\citep{yeom2018privacy, jayaraman2020revisiting,lira}.

\begin{definition}[Membership Inference Game]
\label{def:mi_game}
Let $\pi$ be the underlying data distribution, $\mathcal{T}$ the challenger's training algorithm, and $\mathcal{A}$ the adversary's membership inference attack. The \mia game proceeds as follows:
\begin{enumerate}
    \item The challenger samples a training dataset $\mathcal{D} \sim \pi^n$ and trains the target model $f_{\theta_t} \leftarrow \mathcal{T}(\mathcal{D})$.
    \item The challenger samples a membership bit $b \sim \mathrm{Bernoulli}(1/2)$. If $b=0$, it samples a fresh query point $q \sim \pi$ subject to $q \notin \mathcal{D}$; if $b=1$, it samples $q$ uniformly at random from $\mathcal{D}$.
    \item The challenger provides the query point $q$ and access to the target model $f_{\theta_t}$ to the adversary, according to the specified threat model.
    \item The adversary outputs a membership prediction $\hat{b} \leftarrow \mathcal{A}(f_{\theta_t},q)$ and wins the game if
    $\hat{b}=b$.
\end{enumerate}
\end{definition}
The membership bit $b$ determines the membership status of the query point: when $b=1$, $q$ is sampled from the target model's training set $\mathcal{D}$ and is therefore a member; when $b=0$, $q$ is sampled from the underlying distribution $\pi$ but excluded from $\mathcal{D}$ and is therefore a non-member. The choice $b\sim\mathrm{Bernoulli}(1/2)$ is a standard convention that assigns equal prior probability to the member and non-member cases~\citep{jayaraman2020revisiting}.

\paragraph*{Threat Model}
A threat model specifies which information is available to the adversary for constructing statistical evidence of membership.
Following prior work~\citep{watson2021importance,emia,bertran2023scalable,rmia}, we consider a worst-case black-box threat model.
The adversary has access to the underlying data distribution $\pi$ and can replicate the target model's architecture and training algorithm~\citep{zhu2025impact}, as in settings where both the adversary and challenger use the same machine learning-as-a-service provider~\citep{salem2018ml}.
However, the adversary has no access to the target model’s training set or trained parameters and is limited to querying the target model and observing its outputs.

\paragraph*{Membership Score}
Inputting a query point $q$ into a target model,
a MIA assigns
a membership score $\mathrm{Score}(q)$,
where a larger score indicates stronger evidence of membership.
The adversary converts this score into a membership prediction by thresholding it at a decision threshold $\tau_\alpha$~\citep{rmia}:
\begin{equation}
    \label{eq:member_score}
    \hat{b} = \mathcal{A}(f_{\theta_t},q) = \mathbb{I}(\mathrm{Score}(q) \geq \tau_\alpha),
\end{equation}
where $\mathbb{I}(\cdot)$ is the indicator function and $\tau_\alpha$ is chosen to target a false positive rate (FPR) of $\alpha$. 
Since practical audits often require near-zero false accusations, we prioritize attacks that achieve high TPR under stringent FPR constraints.
For existing MIAs, $\tau_\alpha$ is typically estimated using held-out data or derived analytically~\citep{lira}.
In PL-MIA, pairwise $p$-values are aggregated using the Cauchy combination rule, yielding an analytic decision threshold for nominal FPR control without requiring additional held-out data.

\paragraph*{Statistical Hypothesis}
We formalize the \mia test by defining two distributions over model parameters: $\Theta^{(q)}_{\mathrm{IN}}$ and $\Theta^{(q)}_{\mathrm{OUT}}$.
For a given query point $q$, $\Theta^{(q)}_{\mathrm{IN}}$ denotes the distribution of parameters of reference models trained on datasets that \emph{include} $q$, whereas $\Theta^{(q)}_{\mathrm{OUT}}$ denotes the analogous distribution when $q$ is \emph{excluded}.
The null and alternative hypotheses are
\begin{equation}
    H_0: \theta_t \sim \Theta^{(q)}_{\mathrm{OUT}}
    \quad \mathrm{versus} \quad
    H_1: \theta_t \sim \Theta^{(q)}_{\mathrm{IN}}.
\end{equation}
Formally, a model parameter $\theta_t$ is drawn from the IN distribution if trained on a dataset containing the query point $q$, and from the OUT distribution otherwise.
By the Neyman-Pearson lemma~\citep{neyman1933ix}, the most powerful test can be constructed using the likelihood ratio (LR):
\begin{equation}
    \label{eq:lrt}
    \Lambda^*(\theta_t,q) = \frac{\mathrm{pdf}(\theta_t|\Theta^{(q)}_{\mathrm{IN}})}{\mathrm{pdf}(\theta_t|\Theta^{(q)}_{\mathrm{OUT}})},
\end{equation}
where $\mathrm{pdf}(\theta_t|\Theta^{(q)}_{\mathrm{b}})$ denotes the density of observing $\theta_t$ under hypothesis $b \in \{ \mathrm{IN}, \mathrm{OUT} \}$.
When $q$ is a training member, $\theta_t$ is typically more likely under $\Theta^{(q)}_{\mathrm{IN}}$ than under $\Theta^{(q)}_{\mathrm{OUT}}$, leading to a larger value of $\Lambda^*(\theta_t,q)$.

Despite this optimality, directly modeling $\Theta^{(q)}_{\mathrm{IN}}$ and $\Theta^{(q)}_{\mathrm{OUT}}$ is generally intractable in the black-box setting, where the target model parameters $\theta_t$ are not observable.
Following~\citet{lira}, we therefore construct an observable statistic from the target model output for a query point $q=(x_q,y_q)$:
$$\mathrm{conf}^{(q)}_t = \phi(f_{\theta_t}(x_q)[y_q]),$$
where $f_{\theta_t}(x_q)[y_q]$, referred to as the true label confidence (TLC)\footnote{For a classification model, the input $x_q$ represents the observed features, and the model outputs a probability vector over possible classes.
In this work, we consider the probability assigned to the true class $y_q$ as the observable signal for membership inference under the black-box setting.}, denotes the probability assigned by the target model to the true label $y_q$ for the input $x_q$, and $\phi(p) = \log \frac{p}{1-p}$ is the logit transform.
Motivated by empirical evidence that the logit-scaled TLC values are well approximated by Gaussian distributions~\citep{lira}, we introduce the following assumption to make the hypothesis-testing problem analytically tractable.
We present normality diagnostics and evaluate robustness to violations of this Gaussian assumption in Section~S2 of the Supplementary Material.

\begin{assumption}[Gaussian Distribution]
\label{ass:gaussian_model}
For any query point $q$, the logit-transformed confidence scores follow Gaussian distributions conditioned on membership status:
\begin{equation}
\label{eq:gaussian_model}
\mathrm{conf}^{(q)}_{\mathrm{IN}} \sim \mathcal{N}\left(\mu_{\mathrm{in}}^{(q)}, (\sigma_{\mathrm{in}}^{(q)})^2\right), \quad
\mathrm{conf}^{(q)}_{\mathrm{OUT}} \sim \mathcal{N}\left(\mu_{\mathrm{out}}^{(q)}, (\sigma_{\mathrm{out}}^{(q)})^2\right),
\end{equation}
where $\mathrm{conf}^{(q)}_{\mathrm{IN}}$ and $\mathrm{conf}^{(q)}_{\mathrm{OUT}}$ denote the induced distributions of $\phi(f_{\theta}(x_q)[y_q])$ when $\theta \sim \Theta^{(q)}_{\mathrm{IN}}$ and  $\theta \sim \Theta^{(q)}_{\mathrm{OUT}}$, respectively.
\end{assumption}

This assumption shifts the distinction between the IN and OUT model populations onto their output distributions for the query point $q$, which are characterized by three quantities.
(1) the \textit{normalized mean shift} $\eta^{(q)} := (\mu_{\mathrm{in}}^{(q)} - \mu_{\mathrm{out}}^{(q)}) / \sigma_{\mathrm{out}}^{(q)} > 0$, representing the normalized increase in predictive confidence of members;
(2) the \textit{\contract} $\rho^{(q)}:= \sigma_{\mathrm{in}}^{(q)} / \sigma_{\mathrm{out}}^{(q)} \leq 1$, capturing the reduced predictive uncertainty of members; and
(3) the \textit{\varia} $\xi^{(q)} := [1-\operatorname{sigmoid}(\mu_{\mathrm{out}}^{(q)})]
\sigma_{\mathrm{out}}^{(q)}$, representing the leading-order coefficient
of variation of the raw TLC under $H_0$.
We restrict attention to tests whose evidence depends on $\theta_t$ only through the statistic $\phi(f_{\theta_t}(x_q)[y_q])$.

Within this class of tests, membership testing based on $\theta_t$ reduces to testing the induced observable statistic $\mathrm{conf}^{(q)}_t$.
Therefore, the likelihood ratio in \eqref{eq:lrt} becomes
\begin{equation}
    \label{eq:lrt_conf}
    \Lambda(\theta_t,q) = \frac{
    \mathrm{pdf}(\mathrm{conf}^{(q)}_t|\Theta^{(q)}_{\mathrm{IN}})
    }{
    \mathrm{pdf}(\mathrm{conf}^{(q)}_t|\Theta^{(q)}_{\mathrm{OUT}})}.
\end{equation}
The LR statistic is based on the empirical observation that models trained with a query point $q$ tend to assign higher confidence to its true label, whereas models trained without $q$ tend to assign lower confidence to the same point.
The statistic $\mathrm{conf}^{(q)}_t$ is calculated from the model output and depends on the underlying parameter $\theta_t$. It serves as an observable surrogate for parameter-level membership evidence in the black-box setting.

\section{Methodology}
\label{sec:method}

\subsection{A Unified Framework of MIA}
We present a unified framework that characterizes MIAs as the composition of two components: a \emph{pointwise statistic} and a \emph{population calibration strategy}. This decomposition clarifies differences among prior attacks and guides our attack design.
We summarize recent strong attacks in \tabref{tab:improvement_sources}.

\begin{table*}[th]
\centering
\caption{Comparison of representative MIA methods under the unified framework.
We compare Attack-R and Attack-P~\citep{emia}, \lira~\citep{lira}, \rmia~\citep{rmia}, and \plmia in terms of their pointwise statistics and population calibration strategies; for attacks with population calibration, the latter is further decomposed into evidence generation and evidence aggregation.}
\label{tab:improvement_sources}
\begin{adjustbox}{width=\linewidth}
\begin{tabular}{@{}lccccc@{}}
\toprule
\textbf{Attacks} &
\makecell{\textbf{Attack-R}} &
\makecell{\textbf{Attack-P}} &
\makecell{\textbf{\lira}} &
\makecell{\textbf{\rmia}} &
\makecell{\textbf{\plmia (OURS)}} \\
\midrule
\textbf{Pointwise Statistic} & LOSS & LOSS & Gaussian LR & Bayes LR & Gaussian LR \\
\hspace{1.5mm} \textit{Estimates IN Dist.} & \ding{56} & \ding{56} & \ding{52} & \ding{52} & \ding{52} \\
\hspace{1.5mm} \textit{Exploits Contraction ($\rho$)} & \ding{56} & \ding{56} & \ding{52} & \ding{56} & \ding{52} \\ \midrule
\textbf{Population Calibration} & \ding{56} & \ding{52} & \ding{56} & \ding{52} & \ding{52} \\
\hspace{1.5mm} \textit{Evidence Generation} & - & Binary indicator  & - & Binary indicator & Continuous $p$-value \\
\hspace{1.5mm} \textit{Evidence Aggregation} & - & Mean & - & Mean & Cauchy combination \\
\bottomrule
\end{tabular}
\end{adjustbox}
\vspace{-1em}
\end{table*}

\paragraph*{Pointwise Statistic}

In our framework, each \mia first constructs a scalar pointwise statistic $\varphi^{(d)}$ that extracts the individual membership signal for a query point $d$.
A larger value of $\varphi^{(d)}$ indicates stronger evidence for membership.
The pointwise statistic serves as the fundamental building block of an MIA, and different attacks correspond to different choices of $\varphi^{(d)}$.
Recent strong attacks construct $\varphi^{(d)}$ through hypothesis testing, using multiple reference models to extract finer-grained membership signals~\citep{lira,watson2021importance,rmia}.
As discussed in \secref{sec:preliminary}, the most powerful tests are based on likelihood ratios (LRs); accordingly, we focus on attacks that use LR estimators as the pointwise statistic~\citep{lira,rmia}:
\begin{itemize}
    \item \textbf{LiRA}~\citep{lira} uses the Gaussian LR:
    $\varphi_{\mathrm{GLR}}^{(d)}  =
    \log \frac{\mathrm{pdf}(\mathrm{conf}^{(d)}_t|\mathcal{N}(\mu_{\mathrm{in}}^{(d)},(\sigma_{\mathrm{in}}^{(d)})^2))}{\mathrm{pdf}(\mathrm{conf}^{(d)}_t|\mathcal{N}(\mu_{\mathrm{out}}^{(d)},(\sigma_{\mathrm{out}}^{(d)})^2))}$. It estimates the LR by fitting Gaussian models to the $\mathrm{conf}^{(d)}_{\mathrm{IN}}$ and $\mathrm{conf}^{(d)}_{\mathrm{OUT}}$ distributions.
    \item \textbf{RMIA}~\citep{rmia} employs the Bayes LR: $\varphi_{\mathrm{BLR}}^{(d)} := \log (\frac{f_{\theta_t}(x_d)[y_d]}{\bar c^{(d)}})$, where $\bar c^{(d)}:=\mathbb{E}[f_{\theta}(x_d)[y_d]]$ is the average raw TLC over all reference models.
\end{itemize}

\paragraph*{Population Calibration Strategy}
This strategy compares the statistics of the query point with those of population points to neutralize intrinsic query-level variability.
It transforms the pointwise statistic $\varphi^{(d)}$ into the membership score used in \eqref{eq:member_score} through two steps: (i) \emph{Evidence Generation}, which forms membership evidence from each pairwise comparison between the query point and a randomly sampled population point, and (ii) \emph{Evidence Aggregation}, which combines the generated evidence from multiple pairwise comparisons into a single membership score.
\begin{itemize}
    \item \textbf{Without  Calibration} \citep{lira}: The membership score is set to the statistic itself, $\mathrm{Score_{LiRA}}(q):= \varphi^{(q)}_{\mathrm{GLR}}$, and is compared to a global decision threshold to infer membership. As noted by \citeauthor{rmia}, this approach does not calibrate against population data.
    \item \textbf{With Calibration} \citep{rmia}: The query point's statistic $\varphi^{(q)}_{\mathrm{BLR}}$ is calibrated against the statistics $\{\varphi^{(z)}_{\mathrm{BLR}}\}_{z \in Z}$ of a set of population points $Z$,
    \begin{equation}
        \label{eq:rmia_membership_score}
        \mathrm{Score}_{\mathrm{RMIA}}(q) := \Pr_{z \in Z} [\varphi^{(q)}_{\mathrm{BLR}} - \varphi^{(z)}_{\mathrm{BLR}} > \gamma].
    \end{equation}
Here, population calibration tests whether the query point $q$ $\gamma$-dominates a randomly sampled population point $z$. Each pairwise comparison yields one binary indicator, and evidence aggregation takes the mean of these bits to produce the final membership score.
\end{itemize}

\subsection{Designing PL-MIA}
\label{sec:optimal_llr}

Under this unified framework, we propose Pairwise Likelihood MIA (\plmia) by integrating statistically principled designs for both components.
It adopts the GLR pointwise statistic and uses improved population calibration with a continuous evidence generation and aggregation mechanism.
We provide the pseudocode of \plmia in \algoref{alg:plmia}.

\begin{algorithm}[!htbp]
\caption{\textbf{MIA Score Computation with PL-MIA.} Inputs: reference models $\Theta$ with training-membership records, target model $\theta_t$, query point $q=(x_q,y_q)$, population dataset $\mathcal{D}_{\mathrm{pop}}$, attack mode $m\in\{\mathrm{online},\mathrm{offline}\}$, and optional offline hyperparameters $\eta$, $\rho$.}
\label{alg:plmia}
\begin{algorithmic}[1]
    \STATE Randomly choose a hold-out subset $Z \subset \mathcal{D}_{\mathrm{pop}}\setminus\{q\}$; \quad $C \gets 0$

    \STATE \textbf{// Step 1: Estimate IN/OUT distributions for all relevant data points}
    \FOR {each data point $d \in \{q\} \cup Z$}
        \STATE $\mathrm{confs}_{\mathrm{IN}}^{(d)} \gets \emptyset$; \quad $\mathrm{confs}_{\mathrm{OUT}}^{(d)} \gets \emptyset$
        \FOR {each reference model $\theta' \in \Theta$}
            \STATE $\mathrm{conf} \gets \phi\big( f_{\theta'}(x_d)[y_d] \big)$ \hfill {\color{gray}\# Query reference model}
            \IF {$d \in$ training set of $\theta'$}
                \STATE $\mathrm{confs}_{\mathrm{IN}}^{(d)} \gets \mathrm{confs}_{\mathrm{IN}}^{(d)} \cup \{\mathrm{conf}\}$
            \ELSE
                \STATE $\mathrm{confs}_{\mathrm{OUT}}^{(d)} \gets \mathrm{confs}_{\mathrm{OUT}}^{(d)} \cup \{\mathrm{conf}\}$
            \ENDIF
        \ENDFOR
        \STATE $\mu_{\mathrm{OUT}}^{(d)} \gets \mathrm{mean}(\mathrm{confs}_{\mathrm{OUT}}^{(d)})$; \quad $\sigma_{\mathrm{OUT}}^{(d)} \gets \mathrm{std}(\mathrm{confs}_{\mathrm{OUT}}^{(d)})$
        \IF {$m=\mathrm{online}$}
            \STATE $\mu_{\mathrm{IN}}^{(d)} \gets \mathrm{mean}(\mathrm{confs}_{\mathrm{IN}}^{(d)})$; \quad $\sigma_{\mathrm{IN}}^{(d)} \gets \mathrm{std}(\mathrm{confs}_{\mathrm{IN}}^{(d)})$
        \ELSE
            \STATE $\sigma_{\mathrm{IN}}^{(d)} \gets \rho\cdot\sigma_{\mathrm{OUT}}^{(d)}$; \quad $\mu_{\mathrm{IN}}^{(d)} \gets \mu_{\mathrm{OUT}}^{(d)}+\eta\cdot\sigma_{\mathrm{OUT}}^{(d)}$
        \ENDIF
    \ENDFOR

    \STATE \textbf{// Step 2: Query target model and compute the GLR pointwise statistic}
    \FOR {each data point $d \in \{q\} \cup Z$}
        \STATE $\mathrm{conf}_{t}^{(d)} \gets \phi\big( f_{\theta_t}(x_d)[y_d] \big)$ \hfill {\color{gray}\# Query target model}
        \STATE $\varphi^{(d)}_{\mathrm{PL}} \gets \log \mathrm{pdf}\big(\mathrm{conf}_{t}^{(d)} \mid \mathcal{N}(\mu_{\mathrm{IN}}^{(d)}, (\sigma_{\mathrm{IN}}^{(d)})^2)\big) - \log \mathrm{pdf}\big(\mathrm{conf}_{t}^{(d)} \mid \mathcal{N}(\mu_{\mathrm{OUT}}^{(d)}, (\sigma_{\mathrm{OUT}}^{(d)})^2)\big)$
    \ENDFOR
    \STATE $\widehat{\sigma}_{\Delta} \gets \sqrt{2}\cdot \mathrm{std}\big(\{\varphi^{(z)}_{\mathrm{PL}}:z\in Z\}\big)$ \hfill {\color{gray}\# Null scale for pairwise $p$-values}

    \STATE \textbf{// Step 3: Pairwise comparison and Cauchy aggregation}
    \FOR {each sample $z \in Z$}
        \STATE $\Delta(q,z) \gets \varphi^{(q)}_{\mathrm{PL}} - \varphi^{(z)}_{\mathrm{PL}}$
        \STATE $p(q,z) \gets \Phi\big(-\Delta(q,z)/\widehat{\sigma}_{\Delta}\big)$
        \STATE $C \gets C + \tan\big((0.5-p(q,z))\cdot\pi\big)$
    \ENDFOR

    \STATE \textbf{return} $\mathrm{Score}_{\mathrm{PL}}(q;\theta_t) \gets C/|Z|$
\end{algorithmic}
\end{algorithm}

\paragraph*{Pairwise Difference as Membership Evidence}
The core of population calibration lies in comparing the query point $q$ against a set of population points $Z$ to neutralize the intrinsic query-level heterogeneity.
We start by defining the \emph{pairwise difference} as the difference between the pointwise statistic of $q$ and a randomly sampled population point $z \in Z$:
\begin{equation}
    \Delta(q, z) = \varphi^{(q)}_{\mathrm{PL}} - \varphi^{(z)}_{\mathrm{PL}},
\end{equation}
where we adopt the GLR as the pointwise statistic, $\varphi_{\mathrm{PL}} = \varphi_{\mathrm{GLR}}$.
This difference $\Delta(q, z)$ serves as the fundamental unit of membership evidence for each pairwise comparison. Under $H_1$, where $q$ is a member, its statistic tends to exceed that of non-member points, resulting in positive pairwise differences on average. Under $H_0$, where both $q$ and $z$ are non-members, $\Delta(q, z)$ is approximately symmetric around zero.

\paragraph*{From Binary Voting to Continuous $p$-values}
A standard population calibration strategy to aggregate these pairwise differences into a membership score is to count the proportion of population points that the query point $q$ can $\gamma$-dominate~\citep{rmia}:
\begin{equation}
    \label{eq:binary_calib}
    \mathrm{Score}_{\mathrm{Binary}}(q) := \Pr_{z \in Z} (\Delta(q, z) > \gamma).
\end{equation}
While effective, this binary voting mechanism has a limitation: it discretizes the pairwise evidence using a hard threshold $\gamma$.
Consequently, a population point that marginally exceeds the threshold contributes the same evidence as one that exceeds it substantially.
This binarization discards the \emph{magnitude} of $\Delta(q, z)$, which reflects the strength of the membership evidence.

To preserve this information, we consider the $p$-value as a \emph{continuous} measure of statistical significance.
We emphasize that it retains the strength information of each pairwise comparison compared with the binary voting.
Because under $H_1$ the relevant departure is directional ($\Delta(q,z) > 0$), we employ the \emph{upper-tail} test to compute the \emph{one-sided} $p$-value for each pairwise comparison.
Specifically, let $\widetilde{\Delta}$ be the random variable representing the pairwise difference between the statistics of two non-members.
Under $H_0$, both the query point and the population point are non-members, and $\widetilde{\Delta}$ is centered around zero. We approximate its null distribution by a centered Gaussian, which is reasonable since $\widetilde\Delta$ is the difference of two log-likelihood-ratio statistics~\citep{van2000asymptotic}.
For the observed difference $\Delta_{\mathrm{obs}}=\Delta(q,z)$, the one-sided $p$-value is computed as
\begin{equation}
\begin{aligned}
    \label{eq:p_value}
    p(q, z)  = \Pr(\widetilde{\Delta} \ge \Delta_{\mathrm{obs}} \mid H_0)
     \approx \Phi\left(-\frac{\Delta(q, z)}{\hat{\sigma}_\Delta}\right),
\end{aligned}
\end{equation}
where $\hat{\sigma}_\Delta$ is the estimated standard deviation of the pairwise differences under $H_0$, and $\Phi(\cdot)$ is the standard normal cumulative distribution function (CDF).
This transformation maps the raw difference $\Delta(q, z)$ to a probability scale $p(q, z) \in [0, 1]$, where smaller values indicate stronger membership evidence.
The binary score in \eqnref{eq:binary_calib} can be viewed as a coarse tail probability over population draws $z\in Z$, whereas the $p$-value in \eqref{eq:p_value} retains how strongly each pairwise comparison departs from the non-member null.
Thus, this continuous $p$-value serves as a smoothed surrogate for the binary indicator, preserving the evidence strength of each pairwise comparison.
We empirically evaluate the validity of these pairwise $p$-values in \secref{sec:exp_property}.

\paragraph*{Evidence Aggregation via Cauchy Combination}
For a given query point $q$ and a set of continuous $p$-values $\{p(q, z)\}_{z \in Z}$, we aggregate them into a membership score using the Cauchy Combination rule~\citep{liu2019acat,cct}:
\begin{equation}
    \label{eq:cauchy_cali}
    \mathrm{Score_{PL}}(q) = \frac{1}{|Z|} \sum_{z \in Z} \tan\big((0.5 - p(q, z)) \cdot \pi\big),
\end{equation}
where the Cauchy transformation $T(p) =\tan \big((0.5 - p) \cdot \pi \big)$ maps each $p$-value to $\mathbb{R}$ via the inverse CDF of the standard Cauchy distribution.
This transformation maps near-zero $p$-values, indicating strong membership evidence, to large positive values, while $p$-values near $0.5$, representing neutral evidence, are mapped to values near zero.
This aggregation is particularly effective when membership evidence is concentrated in a subset of pairwise comparisons: strong evidence from a small number of comparisons can dominate the aggregated score, thereby reducing dilution from numerous weak or uninformative comparisons.

\subsection{Why PL-MIA Is More Powerful}
We argue that \plmia achieves stronger attack power by combining a more informative pointwise statistic with effective population calibration and practical evidence aggregation.

\paragraph*{Statistical Efficiency of GLR}
The GLR provides a more informative pointwise statistic by exploiting both the IN and OUT distributions.
Under Assumption~\ref{ass:gaussian_model}, GLR jointly captures two sources of membership signal: the \emph{normalized mean shift} $\eta$ and the \emph{variance-contraction ratio} $\rho$.
These signals reflect the tendency of overfitting to increase confidence in training members while reducing their predictive uncertainty relative to non-members.
In contrast, BLR~\citep{rmia} primarily relies on the normalized mean shift and does not explicitly exploit the variance-contraction signal.
Our theoretical analysis (Lemma~\ref{lem:logbf} and Lemma~\ref{lem:rmia}) shows that GLR provides stronger statistical separation than BLR when variance contraction or nuisance variation is present.
This stronger pointwise separation produces more informative pairwise evidence for the subsequent population calibration.

\paragraph*{Population Calibration for Neutralizing Intrinsic Heterogeneity}
A strong pointwise statistic alone is insufficient because model outputs also reflect intrinsic query-level heterogeneity unrelated to membership~\citep{watson2021importance}.
PL-MIA addresses this limitation through population calibration, which compares the statistic of the query point with that of the population points and transforms the raw difference into relative membership evidence.
This calibration factors out query-level heterogeneity, allowing the final membership score to focus on the signal attributable to membership status.
Our theoretical analysis (Lemma~\ref{lem:score}) formalizes this effect by showing that population calibration centers non-member scores around their corresponding neutral baselines.

\paragraph*{Practical Advantages}
Beyond statistical improvements, \plmia provides practical advantages for privacy auditing.
By converting pairwise differences into continuous $p$-values and aggregating them through the Cauchy combination rule, \plmia preserves the strength of individual comparisons.
This design improves sensitivity to informative membership evidence while avoiding the pairwise comparison threshold $\gamma$ required by binary voting.
As shown in \secref{sec:exp_abalation}, PL-MIA achieves the strongest performance without tuning $\gamma$.
Moreover, under the idealized Cauchy null, the Cauchy-combined score $\mathrm{Score_{PL}}(q)$ provides an analytic decision threshold at a target FPR $\alpha$: $\tau_\alpha = \tan(\pi(0.5 - \alpha))$, which eliminates the need for additional held-out data.
As shown in \secref{sec:exp_property}, when the exact-Cauchy null conditions are only approximately satisfied, the analytic threshold provides conservative finite-sample FPR control, with the realized FPR remaining below the nominal level.

\section{Theoretical Results}
\label{sec:theoretical}

In this section, we establish the theoretical advantages of \plmia over \lira and \rmia through a unified analysis of attack power.
We first characterize the GLR and BLR statistics, then examine how population calibration centers non-member scores, and finally compare their attack power.
All results are derived under Assumption~\ref{ass:gaussian_model}, with proofs deferred to Section~S3 of the Supplementary Material.

\subsection{Pointwise Statistic Distributions}
\label{point-sta-dist}

We begin by deriving the GLR and BLR statistics and comparing their statistical efficiency.
We assume a global variance \contract $\rho$, while modeling the normalized mean shift $\eta^{(d)}$ and variation coefficients $\xi^{(d)}$ as random variables with variances $\sigma_\eta^2$ and $\sigma_\xi^2$ to capture query-level heterogeneity.
We quantify statistical efficiency by the \textit{mean separation}, defined as $\Delta\mu^{(d)}=\mathbb{E}[\varphi^{(d)}\mid H_1]-\mathbb{E}[\varphi^{(d)}\mid H_0]$. A larger mean separation indicates greater pointwise distinguishability.

\begin{lemma}[Gaussian LR]
\label{lem:logbf}
Let $W \sim \mathcal{N}(0,1)$. The Gaussian LR statistic takes the form
\begin{align}
    \varphi_{\mathrm{GLR}}^{(d)} \mid H_0
    &= -\log\rho + \frac{W^2}{2} - \frac{(W-\eta^{(d)})^2}{2\rho^2}, \\
    \varphi_{\mathrm{GLR}}^{(d)} \mid H_1
    &= -\log\rho + \frac{(\eta^{(d)}+\rho W)^2}{2} - \frac{W^2}{2}.
\end{align}
Its mean separation is $\Delta\mu_{\mathrm{GLR}}^{(d)} = \frac{(\rho^2-1)^2 + (\eta^{(d)})^2(1+\rho^2)}{2\rho^2}$.
When $\rho < 1$, $\Delta\mu_{\mathrm{GLR}}^{(d)}$ grows as $O(1/\rho^2)$, allowing GLR to capture the signal induced by variance contraction.
\end{lemma}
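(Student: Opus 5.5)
The plan is a direct computation: write the Gaussian log-likelihood ratio in closed form, standardize the observed confidence under each hypothesis, and take expectations. We condition throughout on the query-level quantities $(\mu_{\mathrm{out}}^{(d)},\sigma_{\mathrm{out}}^{(d)},\eta^{(d)})$, treating $\rho$ as the global contraction ratio. Under this conditioning, Assumption~\ref{ass:gaussian_model} applies with a fixed mean and variance for each hypothesis.

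\emph{Step 1 (closed form).} For an observation $x=\mathrm{conf}^{(d)}_t$, expanding the two Gaussian log-densities gives
\begin{equation*}
\varphi_{\mathrm{GLR}}^{(d)} = \log\frac{\sigma_{\mathrm{out}}^{(d)}}{\sigma_{\mathrm{in}}^{(d)}} + \frac{1}{2}\Bigl(\frac{x-\mu_{\mathrm{out}}^{(d)}}{\sigma_{\mathrm{out}}^{(d)}}\Bigr)^2 - \frac{1}{2}\Bigl(\frac{x-\mu_{\mathrm{in}}^{(d)}}{\sigma_{\mathrm{in}}^{(d)}}\Bigr)^2 .
\end{equation*}
The first term equals $-\log\rho$.

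\emph{Step 2 (standardize under each hypothesis).}
\begin{itemize}
\item Under $H_0$, write $x=\mu_{\mathrm{out}}^{(d)}+\sigma_{\mathrm{out}}^{(d)}W$. The first standardized residual is then $W$. The second is $(\sigma_{\mathrm{out}}^{(d)}W-\eta^{(d)}\sigma_{\mathrm{out}}^{(d)})/(\rho\,\sigma_{\mathrm{out}}^{(d)})=(W-\eta^{(d)})/\rho$, using $\mu_{\mathrm{in}}^{(d)}-\mu_{\mathrm{out}}^{(d)}=\eta^{(d)}\sigma_{\mathrm{out}}^{(d)}$ and $\sigma_{\mathrm{in}}^{(d)}=\rho\sigma_{\mathrm{out}}^{(d)}$.
\item Under $H_1$, write $x=\mu_{\mathrm{in}}^{(d)}+\sigma_{\mathrm{in}}^{(d)}W$. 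The first residual becomes $\eta^{(d)}+\rho W$ and the second becomes $W$.
\end{itemize}
Substituting these residuals yields the two displayed representations exactly.

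\emph{Step 3 (means).} Use $\mathbb{E}W=0$ and $\mathbb{E}W^2=1$, conditionally on $\eta^{(d)}$. This gives
\begin{equation*}
\mathbb{E}[\varphi\mid H_0]=-\log\rho+\tfrac12-\frac{1+(\eta^{(d)})^2}{2\rho^2},
\qquad
\mathbb{E}[\varphi\mid H_1]=-\log\rho+\frac{(\eta^{(d)})^2+\rho^2}{2}-\tfrac12 .
\end{equation*}
Subtracting, the $-\log\rho$ terms cancel. Putting the difference over the common denominator $2\rho^2$ gives the numerator $\rho^2(\eta^{(d)})^2+\rho^4-2\rho^2+1+(\eta^{(d)})^2$. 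This regroups as $(\rho^2-1)^2+(\eta^{(d)})^2(1+\rho^2)$, which is the claimed $\Delta\mu_{\mathrm{GLR}}^{(d)}$.

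\emph{Step 4 (growth in $\rho$).} For $\rho\in(0,1)$, the numerator lies between $(\eta^{(d)})^2$ and $1+2(\eta^{(d)})^2$, and it tends to $1+(\eta^{(d)})^2$ as $\rho\to0$. Hence $\Delta\mu_{\mathrm{GLR}}^{(d)}=\Theta(1/\rho^2)$, which gives the stated $O(1/\rho^2)$ behaviour. In particular, the term $(\rho^2-1)^2/(2\rho^2)$ is strictly positive even when $\eta^{(d)}=0$; this is the contribution of pure variance contraction.

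The only delicate point is interpretive rather than computational. The lemma models $\eta^{(d)}$ as random across queries, so the displayed identities must be read conditionally on $\eta^{(d)}$, with $W$ independent of it. If unconditional means are wanted, replace $(\eta^{(d)})^2$ by $\mathbb{E}[(\eta^{(d)})^2]$. Everything else is algebra.
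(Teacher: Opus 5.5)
Your proposal is correct and follows the paper's proof essentially step for step. The paper likewise writes the GLR in closed form, substitutes $\mathrm{conf}^{(d)}=\mu_{\mathrm{out}}^{(d)}+\sigma_{\mathrm{out}}^{(d)}W$ under $H_0$ and $\mathrm{conf}^{(d)}=\mu_{\mathrm{out}}^{(d)}+\sigma_{\mathrm{out}}^{(d)}(\eta^{(d)}+\rho W)$ under $H_1$, takes expectations using $\mathbb{E}W=0$ and $\mathbb{E}W^2=1$, and subtracts; your growth argument (numerator in $\bigl((\eta^{(d)})^2,\,1+2(\eta^{(d)})^2\bigr)$ with limit $1+(\eta^{(d)})^2$ as $\rho\to 0$) is a slightly sharper version of the paper's remark that the separation exceeds $(\eta^{(d)})^2$ and grows like $1/\rho^2$.
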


\begin{lemma}[Bayes LR]
\label{lem:rmia}
Let $W \sim \mathcal{N}(0,1)$. Assuming equal numbers of IN and OUT reference models, a first-order Taylor expansion of the Bayes LR statistic gives
\begin{align}
\varphi_{\mathrm{BLR}}^{(d)} \mid H_1
&\approx \frac{\eta^{(d)}\xi^{(d)}}{2} + \rho\xi^{(d)}W, \\
\varphi_{\mathrm{BLR}}^{(d)} \mid H_0
&\approx -\frac{\eta^{(d)}\xi^{(d)}}{2} + \xi^{(d)}W.
\end{align}
Its first-order mean separation is $\Delta\mu_{\mathrm{BLR}}^{(d)} = \eta^{(d)}\xi^{(d)}$, which is independent of $\rho$.
Thus, BLR does not retain variance contraction as a mean-separation signal.
\end{lemma}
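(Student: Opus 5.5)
The plan is to linearize everything around the single anchor $\mu_{\mathrm{out}}^{(d)}$ and to express every quantity through the standardized variable $W$. Write $s:=\operatorname{sigmoid}(\mu_{\mathrm{out}}^{(d)})$ and $\sigma:=\sigma_{\mathrm{out}}^{(d)}$, so that $\xi^{(d)}=(1-s)\sigma$. By Assumption~\ref{ass:gaussian_model} and the definitions of $\eta^{(d)}$ and $\rho$, the target's logit confidence has two representations:
\[
\mathrm{conf}^{(d)}_t=\mu_{\mathrm{out}}^{(d)}+\sigma W \quad\text{under } H_0,
\qquad
\mathrm{conf}^{(d)}_t=\mu_{\mathrm{out}}^{(d)}+\eta^{(d)}\sigma+\rho\sigma W \quad\text{under } H_1.
\]
The raw TLC is $\operatorname{sigmoid}(\mathrm{conf})$. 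The statistic then splits as $\varphi_{\mathrm{BLR}}^{(d)}=\log f_{\theta_t}(x_d)[y_d]-\log\bar c^{(d)}$, and I will treat the two pieces separately.

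\textbf{Numerator.} Use $\frac{d}{dc}\log\operatorname{sigmoid}(c)=1-\operatorname{sigmoid}(c)$. A first-order expansion about $\mu_{\mathrm{out}}^{(d)}$ gives
\[
\log\operatorname{sigmoid}(\mathrm{conf})\approx \log s+(1-s)(\mathrm{conf}-\mu_{\mathrm{out}}^{(d)}).
\]
Substituting the two representations yields $\log s+\xi^{(d)}W$ under $H_0$ and $\log s+\eta^{(d)}\xi^{(d)}+\rho\xi^{(d)}W$ under $H_1$.

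\textbf{Denominator.} With equal numbers of IN and OUT reference models, $\bar c^{(d)}$ is, in the population limit,
\[
\bar c^{(d)}=\tfrac12\bigl(\mathbb{E}\operatorname{sigmoid}(\mathrm{conf}^{(d)}_{\mathrm{IN}})+\mathbb{E}\operatorname{sigmoid}(\mathrm{conf}^{(d)}_{\mathrm{OUT}})\bigr).
\]
Expand $\operatorname{sigmoid}$ to first order, using $\operatorname{sigmoid}'=s(1-s)$, and take expectations so that the $W$-terms vanish. This gives $\mathbb{E}\operatorname{sigmoid}(\mathrm{conf}_{\mathrm{OUT}})\approx s$ and $\mathbb{E}\operatorname{sigmoid}(\mathrm{conf}_{\mathrm{IN}})\approx s+s(1-s)\eta^{(d)}\sigma=s(1+\eta^{(d)}\xi^{(d)})$. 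Hence $\bar c^{(d)}\approx s(1+\eta^{(d)}\xi^{(d)}/2)$, and linearizing the logarithm gives $\log\bar c^{(d)}\approx\log s+\eta^{(d)}\xi^{(d)}/2$. Note that $\bar c^{(d)}$ depends only on the reference models, so it is the same under $H_0$ and $H_1$.

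\textbf{Combining.} Subtracting, the $\log s$ terms cancel. This gives the two displayed expressions: $-\eta^{(d)}\xi^{(d)}/2+\xi^{(d)}W$ under $H_0$, and $\eta^{(d)}\xi^{(d)}/2+\rho\xi^{(d)}W$ under $H_1$. Since $\mathbb{E}W=0$, the mean separation is $\eta^{(d)}\xi^{(d)}$. The factor $\rho$ multiplies only the zero-mean fluctuation, so it drops out of $\Delta\mu_{\mathrm{BLR}}^{(d)}$, which is the concluding remark of the lemma.

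\textbf{Main obstacle.} The delicate step is justifying that the expansions are consistently first order. The terms dropped are $O\bigl((\sigma+\eta^{(d)}\sigma)^2\bigr)$, and they include the curvature contributions to $\mathbb{E}\operatorname{sigmoid}$, which would reintroduce $\rho^2\sigma^2$ at second order. I would make explicit that the lemma is a leading-order statement in small $\sigma_{\mathrm{out}}^{(d)}$, equivalently small $\xi^{(d)}$. Under that regime, $\rho$ affects the mean only through neglected second-order terms, which is precisely what the lemma asserts about first-order separation.
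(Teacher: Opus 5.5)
Your proposal is correct and follows the paper's proof essentially step for step. Both use the same first-order expansion of $\log\operatorname{sigmoid}$ about $\mu_{\mathrm{out}}^{(d)}$ for the numerator, the same balanced IN/OUT expansion giving $\bar c^{(d)}\approx s(1+\eta^{(d)}\xi^{(d)}/2)$ and $\log\bar c^{(d)}\approx\log s+\eta^{(d)}\xi^{(d)}/2$, and the same subtraction.

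Your leading-order caveat is a worthwhile addition that the paper leaves implicit, but it needs two small corrections. First, $\bar c^{(d)}$ is the same under $H_0$ and $H_1$, so its curvature terms cancel in the separation. The $\rho$-dependence of $\Delta\mu_{\mathrm{BLR}}^{(d)}$ at second order comes instead from the numerator's term $-\tfrac12 s(1-s)\sigma^2\rho^2$ under $H_1$. Second, the controlling small parameter is $\sigma_{\mathrm{out}}^{(d)}$ itself, not $\xi^{(d)}$. Small $\xi^{(d)}$ can arise merely because $s$ is near one, and that does not make the expansion accurate.
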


Together, Lemmas~\ref{lem:logbf} and \ref{lem:rmia} identify a key pointwise distinction: GLR retains a variance-contraction signal that is absent from the first-order BLR mean separation. We next study how these statistics behave in pairwise comparisons.

\subsection{Pairwise Difference and Population Calibration}

Population calibration converts pointwise statistics into a membership score by comparing a query point $q$ with population points $z$. To isolate the effect of calibration under query-level heterogeneity ($\sigma_\eta>0$), we set $\rho=1$, thereby removing variance contraction from this comparison.

\begin{lemma}[Pairwise Difference Distribution]
\label{lem:pairwise}
Under $\sigma_\eta>0$ and $\rho=1$, the GLR- and BLR-induced pairwise differences for a member query satisfy
\begin{align}
    & \mathrm{For \ GLR:} \quad \varphi_{\mathrm{GLR}}^{(q)} - \varphi_{\mathrm{GLR}}^{(z)} \sim
    \mathcal{N}\biggl(\frac{(\eta^{(q)})^2 + (\eta^{(z)})^2}{2}, (\eta^{(q)})^2 + (\eta^{(z)})^2\biggr). \\
    & \mathrm{For \ BLR:} \quad \varphi_{\mathrm{BLR}}^{(q)} - \varphi_{\mathrm{BLR}}^{(z)} \sim
    \mathcal{N}\biggl(\frac{\eta^{(q)}\xi^{(q)} + \eta^{(z)}\xi^{(z)}}{2}, (\xi^{(q)})^2 + (\xi^{(z)})^2\biggr).
\end{align}
The corresponding non-member distributions are presented in Section~S3.3 of the Supplementary Material.
\end{lemma}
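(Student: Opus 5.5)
The plan is to work conditionally on the query-level parameters $(\eta^{(q)},\xi^{(q)})$ and $(\eta^{(z)},\xi^{(z)})$. Given these, the pairwise difference is a difference of two Gaussian variables. I would first specialize Lemma~\ref{lem:logbf} and Lemma~\ref{lem:rmia} to $\rho=1$, then use the membership status in the statement: $q$ is a member, so it is evaluated under $H_1$, and $z$ is a population point not used in training, so it is evaluated under $H_0$. The one structural input I need is that the standardized noises $W_q$ and $W_z$ driving the two statistics are independent standard normals. I would state this as a modelling convention: the target model's fluctuations at distinct points are treated as independent draws. The heterogeneity variables are held fixed, which is why $\sigma_\eta>0$ enters only through the appearance of point-specific $\eta^{(q)},\eta^{(z)}$.

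\emph{GLR step.} Put $\rho=1$ in Lemma~\ref{lem:logbf}. The $-\log\rho$ term vanishes, and the quadratic terms in $W$ cancel exactly:
\begin{align}
\varphi_{\mathrm{GLR}}^{(q)}\mid H_1 &= \tfrac{(\eta^{(q)}+W_q)^2}{2}-\tfrac{W_q^2}{2} = \eta^{(q)}W_q+\tfrac{(\eta^{(q)})^2}{2},\\
\varphi_{\mathrm{GLR}}^{(z)}\mid H_0 &= \tfrac{W_z^2}{2}-\tfrac{(W_z-\eta^{(z)})^2}{2} = \eta^{(z)}W_z-\tfrac{(\eta^{(z)})^2}{2}.
\end{align}
Each statistic is therefore exactly Gaussian: the first is $\mathcal{N}\big((\eta^{(q)})^2/2,(\eta^{(q)})^2\big)$ and the second is $\mathcal{N}\big(-(\eta^{(z)})^2/2,(\eta^{(z)})^2\big)$. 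Subtracting the second from the first, with $W_q$ independent of $W_z$, gives mean $\big((\eta^{(q)})^2+(\eta^{(z)})^2\big)/2$ and variance $(\eta^{(q)})^2+(\eta^{(z)})^2$, which is the claimed GLR law.

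\emph{BLR step.} Put $\rho=1$ in the first-order expansion of Lemma~\ref{lem:rmia}. This gives
\[
\varphi_{\mathrm{BLR}}^{(q)}\approx \tfrac{\eta^{(q)}\xi^{(q)}}{2}+\xi^{(q)}W_q, \qquad \varphi_{\mathrm{BLR}}^{(z)}\approx -\tfrac{\eta^{(z)}\xi^{(z)}}{2}+\xi^{(z)}W_z .
\]
Taking the difference, again with independent noises, gives mean $\big(\eta^{(q)}\xi^{(q)}+\eta^{(z)}\xi^{(z)}\big)/2$ and variance $(\xi^{(q)})^2+(\xi^{(z)})^2$. The BLR statement therefore holds in the same first-order sense as Lemma~\ref{lem:rmia}, and I would say explicitly that the ``$\sim$'' is to be read to that order. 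The non-member case follows the same way, using $H_0$ for both points.

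\emph{Main obstacle.} The difficulty is not the algebra but justifying the independence of $W_q$ and $W_z$. Both statistics are computed from the same target model $\theta_t$, so their noises could in principle be correlated. I would handle this by adopting independence across distinct data points as an explicit modelling assumption, consistent with the per-point Gaussian model of Assumption~\ref{ass:gaussian_model}. I would also remark that a nonzero correlation would change only the variance term, not the means, so the centering conclusions used later are unaffected.
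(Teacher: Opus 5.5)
Your proposal is correct and follows essentially the same route as the paper. Condition on the point-level parameters and specialize Lemma~\ref{lem:logbf} and Lemma~\ref{lem:rmia} at $\rho=1$, with the member $q$ under $H_1$ and the non-member $z$ under $H_0$. Then add means and variances using independent $W_q, W_z$. Two points the paper's proof leaves implicit are things you state explicitly: independence of $W_q$ and $W_z$ is a modelling assumption, since both statistics come from the same $\theta_t$, and the BLR law holds only to first order.
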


Population calibration aggregates these pairwise differences into a final membership score. We consider the binary-voting score in \eqref{eq:binary_calib} and the Cauchy-combined \plmia score in \eqref{eq:cauchy_cali}.
For binary voting, we consider the neutral threshold $\gamma=0$ that asks whether the query point wins against a population point.

\begin{lemma}[Membership Scores under Population Calibration]
\label{lem:score}
Let $Q$ be a random non-member query point. The two population-calibrated membership scores have the following neutral baselines:
\begin{align}
    & \mathrm{Binary\ voting:} \quad
    \mathbb{E}\left[\mathrm{Score}_{\mathrm{Binary}}(Q)\mid H_0\right]
    =0.5, \label{eq:binary_null_centering} \\
    & \mathrm{Cauchy\ combination:} \quad
    \mathrm{Median}\left(\mathrm{Score}_{\mathrm{PL}}(Q)\mid H_0\right)
    \approx 0. \label{eq:cauchy_null_centering}
\end{align}
Under binary voting, exchangeability gives a non-member query an average win probability of $0.5$.
Under Cauchy combination, null pairwise $p$-values are mapped to median-zero Cauchy evidence, giving the aggregated score median zero.
We use the median because Cauchy evidence has no finite mean.
\end{lemma}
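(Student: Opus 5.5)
The plan is to treat the two claims separately. Each reduces to a symmetry property of the null pairwise difference $\widetilde{\Delta}=\varphi^{(Q)}_{\mathrm{PL}}-\varphi^{(Z)}_{\mathrm{PL}}$ when both $Q$ and $Z$ are non-members drawn from $\pi$.

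\emph{Binary voting.} Under $H_0$, both the query $Q$ and the population point $z$ are i.i.d.\ draws from $\pi$ excluded from the training set. The target model's treatment of them is also symmetric: neither is in $\mathcal{D}$, and the reference-model fits are produced by the same procedure. Hence the pair $(\varphi^{(Q)},\varphi^{(z)})$ is exchangeable, and by linearity
\begin{equation*}
\mathbb{E}[\mathrm{Score}_{\mathrm{Binary}}(Q)\mid H_0]=\Pr(\varphi^{(Q)}>\varphi^{(z)}).
\end{equation*}
Exchangeability gives $\Pr(\varphi^{(Q)}>\varphi^{(z)})=\Pr(\varphi^{(z)}>\varphi^{(Q)})$. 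Provided ties have probability zero (true because, under Assumption~\ref{ass:gaussian_model} and Lemma~\ref{lem:logbf}, the statistics have continuous laws), these two probabilities sum to one, so each equals $0.5$. I would also remark that this argument uses the neutral threshold $\gamma=0$ and requires no Gaussian form.

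\emph{Cauchy combination.} The same exchangeability makes $\widetilde{\Delta}$ symmetric about zero. It follows that each null $p$-value $p(Q,z)=\Phi(-\Delta/\hat\sigma_\Delta)$ has a distribution symmetric about $1/2$, for any fixed positive scale $\hat\sigma_\Delta$. The map $T(p)=\tan((0.5-p)\pi)$ is odd about $p=1/2$, so every term $T(p(Q,z))$ is symmetric about zero. If moreover $\hat\sigma_\Delta$ equals the true null standard deviation and $\widetilde\Delta$ is Gaussian (the approximation justified after \eqref{eq:p_value}), then $p(Q,z)\sim\mathrm{Unif}(0,1)$ and each term is exactly standard Cauchy.

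Next comes the aggregation step. The summands $T(p(Q,z))$, $z\in Z$, are jointly symmetric, since simultaneously flipping all signs corresponds to swapping the roles of members versus non-members in the differences. I would argue this by noting that, conditional on $Q$, the vector of differences is not symmetric; only the marginal over $Q$ is. Therefore the cleanest route is to show that the joint law of $(\widetilde{\Delta}(Q,z))_{z\in Z}$ is invariant under negation. This is only approximate, which is why the statement says $\approx$. Under the Gaussian approximation, the vector is jointly Gaussian with mean zero, hence exactly sign-symmetric. An average of a sign-symmetric vector is symmetric about zero and so has median zero. Alternatively, I would invoke the \citet{cct} result that an average of standard Cauchy variables under arbitrary bivariate-normal dependence is approximately standard Cauchy, whose median is zero.

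The main obstacle is this last step. The comparisons share the common query $Q$, so the summands are dependent and not exactly Cauchy. I would handle it by leaning on the joint-Gaussian approximation of the differences, which makes the central-symmetry argument exact. I would then explicitly attribute the ``$\approx$'' to the Gaussian null approximation and to the estimation of $\hat\sigma_\Delta$. Finally, I would note that the median is used because the Cauchy terms lack a finite mean.
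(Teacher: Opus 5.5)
Your binary-voting argument is the paper's: exchangeability of the two non-member statistics, continuity to rule out ties, and linearity over $z\in Z$.

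For the Cauchy part you take a genuinely different route. The paper posits exactly $\mathrm{Uniform}(0,1)$ null $p$-values, so each $T(p(Q,z))$ is standard Cauchy. It then uses Cauchy stability to obtain an exactly standard Cauchy, median-zero average under independence, and attributes the ``$\approx$'' to the dependence created by the shared query. You instead use the fact that $\Delta\mapsto T(\Phi(-\Delta/\hat\sigma_\Delta))$ is odd. Hence the average is symmetric about zero whenever the vector $(\widetilde\Delta(Q,z))_{z\in Z}$ is sign-symmetric. Your route has three advantages:
\begin{itemize}
\item The median claim becomes exact for any dependence among the comparisons, provided sign-symmetry holds. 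It needs neither uniform $p$-values nor a correctly calibrated scale.
\item Reflecting i.i.d.\ null statistics about their common center negates every difference while leaving $\mathrm{std}(\{\varphi^{(z)}_{\mathrm{PL}}\})$ unchanged. So the argument also covers the data-driven $\hat\sigma_\Delta$ of Algorithm~\ref{alg:plmia}.
\item It locates the approximation more precisely than the paper does. A nonzero median requires the shared query \emph{together with} skewness of the marginal null law of $\varphi_{\mathrm{PL}}$. An example of such skewness is the $\eta$-mixture of the $\rho=1$ null laws $\mathcal{N}(-(\eta^{(d)})^2/2,(\eta^{(d)})^2)$ from Lemma~\ref{lem:logbf} when $\sigma_\eta^2>0$. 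Independent symmetric terms, or a shared query with symmetric statistics, each give median exactly zero.
\end{itemize}
In return, the paper's route supplies the (heuristic) standard Cauchy shape of the whole score. The analytic threshold $\tau_\alpha$ needs that shape, not just the median.

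Three things to tighten. First, drop the sentence deriving joint symmetry from ``swapping the roles.'' Exchanging $Q$ with one $z$ negates only that difference and changes the others. Indeed, for centered i.i.d.\ statistics $X,Y_1,Y_2$ one has $\mathbb{E}[(X-Y_1)^2(X-Y_2)]=\mathbb{E}[X^3]$, which flips sign under negation. So exchangeability alone does not give joint sign-symmetry when $|Z|\geq 2$. Second, the Gaussian approximation you need is joint: i.i.d.\ Gaussian, or merely symmetric, null pointwise statistics. This is stronger than the marginal Gaussian approximation of $\widetilde\Delta$ stated after \eqref{eq:p_value}. Third, the \citet{cct} result is a tail-equivalence statement under bivariate-normal dependence and says nothing about the median. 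Your symmetry argument, not that citation, is what carries the aggregation step.
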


\subsection{Attack-Power Expressions and Advantage Analysis}

We use attack power~\citep{zhu2025impact} to compare attacks, defined as the TPR at a fixed FPR.
To separate the contributions of pointwise statistics and evidence aggregation, let $\mathrm{PL\text{-}MIA}^{+}$ denote the intermediate attack that uses the GLR pointwise statistic and binary-voting population calibration.
\plmia is obtained from $\mathrm{PL\text{-}MIA}^{+}$ by replacing binary voting with the Cauchy combination.

For \lira, \rmia, and \plmiap, their Gaussian score distributions yield a common CDF-based expression for attack power.
For \plmia, the heavy-tailed Cauchy-combined score instead yields a low-FPR characterization in terms of exceptionally small pairwise $p$-values.

\begin{theorem}[Attack-Power Expressions for the Compared Attacks]
\label{thm:unified_tpr}
Under $\rho=1$, at a fixed FPR $\alpha$, the attack power for \lira, \rmia, and $\mathrm{PL\text{-}MIA}^{+}$ has the unified form
\begin{align}
\mathrm{TPR}_{A}(\alpha)
&=\Phi\left(\kappa_A-\Phi^{-1}(1-\alpha)\right), \quad A\in\{\mathrm{LiRA},\mathrm{RMIA},\mathrm{PL\text{-}MIA}^{+}\}. \label{eq:tpr_gaussian_scores}
\end{align}
Here $\Phi(\cdot)$ and $\Phi^{-1}(\cdot)$ denote the standard Gaussian CDF and its inverse. The parameter $\kappa_A$ measures the standardized separation between the member and non-member score distributions for attack $A$, so a larger $\kappa_A$ yields higher attack power.
For \plmia, define
\begin{align}
u_{\alpha,|Z|}:= \frac{1}{2} -\frac{1}{\pi}\arctan\left(|Z|\tau_\alpha\right) \approx \frac{\alpha}{|Z|}, \quad
r_\alpha := \Pr\left[p(Q,Z)\leq u_{\alpha,|Z|}\mid H_1\right],
\label{eq:small_p_probability}
\end{align}
where $\tau_\alpha = \tan\left(\pi(0.5-\alpha)\right)$ is the analytic decision threshold.
Here, $u_{\alpha,|Z|}$ is the pairwise $p$-value level corresponding to the analytic threshold, and $r_\alpha$ is the probability that a member query reaches this level in one comparison. Aggregating over the $|Z|$ comparisons gives
\begin{align}
\mathrm{TPR}_{\mathrm{PL\text{-}MIA}}(\alpha)
&\approx
1-(1-r_\alpha)^{|Z|}.
\label{eq:tpr_cauchy_plmia}
\end{align}
This expression captures the probability that, among $|Z|$ pairwise comparisons, at least one comparison produces strong membership evidence.
Lemma~\ref{lem:pairwise} explains why this event is more likely for member queries: larger pairwise differences translate into smaller $p$-values.
\end{theorem}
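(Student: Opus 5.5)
The proof has two parts. The first is the Gaussian-score form for LiRA, RMIA and $\mathrm{PL\text{-}MIA}^{+}$. The second is the low-FPR approximation for \plmia.

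\textbf{Gaussian scores.} Suppose an attack's score satisfies $S\mid H_0\sim\mathcal N(m_0,s_0^2)$ and $S\mid H_1\sim\mathcal N(m_1,s_1^2)$. Setting the threshold to $\tau_\alpha=m_0+s_0\Phi^{-1}(1-\alpha)$ fixes the FPR at $\alpha$. Then
\[
\mathrm{TPR}=\Phi\left(\frac{m_1-m_0}{s_1}-\frac{s_0}{s_1}\Phi^{-1}(1-\alpha)\right).
\]
This reduces to the stated form with $\kappa_A=(m_1-m_0)/s$ whenever $s_0=s_1=s$. So for each attack I need to verify equal variances, at least to the order of approximation used, and read off $\kappa_A$.

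For \lira with $\rho=1$, Lemma~\ref{lem:logbf} makes the GLR linear in $W$:
\[
\varphi\mid H_0=\eta W-\eta^2/2,\qquad \varphi\mid H_1=\eta W+\eta^2/2 .
\]
Both have variance $\eta^2$, so $\kappa_{\mathrm{LiRA}}$ is the mean gap divided by the total standard deviation. That standard deviation must include the heterogeneity in $\eta^{(q)}$, because a global threshold is used; I would approximate this by moment matching to a Gaussian.

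For \rmia, the first-order expansion in Lemma~\ref{lem:rmia} with $\rho=1$ gives equal conditional variances $\xi^2$. The score is a vote fraction, and I would treat it as Gaussian via the CLT over $|Z|$, or equivalently use the Gaussian pairwise differences of Lemma~\ref{lem:pairwise} together with the null centering at $0.5$ from Lemma~\ref{lem:score}. By the delta method its mean is $\Phi(\text{mean}/\text{sd})$ of the pairwise difference, with comparable variances under $H_0$ and $H_1$. $\mathrm{PL\text{-}MIA}^{+}$ is handled the same way, using the GLR pairwise law from Lemma~\ref{lem:pairwise}.

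\textbf{Main obstacle.} Justifying equal null and alternative variances for the binary-voting scores is the hardest step. I expect to state the result in the local regime of small mean shift, or to absorb the ratio $s_0/s_1$ into $\kappa_A$ as a first-order approximation.

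\textbf{\plmia.} Write the score as $S=|Z|^{-1}\sum_z T(p_z)$ with $T(p)=\tan((0.5-p)\pi)$. Under the idealized null, $S$ is standard Cauchy by the stability of the Cauchy law. Hence $\Pr(S>\tau_\alpha)=\alpha$ with $\tau_\alpha=\tan(\pi(0.5-\alpha))$.

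For the power, I would use the heavy-tail principle that a sum of heavy-tailed terms exceeds a large level essentially when its largest term does. The event $S>\tau_\alpha$ is then approximately
\[
\max_z T(p_z)>|Z|\tau_\alpha \iff \min_z p_z\le u_{\alpha,|Z|}.
\]
Inverting $T$ gives $u_{\alpha,|Z|}=\tfrac12-\pi^{-1}\arctan(|Z|\tau_\alpha)$. Since $\arctan x\approx\pi/2-1/x$ and $\tau_\alpha\approx 1/(\pi\alpha)$, this yields $u_{\alpha,|Z|}\approx\alpha/|Z|$.

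Finally, treat the $|Z|$ comparisons as approximately independent given the query. Each comparison falls below $u_{\alpha,|Z|}$ with probability $r_\alpha$, so
\[
\Pr\left(\min_z p_z\le u_{\alpha,|Z|}\right)\approx 1-(1-r_\alpha)^{|Z|}.
\]
This ignores the shared $\varphi^{(q)}$ across comparisons and the contributions of the other terms in the sum, which are negligible at small $\alpha$.
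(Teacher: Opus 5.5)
Your LiRA computation is fine. Your PL-MIA half follows the paper's argument: one dominant transformed term, inverting $T$ to get $u_{\alpha,|Z|}$, and a product over the $|Z|$ comparisons. Your check that $\tau_\alpha=\cot(\pi\alpha)\approx 1/(\pi\alpha)$, which gives $u_{\alpha,|Z|}\approx\alpha/|Z|$, is a useful addition.

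The gap is in the Gaussian half for RMIA and $\mathrm{PL\text{-}MIA}^{+}$. You plan to model the binary-voting score itself as Gaussian (CLT over $|Z|$, delta method) and then verify equal null and member variances. That step does not just resist proof; it is false. With $\gamma=0$, conditional on the query the votes are i.i.d.\ Bernoulli with success probability $F_{0,S}(S(q))$, where $F_{0,S}$ is the null CDF of the pointwise statistic. The CLT over $|Z|$ therefore only describes a vanishing fluctuation around $F_{0,S}(S(q))$. The spread across queries is the spread of this null percentile. By the probability-integral transform it is $\mathrm{Uniform}(0,1)$ under $H_0$. Under $H_1$ (for example GLR at $\rho=1$ with fixed $\eta$) it equals $\Phi(W+\eta)$, whose variance is not $1/12$. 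So neither Gaussianity nor equal variances hold. A Gaussian fit to this bounded score also fails exactly where the theorem is aimed: with mean $1/2$ and variance $1/12$, the fitted null threshold at $\alpha=10^{-4}$ is about $1.57>1$. Your fallback of absorbing $s_0/s_1$ into $\kappa_A$ does not give the stated form either. That ratio multiplies $\Phi^{-1}(1-\alpha)$, so it would make $\kappa_A$ depend on $\alpha$.

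The missing idea is that you never need the distribution of the calibrated score. Since $F_{0,S}$ is continuous and strictly increasing, $\{F_{0,S}(S(Q))>1-\alpha\}=\{S(Q)>F_{0,S}^{-1}(1-\alpha)\}$. Binary calibration, in the population limit, is thus a monotone null-CDF transform. The size-$\alpha$ rejection region, and hence the TPR, is the same as for globally thresholding the pointwise statistic: BLR for RMIA, GLR for $\mathrm{PL\text{-}MIA}^{+}$. Your own Gaussian computation then applies directly to those statistics. At $\rho=1$ they are equal-variance Gaussians by Lemmas~\ref{lem:logbf} and~\ref{lem:rmia}, with common variances $(\eta^{(d)})^2$ and $(\xi^{(d)})^2$. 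Query-level heterogeneity can be handled by the same moment matching you propose for LiRA; the paper simply posits equal-variance Gaussian laws for $S_A$ at this step. This is the paper's route. It also shows immediately that $\mathrm{PL\text{-}MIA}^{+}$ and LiRA share the same rejection region, consistent with Theorem~\ref{thm:main}(I).
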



\begin{theorem}[Advantage Analysis]
\label{thm:main}
Under Assumption~\ref{ass:gaussian_model} and at a fixed $\alpha$, the three components of PL-MIA yield the following attack-power comparisons:
\begin{itemize}
    \item[(I)] If $\sigma_\eta^2>0$ and $\rho = 1$, then $\mathrm{TPR}_{\mathrm{PL\text{-}MIA}^{+}}(\alpha) = \mathrm{TPR}_{\mathrm{LiRA}}(\alpha)$.
    Population calibration additionally places the GLR scores on a common null-relative scale.
    \item[(II)] If $\rho<1$, or if $\rho = 1$ and $\sigma_\xi^2>0$, then $\mathrm{TPR}_{\mathrm{PL\text{-}MIA}^{+}}(\alpha)>\mathrm{TPR}_{\mathrm{RMIA}}(\alpha)$.
    \item[(III)] If $|Z|r_\alpha>-\log\left(1-\mathrm{TPR}_{\mathrm{PL\text{-}MIA}^{+}}(\alpha)\right)$, then $\mathrm{TPR}_{\mathrm{PL\text{-}MIA}}(\alpha)>\mathrm{TPR}_{\mathrm{PL\text{-}MIA}^{+}}(\alpha)$.
\end{itemize}
Part (I) characterizes the role of population calibration: it preserves LiRA's attack power while placing heterogeneous GLR scores on a common relative scale, thereby reducing query-level heterogeneity.
Part (II) quantifies the gain from GLR: unlike BLR, it retains the variance-contraction signal when $\rho<1$ and avoids the query-dependent multiplicative variation of the first-order BLR score when $\rho=1$ and $\sigma_\xi^2>0$.
Part (III) identifies when Cauchy combination outperforms binary voting by retaining the evidence strength and amplifying exceptionally small $p$-values.

Together, these results decompose the advantage of \plmia into pointwise signal extraction, population calibration, and evidence-sensitive aggregation.
\end{theorem}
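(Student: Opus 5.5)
The plan is to reduce everything to the unified power formula of Theorem~\ref{thm:unified_tpr}. Since $\mathrm{TPR}_A(\alpha)=\Phi(\kappa_A-\Phi^{-1}(1-\alpha))$ is strictly increasing in $\kappa_A$, parts (I) and (II) become comparisons of standardized separations $\kappa_A$. Part (III) becomes an elementary inequality between $1-(1-r_\alpha)^{|Z|}$ and $\mathrm{TPR}_{\mathrm{PL\text{-}MIA}^{+}}(\alpha)$. Throughout, I would read $\kappa_A$ as the mean separation of the relevant score divided by its null standard deviation. For the pointwise separations I use Lemmas~\ref{lem:logbf} and~\ref{lem:rmia}, and for the calibrated ones Lemmas~\ref{lem:pairwise} and~\ref{lem:score}.

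\textbf{Part (I).} Set $\rho=1$. Then Lemma~\ref{lem:logbf} gives $\varphi_{\mathrm{GLR}}^{(d)}=\eta^{(d)}W\mp(\eta^{(d)})^2/2$, a Gaussian whose mean separation is $(\eta^{(d)})^2$ and whose variance is $(\eta^{(d)})^2$ under both hypotheses. Binary voting at $\gamma=0$ is a monotone function of $\varphi^{(q)}_{\mathrm{GLR}}$ once the population $Z$ is fixed. Averaging over $z$, its null centering is $0.5$ by Lemma~\ref{lem:score}. Thresholding the calibrated score at level $\alpha$ is therefore equivalent to thresholding $\varphi^{(q)}_{\mathrm{GLR}}$ at the corresponding population quantile. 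I would make this precise by writing $\mathrm{Score}_{\mathrm{Binary}}(q)=\hat F_Z(\varphi^{(q)})$, where $\hat F_Z$ is the empirical CDF of the population statistics. As $|Z|\to\infty$ this is a strictly increasing transform of $\varphi^{(q)}$, so the ROC curve and hence $\kappa$ are unchanged: $\kappa_{\mathrm{PL\text{-}MIA}^{+}}=\kappa_{\mathrm{LiRA}}$. The remark about a common null-relative scale then follows directly from Lemma~\ref{lem:score}.

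\textbf{Part (II).} For $\rho<1$, I compare $\kappa_{\mathrm{GLR}}$ with $\kappa_{\mathrm{BLR}}$.
\begin{itemize}
\item By Lemma~\ref{lem:rmia}, $\kappa_{\mathrm{BLR}}=\eta\xi/\xi=\eta$ to first order, for fixed $(\eta,\xi)$, and it does not depend on $\rho$.
\item For GLR, the null variance of $W^2/2-(W-\eta)^2/(2\rho^2)$ can be computed in closed form. Dividing $\Delta\mu_{\mathrm{GLR}}$ by its square root, I would show the ratio strictly exceeds $\eta$ when $\rho<1$ and equals $\eta$ when $\rho=1$.
\end{itemize}
At $\rho=1$ with $\sigma_\xi^2>0$, the RMIA pairwise difference in Lemma~\ref{lem:pairwise} has mean $(\eta^{(q)}\xi^{(q)}+\eta^{(z)}\xi^{(z)})/2$ and standard deviation $\sqrt{(\xi^{(q)})^2+(\xi^{(z)})^2}$. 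By Cauchy--Schwarz its standardized mean is at most $\sqrt{((\eta^{(q)})^2+(\eta^{(z)})^2)}/2$, which is exactly the GLR value. Equality holds only when $\xi^{(q)}/\eta^{(q)}=\xi^{(z)}/\eta^{(z)}$, an event of probability less than one when $\xi$ varies. Integrating the strict inequality over the heterogeneity distribution then yields the strict gap in TPR.

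\textbf{Part (III).} This part is elementary. Since $\log(1-r)\le -r$, we have $(1-r_\alpha)^{|Z|}\le e^{-|Z|r_\alpha}$. The hypothesis $|Z|r_\alpha>-\log(1-\mathrm{TPR}_{\mathrm{PL\text{-}MIA}^{+}})$ gives $e^{-|Z|r_\alpha}<1-\mathrm{TPR}_{\mathrm{PL\text{-}MIA}^{+}}$. Combining the two with Theorem~\ref{thm:unified_tpr} proves the claim.

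The main obstacle is the strict GLR-versus-BLR inequality for $\rho<1$. The GLR score is a non-central quadratic form rather than a Gaussian, so casting it into the Gaussian form of Theorem~\ref{thm:unified_tpr} needs a Gaussian approximation. Even then, the inequality on standardized separations requires a careful algebraic check of the variance expression. I would first attempt this check via a direct expansion in $1-\rho$ and monotonicity in $\rho$.
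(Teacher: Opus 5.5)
\textbf{Parts (I) and (III).} These match the paper's argument: calibration acts as the null-percentile map $F_{0,G}(G(q))$, which leaves the rejection region unchanged, and Part (III) follows from $(1-r_\alpha)^{|Z|}\le e^{-|Z|r_\alpha}$.

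\textbf{Part (II), case $\rho<1$: the planned inequality is false.} You plan to show $\kappa_{\mathrm{GLR}}=\Delta\mu_{\mathrm{GLR}}/\sqrt{\mathrm{Var}(\varphi_{\mathrm{GLR}}\mid H_0)}>\eta$, but this does not hold. Write $d=\rho^{-2}-1>0$. Since $\varphi_{\mathrm{GLR}}\mid H_0=\tfrac{1-\rho^{-2}}{2}W^2+\tfrac{\eta}{\rho^2}W+\mathrm{const}$, the null variance is $d^2/2+\eta^2(1+d)^2$. Lemma~\ref{lem:logbf} gives $\Delta\mu_{\mathrm{GLR}}=\eta^2(1+d/2)+d^2/(2(1+d))$. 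Hence $\Delta\mu_{\mathrm{GLR}}^2-\eta^2\mathrm{Var}_0=-\eta^4d+O(d^2)$. The expansion in $1-\rho$ you propose therefore has a \emph{negative} leading term. Moreover, $\kappa_{\mathrm{GLR}}/\eta\to(1+d/2)/(1+d)<1$ as $\eta\to\infty$; for instance $\eta=2$, $\rho=0.9$ gives $\kappa_{\mathrm{GLR}}\approx 1.82<2$.

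The root problem is that Theorem~\ref{thm:unified_tpr} is stated only for $\rho=1$. For $\rho<1$, neither attack produces an equal-variance Gaussian pair. GLR is a noncentral quadratic in $W$. Even first-order BLR has member s.d.\ $\rho\xi$ against null s.d.\ $\xi$, so its true power is $\Phi\bigl((\eta-\Phi^{-1}(1-\alpha))/\rho\bigr)$, not $\Phi(\eta-\Phi^{-1}(1-\alpha))$. No single $\kappa$ determines TPR here, so comparing $\kappa$'s (with any variance convention) does not prove a TPR ordering.

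\textbf{The missing idea is Neyman--Pearson.} Under Assumption~\ref{ass:gaussian_model}, GLR is the exact likelihood ratio of the observable, hence most powerful at level $\alpha$. Its $W^2$-coefficient $(1-\rho^{-2})/2$ is negative, so its upper rejection region in $W$ is a bounded interval $(w_-,w_+)$. Exact BLR is strictly increasing in $W$, with derivative $\sigma_{\mathrm{out}}[1-\operatorname{sigmoid}(\cdot)]>0$, so it rejects on a half-line. The two regions differ on a set of positive null probability, which makes the Neyman--Pearson inequality strict. The increasing calibration map preserves both regions, so the strict inequality carries over to $\mathrm{PL\text{-}MIA}^{+}$ versus RMIA.

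\textbf{Part (II), case $\rho=1$: the aggregation step is unjustified.} Your Cauchy--Schwarz bound on the standardized member pairwise difference is correct. However, integrating $\Phi$ of it over the heterogeneity distribution yields $\Pr[\Delta(Q,Z)>0\mid H_1]$, which is an AUC (win-probability) ordering. It is not an ordering of TPR at fixed $\alpha$, since ROC curves can cross.

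Note that for a fixed query at $\rho=1$, both GLR and first-order BLR are increasing in $W$ and reject on the same set. The whole gap comes from applying one global threshold across queries with different $\xi^{(Q)}$. The paper captures this directly. It fixes $\eta^{(Q)}=\eta_0$ and computes the marginal first-order BLR mean separation $\eta_0\bar\xi$ and common variance $\bar\xi^2+(1+\eta_0^2/4)\sigma_\xi^2$. This gives $\kappa_{\mathrm{BLR}}<\eta_0=\kappa_{\mathrm{GLR}}$, and monotonicity of $\Phi(\kappa-\Phi^{-1}(1-\alpha))$ in $\kappa$ then yields the TPR ordering.

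To keep your route, you would need to invoke that same equal-variance Gaussian form. Under it, AUC $=\Phi(\kappa/\sqrt2)$ and TPR are both monotone in $\kappa$, so your AUC ordering would transfer to TPR.
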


\section{Experiments}
\label{sec:exp}

\subsection{Experimental Setup}

\paragraph*{Datasets and Implementation Details}
Our main experiments evaluate \plmia on three image datasets: \cifar{}, \cifarb~\citep{cifar}, and \cinic{}~\citep{cinic}.
For a fair comparison, we use \wrn~\citep{wideresnet} as the backbone across all datasets and follow the model-training protocol of \citet{rmia}.
Additional details on the model-training protocol and hyperparameters are provided in Section~S1 of the Supplementary Material.
Following the standard evaluation protocol~\citep{lira,wen2023canary}, we construct balanced member and non-member evaluation sets by randomly partitioning each dataset into two disjoint halves of equal size.
One half is used to train the target model and constitutes the member set, while the other half is held out as the non-member set.
We use all points in both sets as audit queries.
We train multiple reference models~\citep{emia,watson2021importance} using the same \wrn{} backbone and training protocol. 
Each reference model is trained on an independently sampled $50\%$ subset of the full dataset, so that each query point is included in approximately half of the reference-model training sets.
For each query point, reference models trained on that point are referred to as IN reference models, while those not trained on it are referred to as OUT reference models.

\paragraph*{Attack Mode (Online vs. Offline) and Baselines}
We consider both online and offline attack modes.
In the online setting, the adversary uses both IN and OUT reference models, whereas the offline setting permits access only to OUT reference models.
Under both attack modes, we compare \plmia with representative baselines, including Attack-R, Attack-P~\citep{emia}, \lira~\citep{lira}, and \rmia~\citep{rmia}.
For a fair comparison, we apply the same data-augmentation protocol of \rmia~\citep{rmia} to all compatible attacks.
In our experiments, \lira and \rmia are evaluated in both online and offline modes, whereas Attack-P and Attack-R are restricted to the offline mode, as they rely on global statistics (e.g., loss) that do not condition on the inclusion of the query point.

\paragraph*{Evaluation Metrics}
We evaluate each attack using two standard metrics: the area under the receiver operating characteristic curve (AUC) and the true positive rate (TPR) at extremely low false positive rates (FPRs). Specifically, we report TPR at FPRs of 0.01\% and 0.0\%.
AUC measures the attack's overall discriminative ability by summarizing the trade-off between TPR and FPR across all possible decision thresholds.
In contrast, TPR at low FPRs measures attack effectiveness under stringent false-positive constraints, which is particularly relevant to privacy audits that require a near-zero false-accusation rate.

\begin{table*}[tbp]
\centering
\newcommand{\msd}[2]{\ensuremath{#1_{\scriptscriptstyle #2}}}
\newcommand{\bmsd}[2]{\ensuremath{\mathbf{#1}_{\scriptscriptstyle #2}}}
\newcommand{\umsd}[2]{\ensuremath{\underline{#1}_{\scriptscriptstyle #2}}}
\caption{Performance of different attacks using 254 reference models, averaged over 10 random target models.
For each scenario, the best-performing method is highlighted in \textbf{bold}, while the second-best result is \underline{underlined}.
Standard deviations over target models are shown as subscripts.
The top part corresponds to offline attacks using only 127 OUT models, whereas the bottom part reports online attacks using 127 OUT and 127 IN models.
\textbf{IoS} (Improvement over SOTA) reports the improvement of \plmia over the previously strongest baseline under the same setting, with absolute and relative gains shown in IoS (Abs.) and IoS (Rel.), respectively.}
\label{tab:main_results}
\resizebox{\textwidth}{!}{
\begin{tabular}{lccccccccc}
\toprule
\multirow{2}{*}{Attack} &
\multicolumn{3}{c}{CIFAR-10} & \multicolumn{3}{c}{CIFAR-100} &
\multicolumn{3}{c}{CINIC-10} \\
\cmidrule(lr){2-4} \cmidrule(lr){5-7} \cmidrule(lr){8-10}
& AUC & \multicolumn{2}{c}{TPR@FPR}
& AUC & \multicolumn{2}{c}{TPR@FPR}
& AUC & \multicolumn{2}{c}{TPR@FPR} \\
\cmidrule(lr){3-4} \cmidrule(lr){6-7} \cmidrule(lr){9-10}
& & 0.01\% & 0.0\% & & 0.01\% & 0.0\% & & 0.01\% & 0.0\% \\
\midrule
\rowcolor{gray!25}
\multicolumn{10}{c}{Offline} \\
Attack-P & \msd{58.15}{0.45} & \msd{0.00}{0.00} & \msd{0.00}{0.00} & \msd{77.12}{0.19} & \msd{0.01}{0.01} & \msd{0.00}{0.00} & \msd{66.30}{0.23} & \msd{0.00}{0.01} & \msd{0.00}{0.00} \\
Attack-R & \msd{64.24}{0.39} & \msd{1.33}{0.24} & \msd{0.81}{0.31} & \msd{83.30}{0.22} & \msd{5.92}{1.70} & \msd{3.12}{1.02} & \msd{72.86}{0.18} & \msd{1.70}{0.58} & \msd{1.03}{0.68} \\
\lira & \msd{55.33}{0.46} & \msd{1.33}{0.34} & \msd{0.71}{0.35} & \msd{75.87}{0.32} & \msd{2.73}{1.28} & \msd{1.52}{1.26} & \msd{64.04}{0.59} & \msd{1.31}{0.26} & \msd{0.60}{0.39} \\
\rmia & \bmsd{71.64}{0.34} & \umsd{4.03}{0.82} & \umsd{3.08}{0.62} & \umsd{90.60}{0.13} & \umsd{10.79}{2.77} & \umsd{7.73}{2.47} & \bmsd{82.11}{0.30} & \umsd{5.46}{0.95} & \umsd{3.61}{1.18} \\
\plmia & \umsd{71.18}{0.37} & \bmsd{4.56}{0.79} & \bmsd{3.34}{0.77} & \bmsd{91.22}{0.13} & \bmsd{15.82}{2.23} & \bmsd{12.58}{2.89} & \umsd{81.45}{0.28} & \bmsd{7.04}{0.92} & \bmsd{5.44}{1.41} \\
\rowcolor{blue!15}
IoS (Abs.) & $-0.46$ & $0.53$ & $0.26$ & $0.62$ & $5.03$ & $4.85$ & $-0.66$ & $1.58$ & $1.83$ \\
\rowcolor{blue!15}
IoS (Rel.) & $\downarrow$0.64\% & $\uparrow$13.15\% & $\uparrow$8.44\% & $\uparrow$0.68\% & $\uparrow$46.62\% & $\uparrow$62.74\% & $\downarrow$0.80\% & $\uparrow$28.94\% & $\uparrow$50.69\% \\
\midrule
\rowcolor{gray!25}
\multicolumn{10}{c}{Online} \\
\lira & \msd{72.10}{0.41} & \msd{3.27}{0.42} & \msd{1.96}{0.91} & \umsd{91.48}{0.13} & \msd{11.60}{1.84} & \msd{7.61}{2.94} & \msd{82.23}{0.28} & \msd{4.78}{1.34} & \msd{3.01}{1.36} \\
\rmia & \umsd{72.19}{0.35} & \umsd{4.35}{0.68} & \umsd{2.79}{0.66} & \msd{91.00}{0.14} & \umsd{12.66}{2.29} & \umsd{10.37}{2.67} & \umsd{82.48}{0.32} & \umsd{6.83}{1.42} & \umsd{4.34}{1.68} \\
\plmia & \bmsd{72.49}{0.34} & \bmsd{4.45}{0.54} & \bmsd{3.43}{0.80} & \bmsd{92.10}{0.14} & \bmsd{16.81}{2.41} & \bmsd{13.20}{2.89} & \bmsd{82.71}{0.27} & \bmsd{7.12}{1.01} & \bmsd{5.54}{1.22} \\
\rowcolor{blue!15}
IoS (Abs.) & $0.30$ & $0.10$ & $0.64$ & $0.62$ & $4.15$ & $2.83$ & $0.23$ & $0.29$ & $1.20$ \\
\rowcolor{blue!15}
IoS (Rel.) & $\uparrow$0.42\% & $\uparrow$2.30\% & $\uparrow$22.94\% & $\uparrow$0.68\% & $\uparrow$32.80\% & $\uparrow$27.29\% & $\uparrow$0.28\% & $\uparrow$4.25\% & $\uparrow$27.65\% \\
\bottomrule
\end{tabular}}
\end{table*}

\subsection{Main Results}

We compare \plmia against baseline attacks across datasets and attack modes, with the results reported in \tabref{tab:main_results}.
In the online setting, \plmia consistently achieves the strongest performance for all datasets and metrics.
The gains are most pronounced in the low-FPR regime: at TPR@0.0\%FPR, \plmia improves over the strongest baseline by 22.94\%, 27.29\%, and 27.65\% on \cifar{}, \cifarb, and \cinic, respectively.
In the offline setting, \plmia remains competitive in AUC and achieves the strongest low-FPR performance across all datasets. In particular, on \cifarb and \cinic, \plmia improves TPR@0.0\%FPR over the strongest baseline by \textbf{62.74\%} and \textbf{50.69\%}, respectively.
These results support our theoretical findings in \secref{sec:theoretical} that \plmia yields a more powerful test in the practically critical low-FPR regime.

\subsection{Analysis}
We further analyze the component contributions, sensitivity, computational cost, and statistical properties of \plmia.
Unless otherwise specified, we report the results on the image datasets.

\subsubsection{Ablation and Sensitivity Analyses}\label{sec:exp_abalation}

\paragraph*{Component-wise Ablation}
We ablate each component of \plmia across datasets and attack modes, and report the results in \tabref{tab:component_analysis}.
Row 1 uses only the Gaussian likelihood ratio (GLR) as the pointwise statistic, corresponding to \lira.
Row 2 adds binary-voting population calibration to the GLR, yielding the intermediate attack \plmiap.
This addition substantially improves TPR@0.0\%FPR in all settings, consistent with our analysis that population calibration neutralizes query-level heterogeneity (Lemma~\ref{lem:score}).
Finally, Row 3 uses the continuous pairwise $p$-values and aggregates them using the Cauchy combination (CC), yielding the complete \plmia.
This final step provides a further improvement and achieves the strongest performance under all reported settings.

\begin{table}[!htbp]
\centering
\small
\renewcommand{\tabcolsep}{0.3pc}
\renewcommand{\arraystretch}{0.5}
\caption{Component-wise ablation of \plmia, evaluated using TPR@0\%FPR across datasets and attack modes. GLR, PC, and CC denote Gaussian Likelihood Ratio, Population Calibration, and Cauchy Combination, respectively.}
\label{tab:component_analysis}
\begin{tabular}{ccc|cc|cc|cc}
\toprule
\multicolumn{3}{c|}{Component} &
\multicolumn{2}{c|}{\cifar} &
\multicolumn{2}{c|}{\cifarb} &
\multicolumn{2}{c}{\cinic} \\
\cmidrule(lr){4-5} \cmidrule(lr){6-7} \cmidrule(lr){8-9}
GLR & PC & CC & Online & Offline & Online & Offline & Online & Offline \\
\midrule
\ding{52} &         &         & 1.96 & 0.71 & 7.61 & 1.52 & 3.01 & 0.60 \\
\ding{52} & \ding{52} &       & 3.32 & 3.29 & 12.82 & 12.32 & 5.40 & 5.16 \\
\ding{52} & \ding{52} & \ding{52} & \textbf{3.43} & \textbf{3.34} & \textbf{13.20} & \textbf{12.58}  & \textbf{5.54} & \textbf{5.44 }\\
\bottomrule
\end{tabular}
\end{table}

\paragraph*{Sensitivity to Pairwise Comparison Threshold $\gamma$}
We further study the sensitivity to the pairwise comparison threshold $\gamma$ in \eqref{eq:binary_calib}, which determines whether each pairwise comparison contributes positive binary evidence.
As shown in \figref{fig:abla_gamma}, \rmia is highly sensitive to $\gamma$ and underperforms \lira when $\gamma = 1$.
For comparison, we also report the performance of \plmiap, which uses the GLR pointwise statistic with the same binary-voting population calibration.
It consistently outperforms \rmia for all considered $\gamma$ and is substantially less sensitive to the threshold choice.
The complete \plmia does not require $\gamma$ because it converts pairwise differences into continuous $p$-values and aggregates them using the Cauchy combination.
It therefore eliminates the need to tune the hyperparameter $\gamma$ while achieving the strongest performance.

\begin{figure}[!ht]
    \centering
    \begin{overpic}[width=0.5\linewidth]{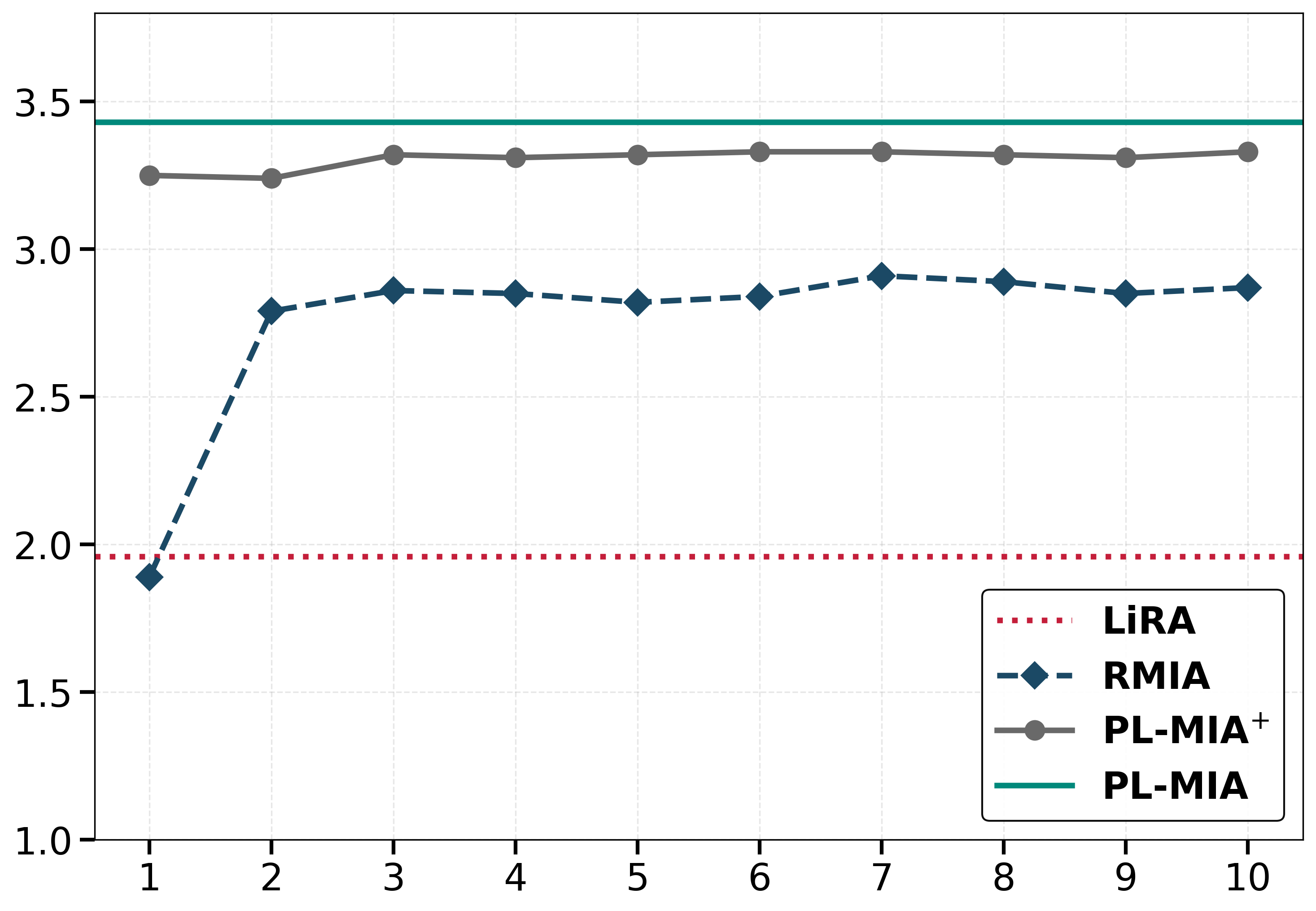}
        \put(45,-2.5){Threshold $\gamma$}      
        \put(-6,30){\rotatebox{90}{TPR\@0.0\%FPR}} 
    \end{overpic}
    \vspace{1em}
    \caption{Sensitivity to the pairwise comparison threshold $\gamma$ on \cifar, evaluated using TPR@0.0\% FPR.}
    \label{fig:abla_gamma}
\end{figure}

\paragraph*{Offline \plmia}\label{sec:exp_offline}

In the offline setting, the adversary has access only to OUT reference models and therefore cannot directly estimate the IN distribution.
To enable \plmia in this setting, we estimate the IN-distribution parameters from the OUT-distribution statistics, based on the empirical observation that members typically exhibit higher confidence (positive mean shift) and lower uncertainty (variance contraction).
Specifically, we approximate the IN-distribution parameters as
\begin{equation}
\begin{aligned}
    \sigma_{\mathrm{in}} \approx \rho \cdot \sigma_{\mathrm{out}}, \quad
    \mu_{\mathrm{in}} \approx \mu_{\mathrm{out}} + \eta \cdot \sigma_{\mathrm{out}},
\end{aligned}
\end{equation}
where $\rho \in (0, 1]$ is the variance-contraction ratio and $\eta > 0$ is the normalized mean shift.
\tabref{tab:offline_all} shows that \plmia remains stable across the considered values of $\eta$ and $\rho$.
For example, on CIFAR-100, AUC ranges only from $90.12$ to $91.22$, while TPR@0.0\%FPR ranges from $12.14$ to $12.78$, with $\eta=0.3$ and $\rho=0.9$ achieving the best trade-off.
These results indicate that even without explicit IN reference models, approximating the normalized mean shift and variance contraction allows \plmia to maintain strong attack performance.

\begin{table*}[!htbp]
\centering
\renewcommand{\tabcolsep}{0.3pc}
\renewcommand{\arraystretch}{0.5}
\caption{Offline \plmia performance under different normalized mean shifts $\eta$ and variance-contraction ratios $\rho$. For each dataset, the best result for each metric is highlighted in \textbf{bold}.}
\label{tab:offline_all}
\begin{tabular}{cc|ccc|ccc|ccc}
\toprule
$\eta$ & $\rho$
& \multicolumn{3}{c}{CIFAR-10}
& \multicolumn{3}{c}{CIFAR-100}
& \multicolumn{3}{c}{CINIC-10} \\
\cmidrule(lr){3-5} \cmidrule(lr){6-8} \cmidrule(lr){9-11}
 &  & AUC & \multicolumn{2}{c}{TPR@FPR}
      & AUC & \multicolumn{2}{c}{TPR@FPR}
      & AUC & \multicolumn{2}{c}{TPR@FPR} \\
\cmidrule(lr){4-5} \cmidrule(lr){7-8} \cmidrule(lr){10-11}
 &  &  & 0.01\% & 0.0\%
      & & 0.01\% & 0.0\%
      & & 0.01\% & 0.0\% \\
\midrule
\multirow{2}{*}{0.3} & 0.8
& \textbf{71.18} & \textbf{4.56} & \textbf{3.34}
& 91.08 & 16.03 & 12.78
& 81.17 & 7.01 & 5.34 \\
 & 0.9
& 71.02 & 4.52 & 3.33
& \textbf{91.22} & \textbf{15.82} & \textbf{12.58}
& \textbf{81.45} & \textbf{7.04} & \textbf{5.44} \\
\midrule
\multirow{2}{*}{0.5} & 0.8
& 70.19 & 4.52 & 3.34
& 90.35  & 15.96 & 12.14
& 81.01 & 7.04 & 5.44 \\
 & 0.9
& 70.12 & 4.54 & 3.33
& 90.37  & 15.92 & 12.32
& 80.57  & 7.16 & 5.24 \\
\midrule
\multirow{2}{*}{1.0} & 0.8
& 69.12 & 4.52 & 3.34
& 90.22 & 16.03 & 12.78
& 79.90 & 7.01 & 5.34 \\
 & 0.9
& 69.06 & 4.55 & 3.33
& 90.12 & 15.82 & 12.58
& 79.87 & 7.04 & 5.44 \\
\bottomrule
\end{tabular}
\end{table*}

\subsubsection{Cost Analysis}\label{sec:exp_cost}
We analyze the cost-performance trade-off by varying three computational factors: the number of reference models, the number of data augmentations, and the number of population points.

\begin{figure*}[!htbp]
\centering
\captionsetup[subfigure]{labelformat=parens}

\newcommand{\rowH}{3.4cm}          
\setlength{\tabcolsep}{1pt}        
\renewcommand{\arraystretch}{1.0}  
\newcommand{\labW}{0.9em}          
\newcommand{\imgW}{0.31\linewidth} 

\begin{tabular}{@{}c@{\hspace{2pt}}ccc@{}}

\parbox[c][\rowH][c]{\labW}{\centering\rotatebox{90}{AUC}} &
\begin{subfigure}[c]{\imgW}\centering
  \includegraphics[width=\linewidth,height=\rowH,keepaspectratio]{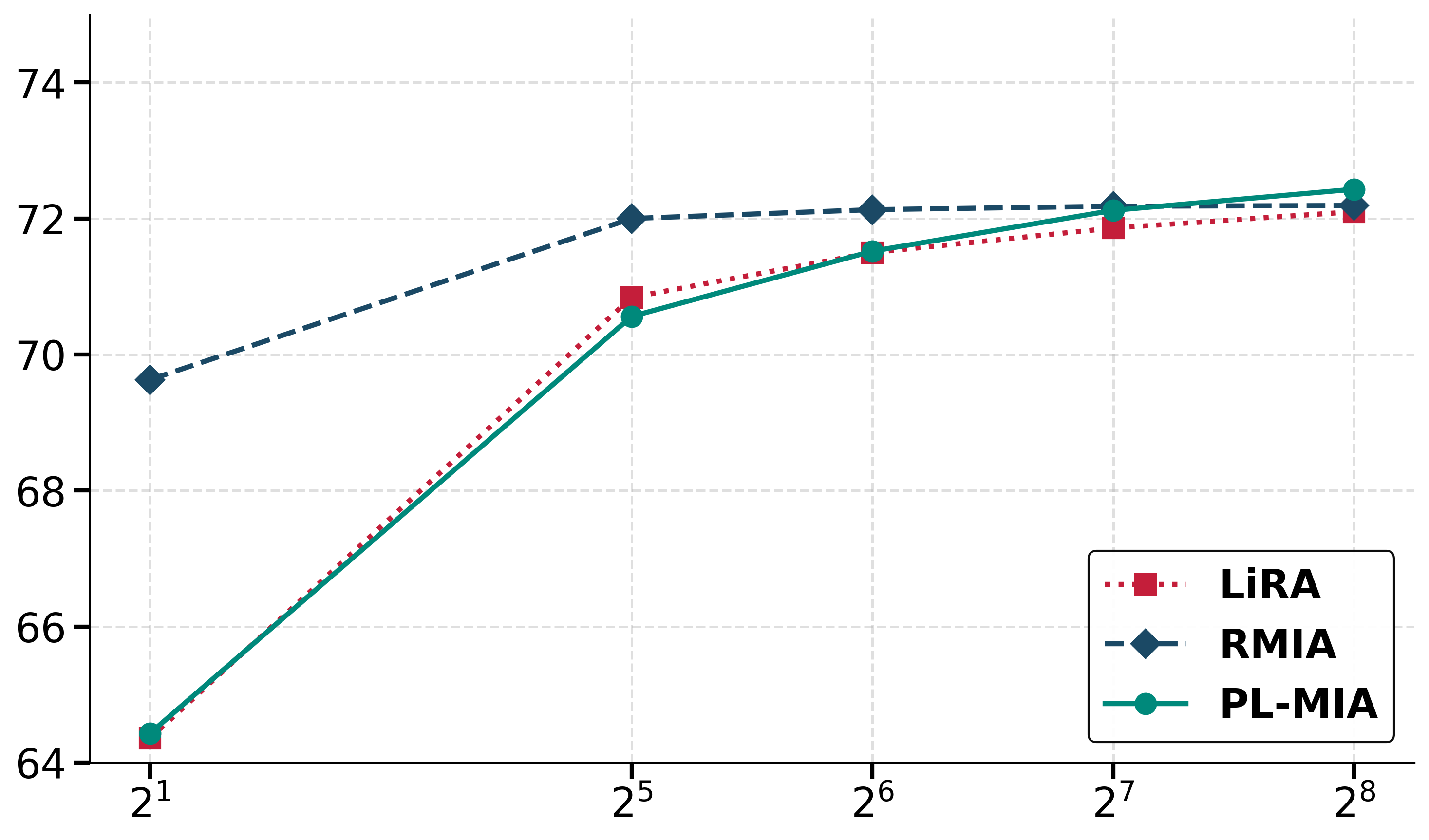}
\end{subfigure} &
\begin{subfigure}[c]{\imgW}\centering
  \includegraphics[width=\linewidth,height=\rowH,keepaspectratio]{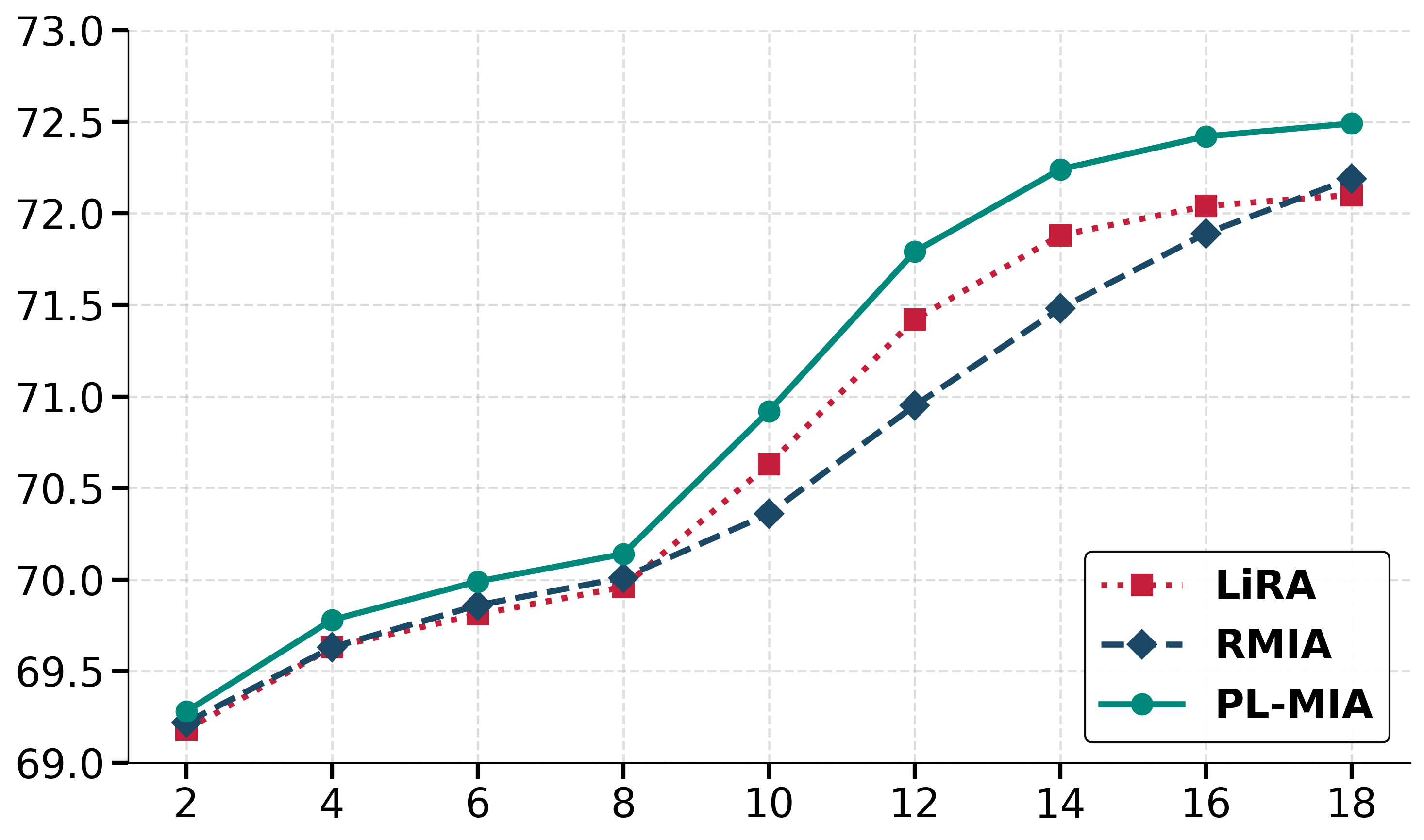}
\end{subfigure} &
\begin{subfigure}[c]{\imgW}\centering
  \includegraphics[width=\linewidth,height=\rowH,keepaspectratio]{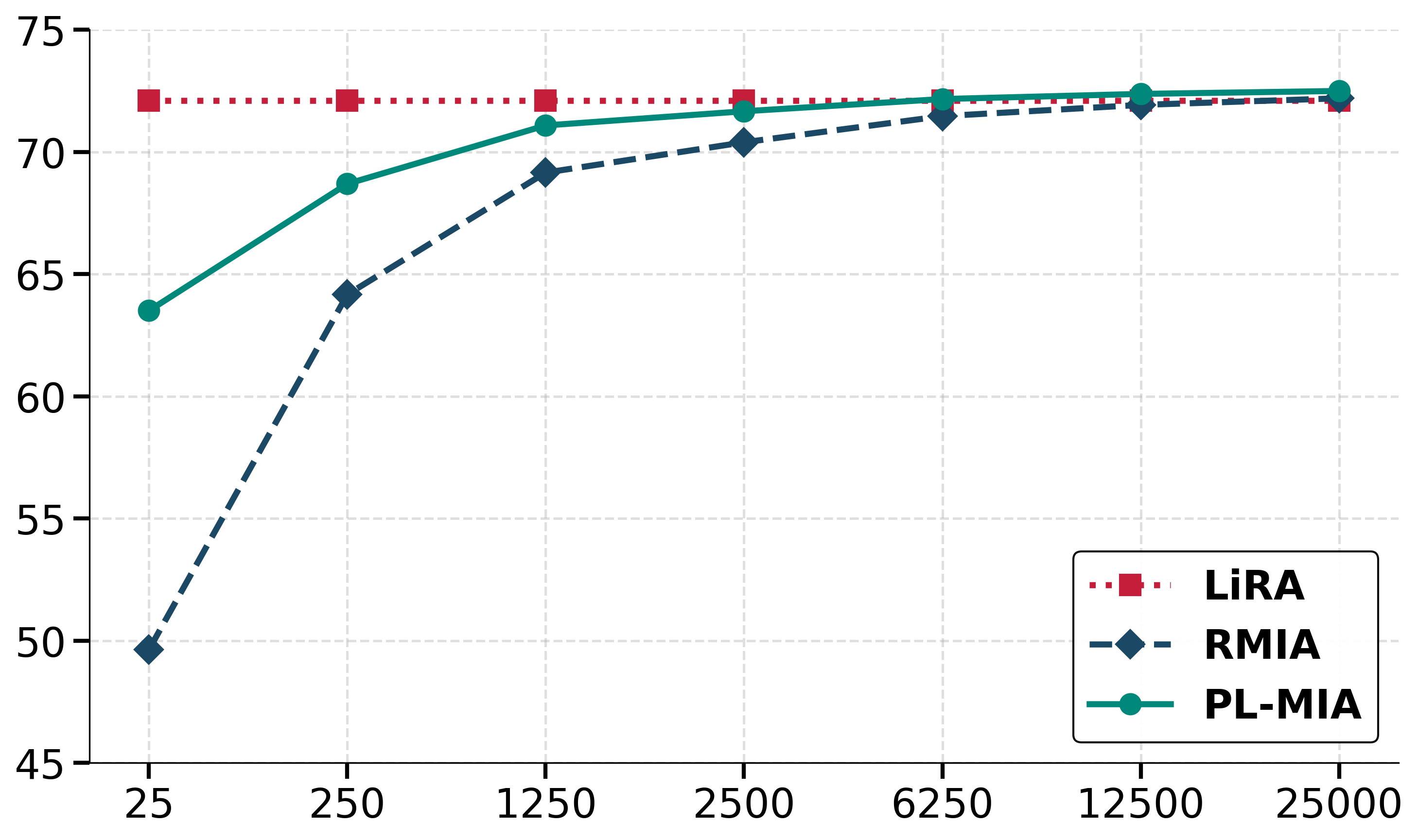}
\end{subfigure}
\\[0.6em]

\parbox[c][\rowH][c]{\labW}{\centering\rotatebox{90}{TPR@0.01\%FPR}} &
\begin{subfigure}[c]{\imgW}\centering
  \includegraphics[width=\linewidth,height=\rowH,keepaspectratio]{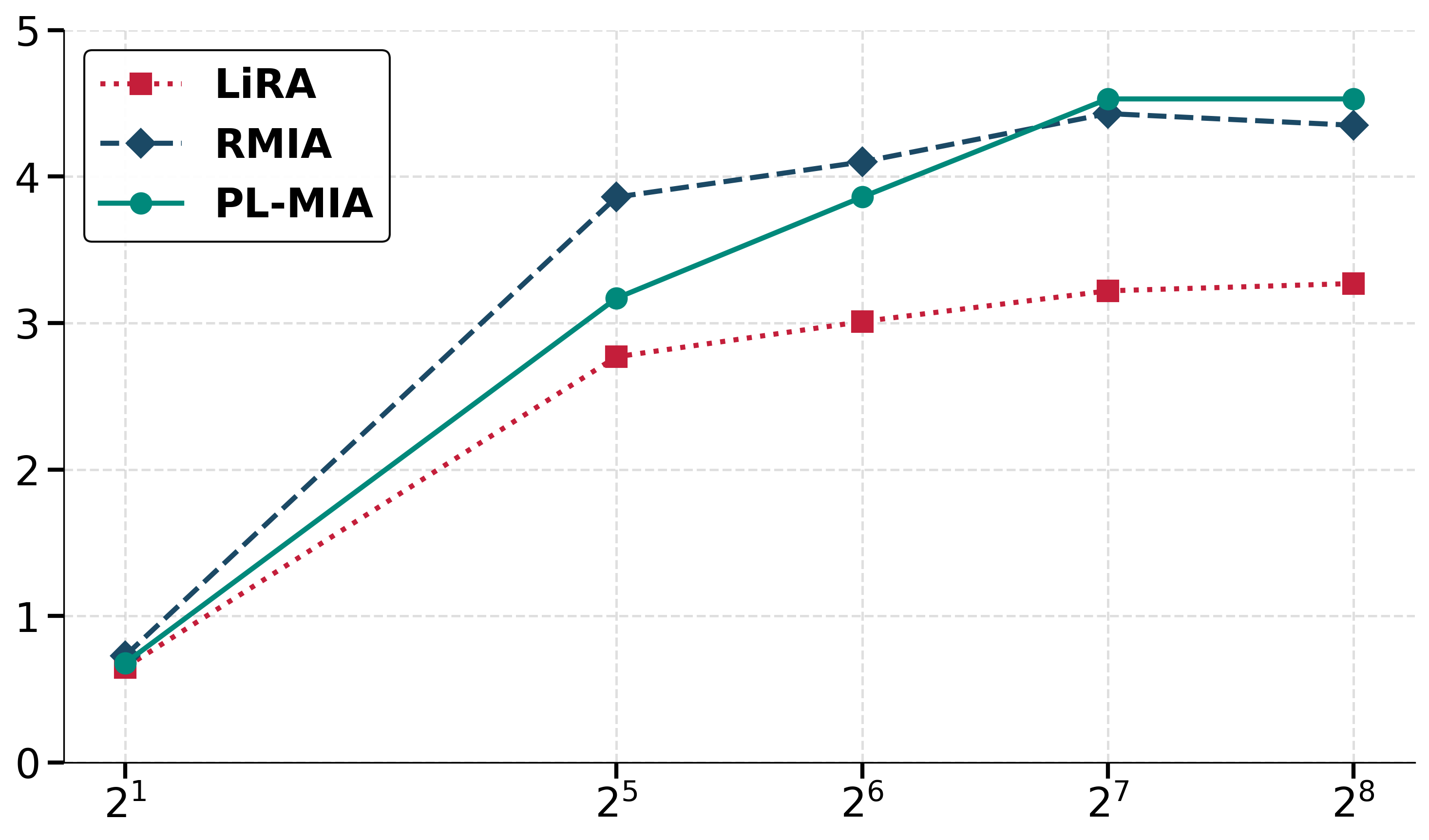}
\end{subfigure} &
\begin{subfigure}[c]{\imgW}\centering
  \includegraphics[width=\linewidth,height=\rowH,keepaspectratio]{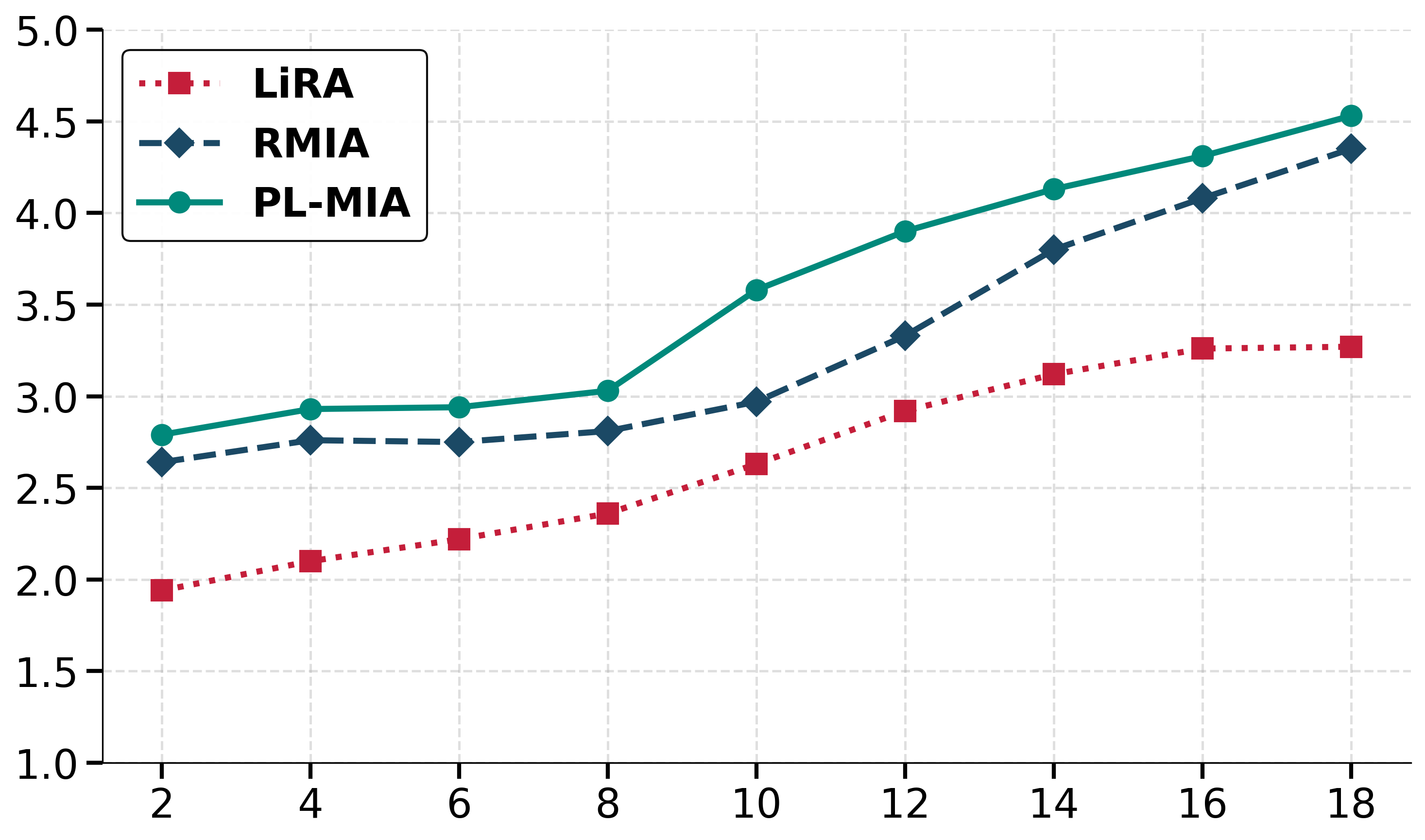}
\end{subfigure} &
\begin{subfigure}[c]{\imgW}\centering
  \includegraphics[width=\linewidth,height=\rowH,keepaspectratio]{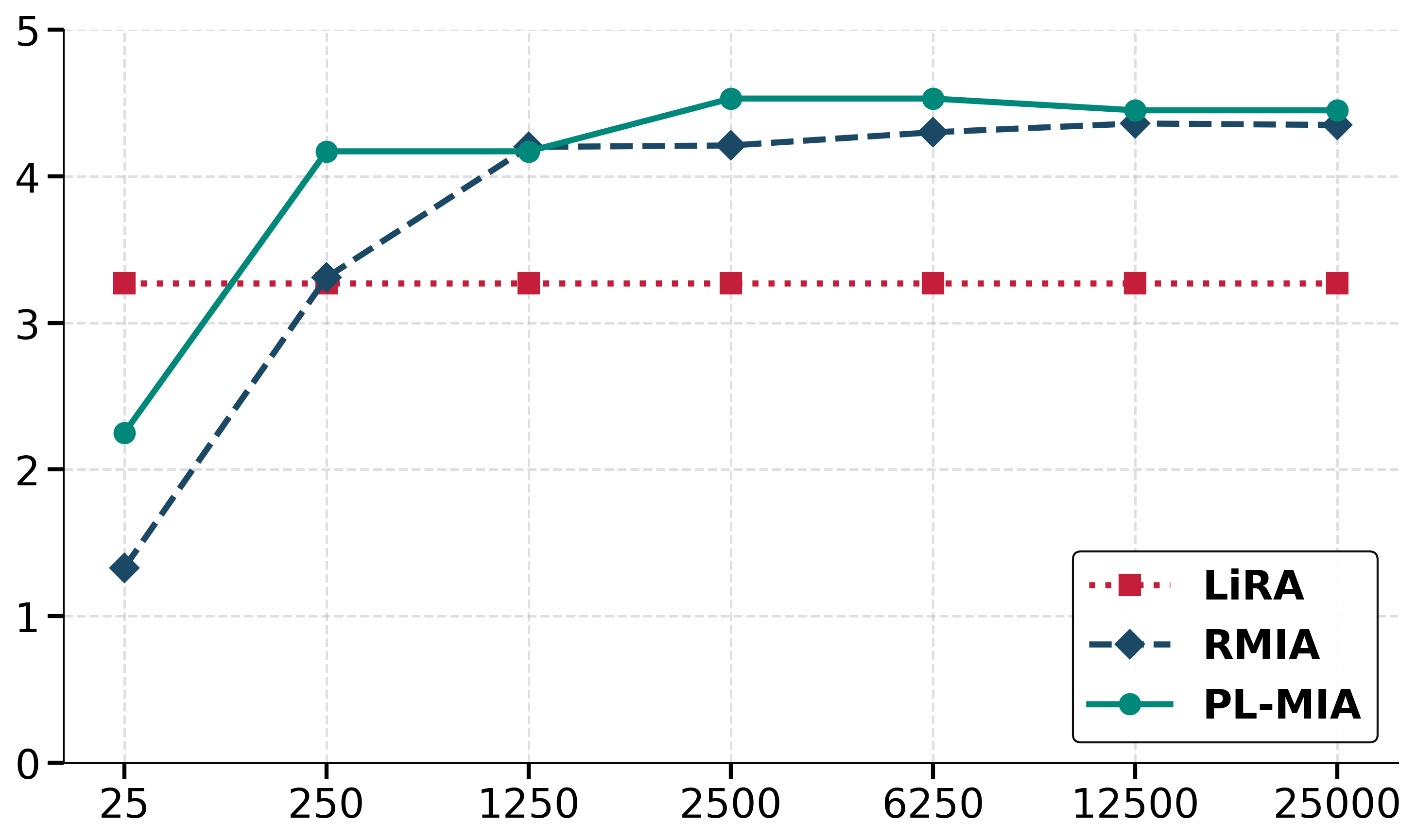}
\end{subfigure}
\\[0.6em]

\parbox[c][\rowH][c]{\labW}{\centering\rotatebox{90}{TPR@0.0\%FPR}} &
\begin{subfigure}[c]{\imgW}\centering
  \includegraphics[width=\linewidth,height=\rowH,keepaspectratio]{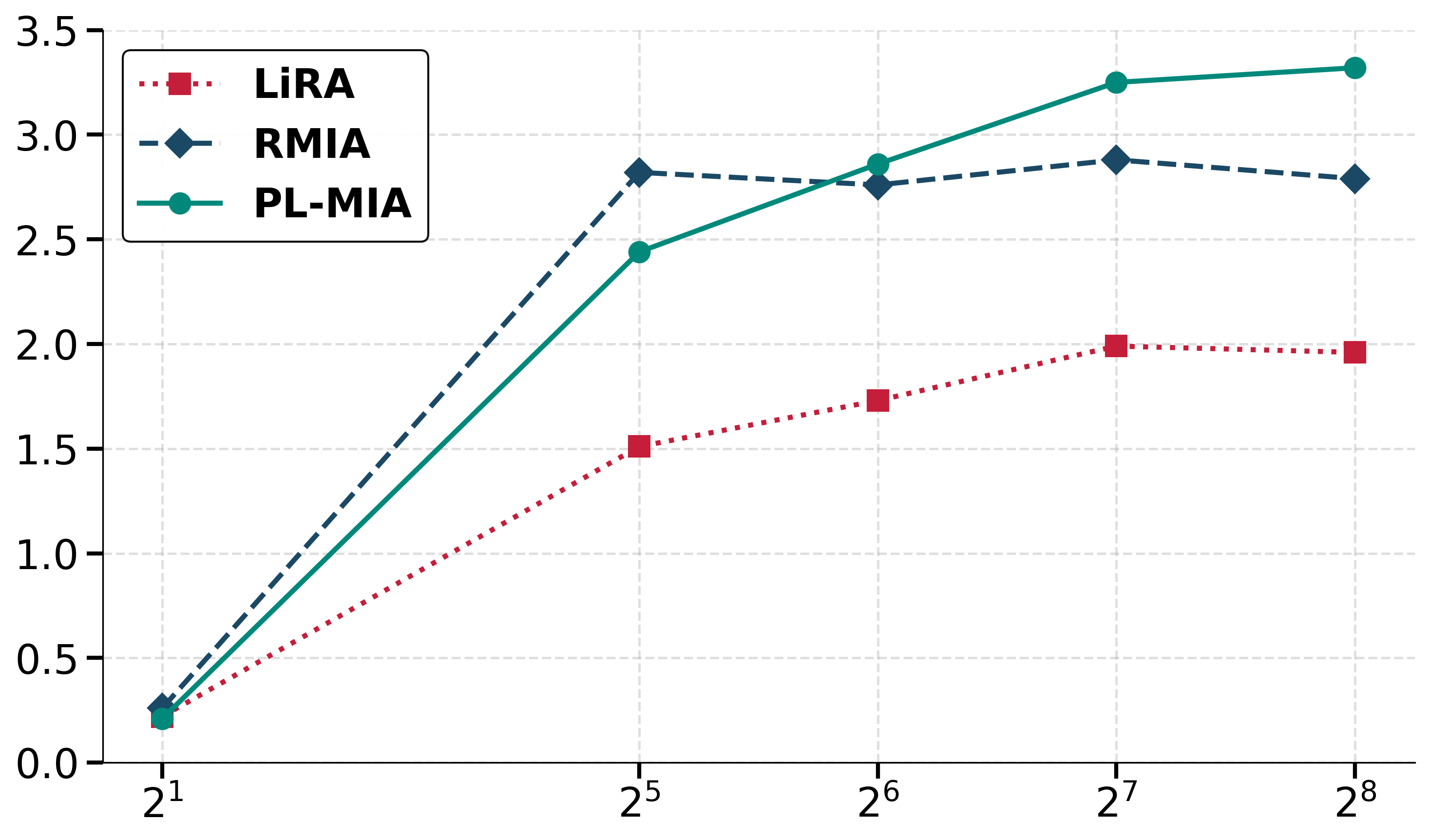}
  \caption{\# reference models}
  \label{fig:ablation-ref}
\end{subfigure} &
\begin{subfigure}[c]{\imgW}\centering
  \includegraphics[width=\linewidth,height=\rowH,keepaspectratio]{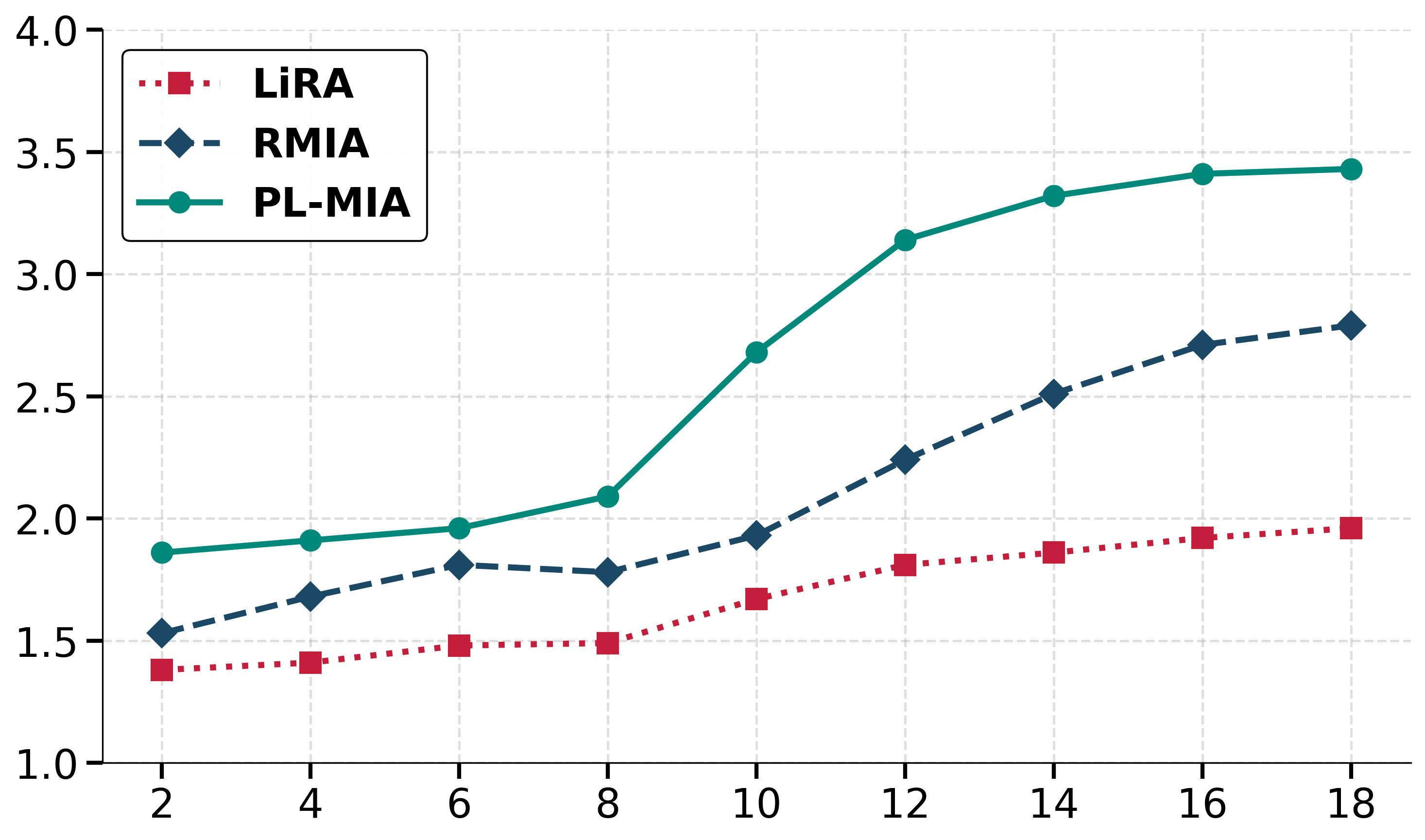}
  \caption{\# data augmentations}
  \label{fig:ablation-aug}
\end{subfigure} &
\begin{subfigure}[c]{\imgW}\centering
  \includegraphics[width=\linewidth,height=\rowH,keepaspectratio]{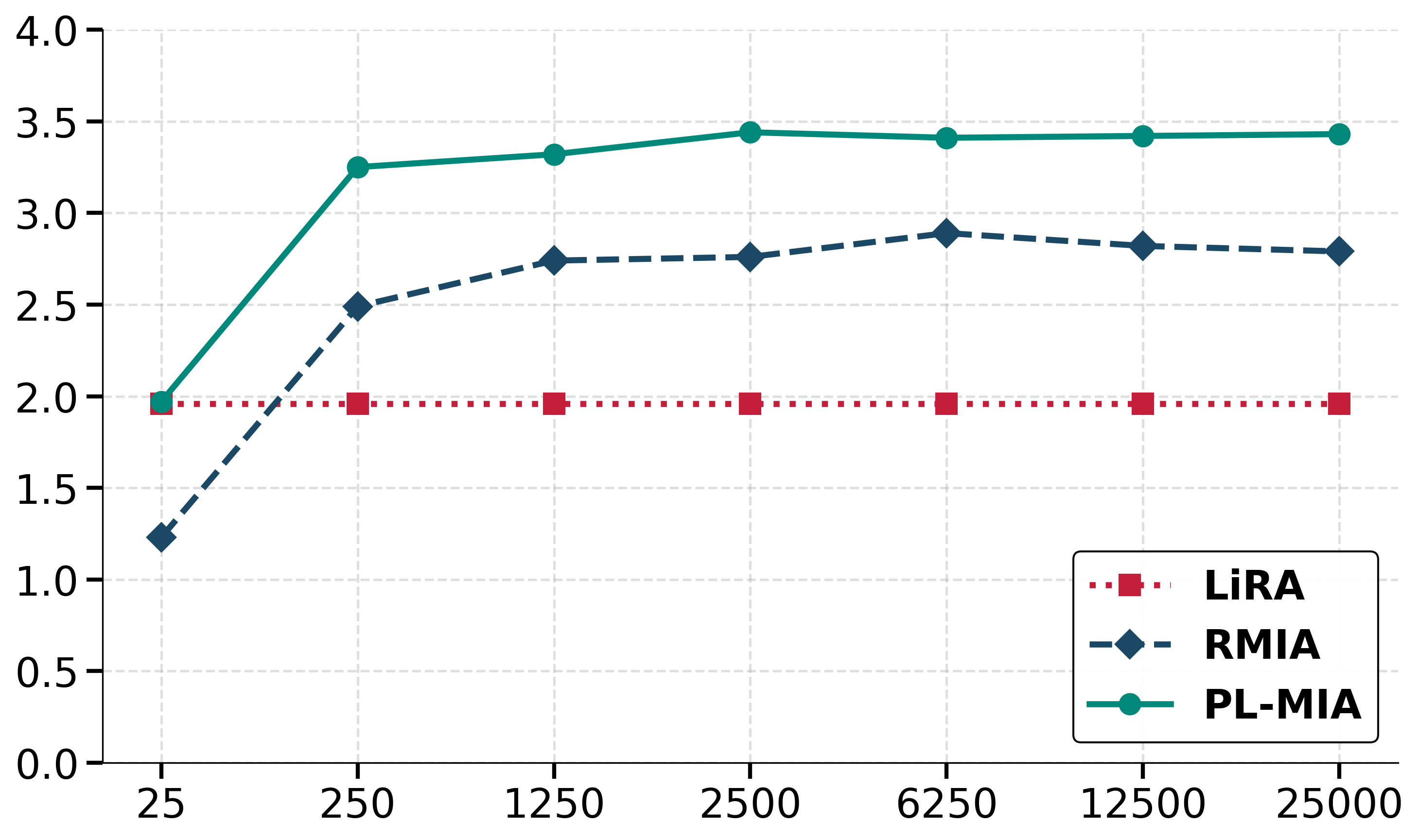}
  \caption{\# population samples}
  \label{fig:ablation-pop}
\end{subfigure}
\\
\end{tabular}
\caption{Cost-performance analysis on \cifar{}. Rows report AUC, TPR@0.01\%FPR, and TPR@0.0\%FPR, while columns vary the number of reference models, data augmentations, and population samples, respectively.}
\label{fig:ablation-3x3}
\end{figure*}

\paragraph*{Number of Reference Models}
We first vary the number of reference models to study their effect on the performance of different attacks.
As shown in \figref{fig:ablation-ref}, all attacks benefit from an increasing number of reference models.
However, \plmia is comparatively sensitive in the smallest-budget regime because both the IN- and OUT-distribution Gaussian parameters must be estimated from only a few reference-model observations. In particular, reliable estimation of the variances requires a sufficiently large reference-model set.
As the reference-model budget increases, these estimates become more stable, and \plmia rapidly closes the gap with \rmia before eventually outperforming the baselines.
An additional comparison under limited reference-model budgets is reported in Section~S2.1 of the Supplementary Material.
Overall, these results indicate that \plmia achieves its peak performance with a sufficient reference-model budget.

\paragraph*{Pooled Variance for Stable Estimation under Limited Reference-Model Budgets}

As discussed in \secref{sec:method}, \plmia estimates the Gaussian parameters $\mu_{\mathrm{in}}, \sigma_{\mathrm{in}}$, $\mu_{\mathrm{out}},$ and $\sigma_{\mathrm{out}}$ in \eqref{eq:gaussian_model} to compute the GLR statistic.
When only a small number of reference models are available, the sample variance estimates can be unstable, thereby degrading attack power.
To mitigate this, we consider a \textit{pooled variance} that uses the information across augmented views of the same point to stabilize variance estimation.
As shown in \tabref{tab:num_ref_pooledaugment}, pooled variance is most beneficial when the reference-model budget is small.
The largest AUC gain occurs at $\#\mathrm{Ref}=4$, where pooled variance increases AUC from $61.71$ to $66.32$.
As the number of reference models increases, the standard variance estimates become sufficiently stable, and the difference between the two variants largely disappears.
This result highlights that incorporating augmentation-level information can partially compensate for limited model-level diversity in computation-constrained settings.

\begin{table*}[ht]
\centering
\small
\renewcommand{\tabcolsep}{0.3pc}
\renewcommand{\arraystretch}{0.5}
\caption{Effect of the reference-model budget on \plmia on \cifar{}, comparing the standard Gaussian variance estimator with the pooled variance estimator.}
\label{tab:num_ref_pooledaugment}
\begingroup
\begin{adjustbox}{width=\linewidth}
\begin{tabular}{llcccccccc}
\toprule
\multirow{2}{*}{Metric} & \multirow{2}{*}{\makecell[l]{Pooled\\variance}} & \multicolumn{8}{c}{\# of Reference Models} \\
\cmidrule(lr){3-10}
 &  & 2 & 4 & 8 & 16 & 32 & 64 & 128 & 254 \\
\midrule

\multirow{2}{*}{AUC}
& w/ & 65.32 & 66.32 & 67.84 & 69.68 & 70.87 & 71.68 & 72.20 & 72.48 \\
& w/o & 64.43 & 61.71 & 66.41 & 69.06 & 70.56 & 71.52 & 72.12 & 72.49 \\
\midrule
\multirow{2}{*}{TPR@0.01\%FPR}
& w/ & 0.71 & 0.98 & 1.12 & 2.00 & 3.19 & 3.88 & 4.46 & 4.40 \\
& w/o & 0.68 & 0.18 & 0.55 & 1.88 & 3.17 & 3.86 & 4.45 & 4.45 \\
\midrule
\multirow{2}{*}{TPR@0.0\%FPR}
& w/ & 0.35 & 0.61 & 0.72 & 1.23 & 2.54 & 2.97 & 3.22 & 3.28 \\
& w/o & 0.21 & 0.11 & 0.20 & 1.11 & 2.44 & 2.86 & 3.25 & 3.43 \\
\bottomrule
\end{tabular}
\end{adjustbox}
\endgroup
\end{table*}

\paragraph*{Number of Data Augmentations}
We then study the effect of the number of data augmentations on attack performance. As shown in \figref{fig:ablation-aug}, all \mias consistently exhibit performance gains across metrics as the augmentation budget increases.
\plmia consistently outperforms both \lira and \rmia across all considered augmentation budgets.
At TPR@0.0\%FPR, the performance gap becomes more pronounced with additional augmented views, indicating that \plmia effectively aggregates membership signals across multiple augmented views of the same point.
We further compare strategies for aggregating these augmented signals in Section~S2.2 of the Supplementary Material and show that mean aggregation yields the strongest overall performance.

\paragraph*{Number of Population Points}
\figref{fig:ablation-pop} shows the effect of the number of population points $|Z|$ used for pairwise comparison on the attack performance.
In terms of AUC, both \rmia and \plmia improve as $|Z|$ increases, indicating that a larger population set provides more reliable population calibration. \plmia consistently outperforms \rmia and gradually approaches the strong AUC performance of \lira as more population points become available.
The effect of $|Z|$ is more pronounced at critical low-FPR operating points.
With a limited number of population points, such as $|Z|=25$, the discrete voting mechanism used by \rmia becomes unstable and performs worse than \lira at both TPR@0.01\%FPR and TPR@0.0\%FPR.
In contrast, \plmia remains competitive with \lira in this low-$|Z|$ regime and surpasses both baselines once more population points are available.
These results support our theoretical insight that continuous $p$-value aggregation is more robust to limited population-calibration data than discrete voting-based aggregation.

\begin{figure}[!htbp]
    \centering
    \begin{subfigure}{0.7\linewidth}
        \centering
        \includegraphics[width=\linewidth]{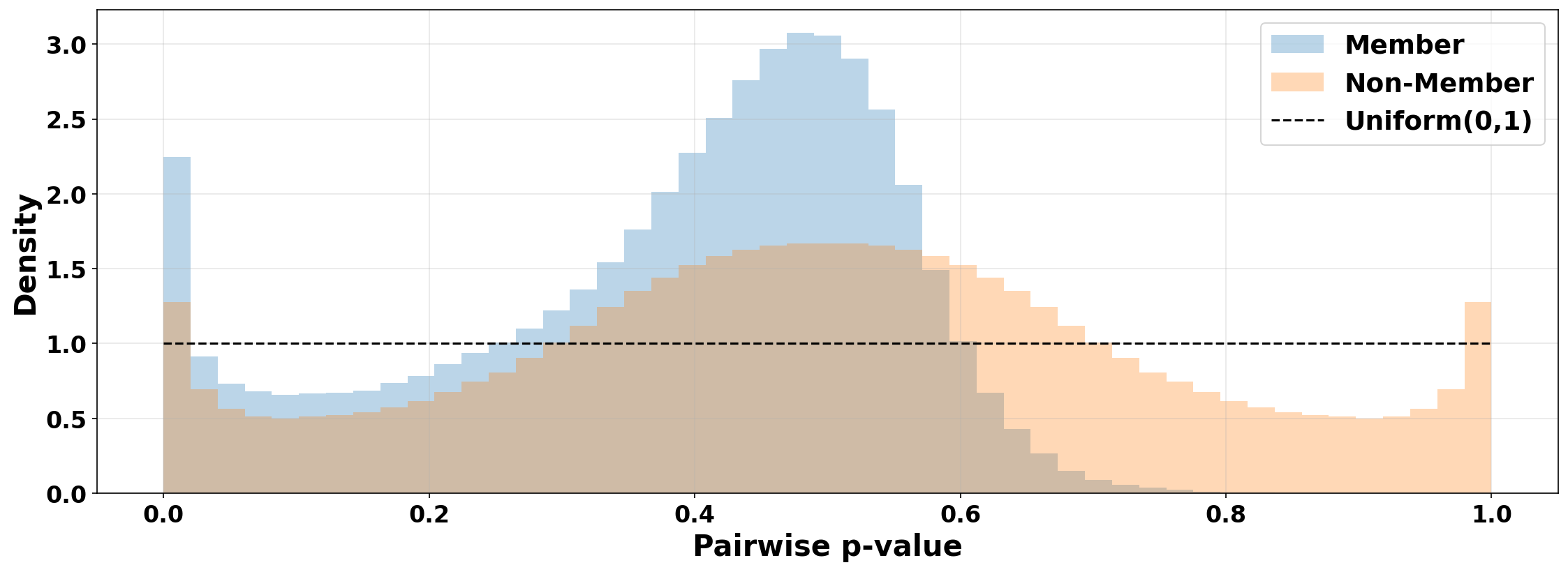}
        \caption{CIFAR-10}
        \label{fig:pvalue_cifar10}
    \end{subfigure}

    \begin{subfigure}{0.7\linewidth}
        \centering
        \includegraphics[width=\linewidth]{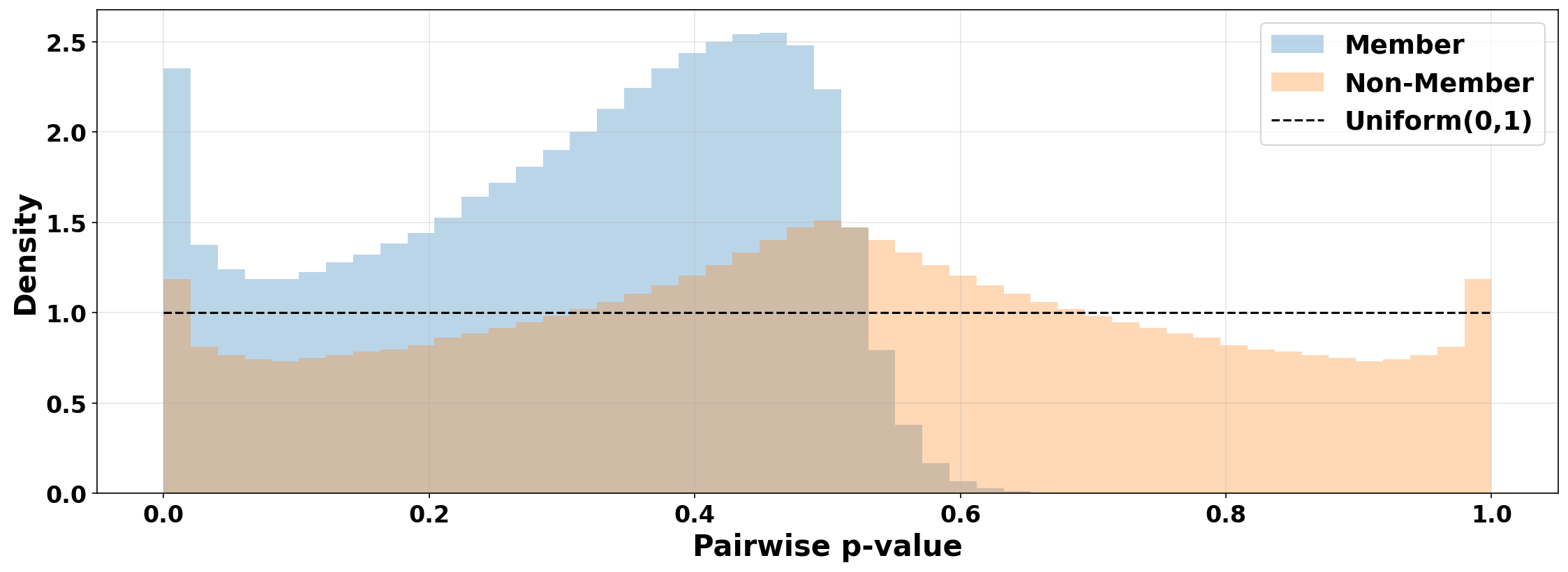}
        \caption{CIFAR-100}
        \label{fig:pvalue_cifar100}
    \end{subfigure}

    \begin{subfigure}{0.7\linewidth}
        \centering
        \includegraphics[width=\linewidth]{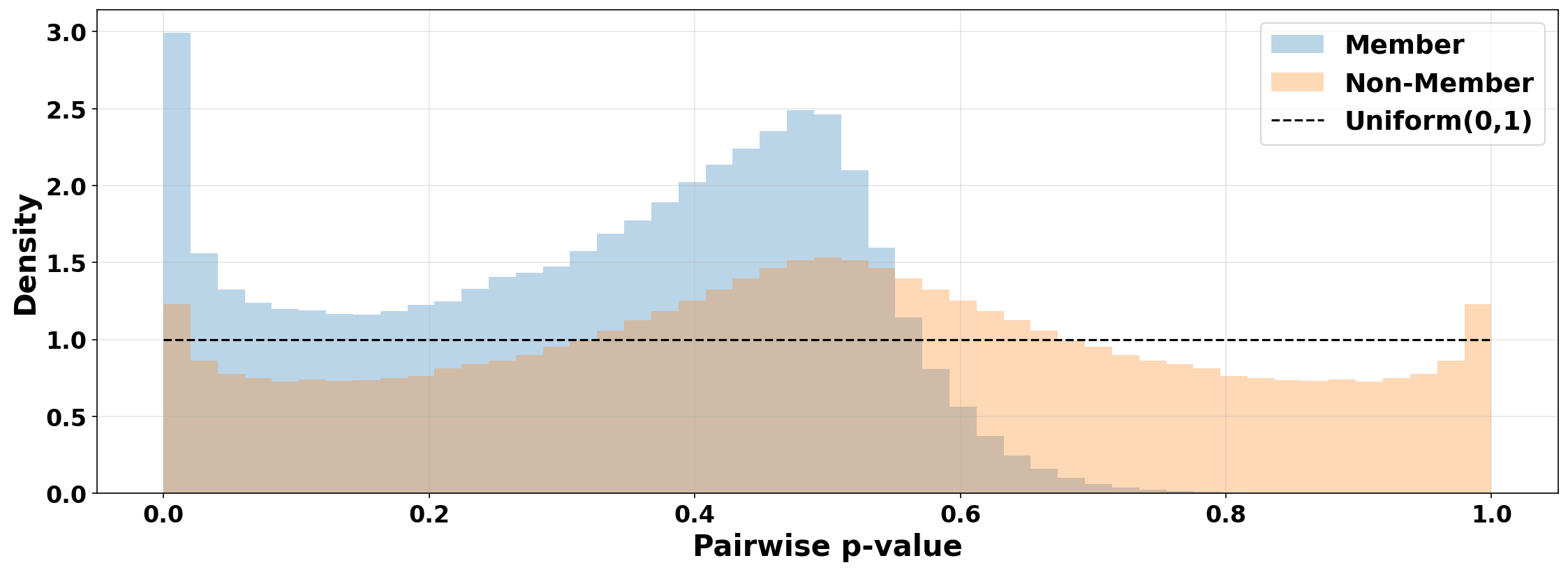}
        \caption{CINIC-10}
        \label{fig:pvalue_cinic10}
    \end{subfigure}

    \caption{Empirical distribution of pairwise $p$-values for member and non-member queries across datasets. The dashed line indicates the density of the uniform distribution $\mathrm{Uniform}(0,1)$, which serves as the idealized reference distribution under $H_0$.}
    \label{fig:pvalue_distribution}
\end{figure}

\subsubsection{Statistical Properties of \plmia}\label{sec:exp_property}

\paragraph*{Validity of Pairwise \texorpdfstring{$p$}{p}-values}
We examine the empirical distributions of pairwise $p$-values on \cifar, \cifarb, and \cinic.
As shown in \figref{fig:pvalue_distribution}, the pairwise $p$-values for non-members are relatively flat and close to the reference uniform density, with mild boundary deviations caused by finite-sample estimation and the Gaussian approximation.
By contrast, members show a clear departure from uniformity and assign more probability mass to small $p$-values.
These results support the validity of the proposed pairwise $p$-values as directional, continuous measures of membership evidence for the subsequent Cauchy combination.
The mild deviations from the idealized null motivate the finite-sample FPR analysis below.

\paragraph*{Empirical FPR Control with the Analytic Decision Threshold}

Under the ideal exact-Cauchy null, the Cauchy-combined score yields the analytic decision threshold
\begin{equation}
    \tau_\alpha = \tan\{\pi(0.5-\alpha)\},
\end{equation}
where $\alpha$ is the nominal FPR level. Exact agreement between the nominal and realized FPR requires that each pairwise $p$-value $p(q,z)$ follow $\mathrm{Uniform}(0,1)$ under $H_0$ and that the aggregated pairwise $p$-values be mutually independent. These conditions are only approximately satisfied in our setting.
For a fixed query point $q$, all pairwise $p$-values share the same query statistic
$\phi_{\mathrm{PL}}^{(q)}$ and are therefore dependent.
Moreover, \figref{fig:pvalue_distribution} shows that the $p$-values for non-members exhibit mild boundary deviations from uniformity.
The exact-Cauchy null should therefore be interpreted as an idealized reference distribution rather than the exact finite-sample null distribution.

\begin{figure}[!htbp]
    \centering
    \includegraphics[width=0.5\linewidth]{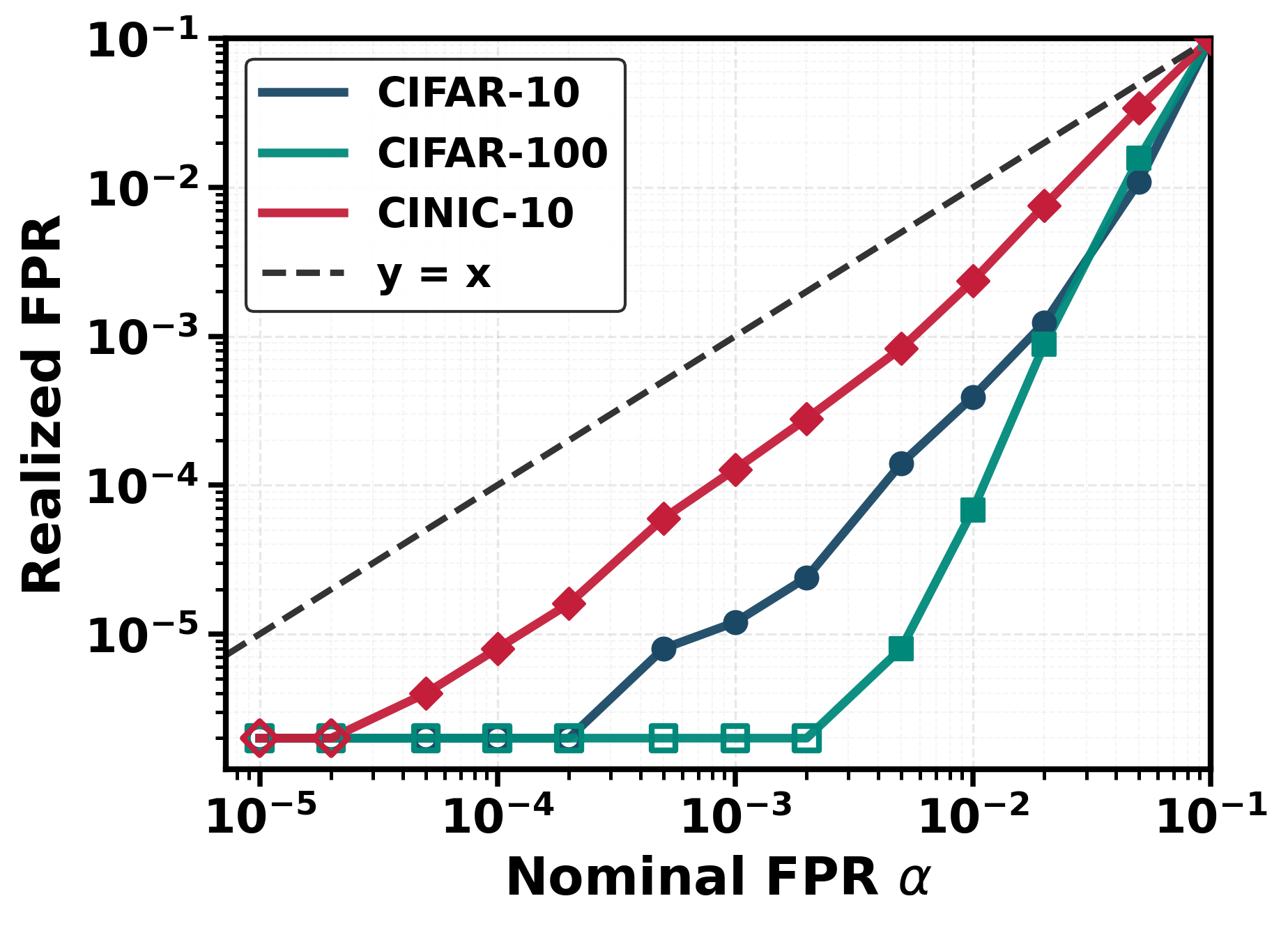}
    \caption{Nominal versus realized FPR under the analytic decision threshold. The dashed diagonal denotes exact agreement between the nominal and realized FPR.}
    \label{fig:cauchy_fpr_agreement}
\end{figure}

To assess the finite-sample behavior of the analytic threshold, we compare the nominal FPR with the empirically realized FPR in \figref{fig:cauchy_fpr_agreement}.
Let $Q_{\mathrm{out}}$ denote the non-member audit queries. For each nominal level $\alpha$, the realized FPR is computed as
\begin{equation}
    \widehat{\mathrm{FPR}}(\alpha) = \frac{1}{|Q_{\mathrm{out}}|} \sum_{q\in Q_{\mathrm{out}}}
    \mathbb{I}(\mathrm{Score}_{\mathrm{PL}}(q)\geq \tau_\alpha).
\end{equation}
The dashed diagonal line $y=x$ represents exact agreement, where the realized FPR equals the nominal level.
As shown in \figref{fig:cauchy_fpr_agreement}, the realized FPR remains below the nominal FPR for all three datasets.
Thus, the analytic decision threshold is conservative in our finite-sample experiments.
The deviation from exact agreement becomes more pronounced in the extreme low-FPR regime.
For example, at $\alpha=0.01\%$, the realized FPR is already below the nominal level, meaning that the analytic threshold flags fewer non-members than a finite-sample threshold that attains the nominal FPR exactly.
As $\alpha$ approaches zero, the ideal exact-Cauchy threshold diverges, whereas an empirical threshold yielding zero observed false positives remains finite for a finite non-member audit set.

Overall, the finite-sample departures from the Cauchy null occur in a conservative direction: the analytic threshold flags fewer non-members than the nominal FPR permits.
This behavior is consistent with the tail-validity result of \cite{cct}, which supports Cauchy-combination tail control under dependence.
Although the finite-sample pairwise $p$-values are not exactly uniform and the transformed terms are dependent, the empirical results show that the analytic threshold provides conservative low-FPR control in the evaluated settings.

\subsubsection{Generalization to Tabular Data and Different Model Families}
\label{sec:exp_tabular_data_tree}

We further evaluate \plmia's generalizability to tabular data and different model families.
Specifically, we consider Gradient Boosting Decision Tree (GBDT) models~\citep{gbdt} on the tabular Purchase-100 dataset to examine whether \plmia extends beyond image data and neural networks to tabular data and tree-based models.
We train GBDT models with $250$ estimators, a learning rate of $0.1$, and a subsampling rate of $0.2$.
We vary the maximum tree depth over ${3,5,7}$ and use the true-label prediction probability as the attack signal.
For each setting, all attacks follow the same member/non-member evaluation protocol and use the same sets of target and reference models.

\begin{table}[htbp]
\centering
\small
\renewcommand{\tabcolsep}{0.3pc}
\renewcommand{\arraystretch}{0.5}
\caption{Attack performance on Purchase-100 using GBDT models with different maximum depths. The maximum depth controls the complexity of the tree-based statistical model. Results are averaged over target models and reported as mean $\pm$ standard deviation.}
\label{tab:purchase_gbdt}
\begin{tabular}{c|l|ccc}
\toprule
\texttt{max\_depth} & Method & AUC & TPR@0.01\%FPR & TPR@0\%FPR \\
\midrule
\multirow{3}{*}{3}
 & PL-MIA & $\mathbf{94.36 \pm 0.08}$ & $\mathbf{6.20 \pm 2.01}$ & $\mathbf{3.31 \pm 1.65}$ \\
 & LiRA   & $93.79 \pm 0.10$ & $1.34 \pm 0.68$ & $0.42 \pm 0.38$ \\
 & RMIA   & $94.23 \pm 0.08$ & $4.66 \pm 1.35$ & $2.45 \pm 1.44$ \\
\midrule
\multirow{3}{*}{5}
 & PL-MIA & $\mathbf{98.71 \pm 0.03}$ & $\mathbf{23.62 \pm 4.75}$ & $\mathbf{15.53 \pm 5.71}$ \\
 & LiRA   & $98.47 \pm 0.04$ & $3.86 \pm 1.79$ & $1.29 \pm 1.28$ \\
 & RMIA   & $98.61 \pm 0.03$ & $19.66 \pm 3.06$ & $12.48 \pm 4.80$ \\
\midrule
\multirow{3}{*}{7}
 & PL-MIA & $\mathbf{99.14 \pm 0.02}$ & $\mathbf{31.25 \pm 1.74}$ & $\mathbf{21.86 \pm 5.90}$ \\
 & LiRA   & $98.93 \pm 0.02$ & $3.83 \pm 1.94$ & $0.81 \pm 0.64$ \\
 & RMIA   & $99.04 \pm 0.03$ & $23.11 \pm 2.40$ & $12.41 \pm 5.92$ \\
\bottomrule
\end{tabular}
\end{table}

\tabref{tab:purchase_gbdt} and \figref{fig:purchase100_gbdt} show that \plmia remains effective across GBDT models of different depths.
Attack performance improves substantially as \texttt{max\_depth} grows, indicating that deeper GBDT models expose more distinguishable membership evidence.
For all depths, \plmia achieves the highest AUC and the strongest performance under stringent FPR constraints among the compared attacks.
The improvement is especially pronounced at TPR@0\%FPR: when \texttt{max\_depth}=7, \plmia reaches $21.86$, exceeding \rmia{}'s $12.41$ by $76.15\%$.
We also evaluate Multi-Layer Perceptron (MLP) models on Purchase-100, with the results reported in Section~S2.4 of the Supplementary Material.
Together, these experiments demonstrate that \plmia generalizes across data modalities and model families, from image data with neural networks to tabular data with both neural and tree-based models.

\begin{figure}[!htb]
    \centering
    \includegraphics[width=0.9\linewidth]{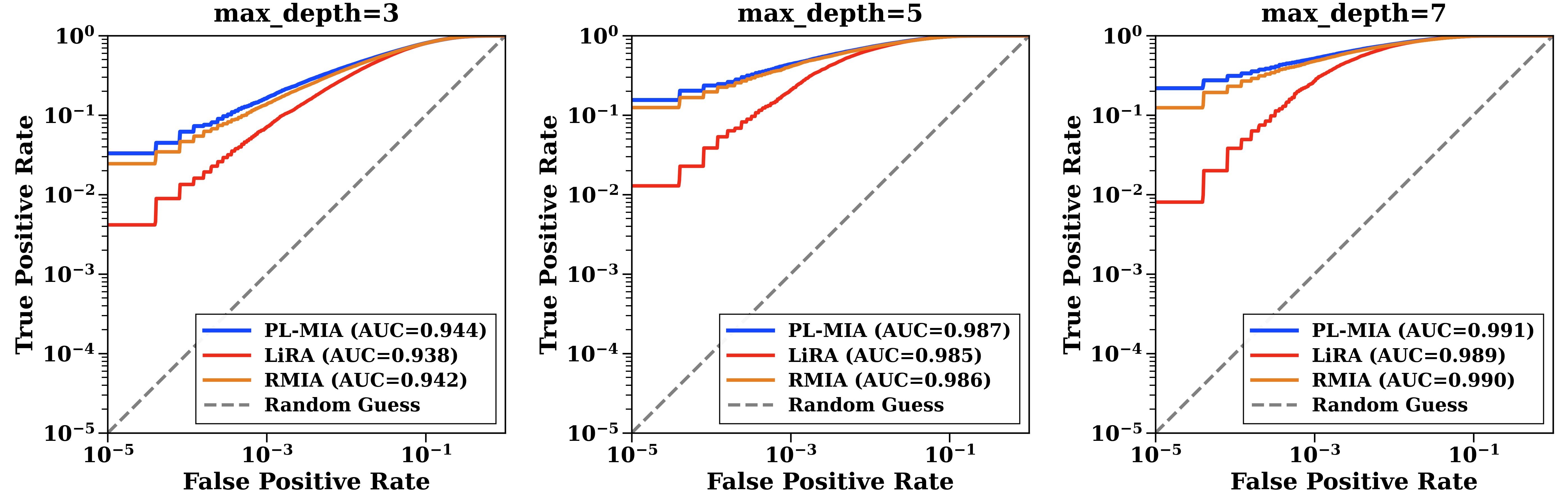}
    \caption{ROC curves on Purchase-100 with GBDT models. Each panel corresponds to one maximum-depth setting and compares \plmia against attack baselines.}
    \label{fig:purchase100_gbdt}
\end{figure}

\section{Conclusion}\label{sec:conclusion}

Our findings indicate that the low-FPR gains of PL-MIA arise from the interaction of three statistical mechanisms.
First, the Gaussian likelihood ratio uses both the mean shift and the variance contraction between the IN and OUT distributions. The latter represents an additional membership signal because models tend to exhibit lower predictive uncertainty on training members.
Second, population calibration treats query-specific heterogeneity as a nuisance effect. Comparing a query point with held-out non-members places its statistic on a relative scale, reducing the likelihood that an intrinsically easy non-member receives a high membership score.
Third, continuous $p$-values and the Cauchy combination preserve more information than binary voting. Individual pairwise comparisons may be weak or noisy, but combination testing can synthesize them into an overall assessment while allowing a small number of highly informative comparisons to contribute substantial evidence~\citep{liu2019acat,cct}.
PL-MIA primarily improves separation in the tail of the non-member distribution, which is the region most relevant to reliable privacy auditing.

Several extensions could strengthen the statistical foundation and practical scope of PL-MIA.
An important problem is the optimal allocation of a fixed computational budget among reference models, population points, data augmentations, and audit queries.
For large-scale models, independently training many reference models may be infeasible.
Training checkpoints, smaller proxy models, or token- and representation-level reference distributions may provide alternative sources of calibration information~\citep{informia,hayes2025exploring}.
Together, these directions could extend the proposed framework from benchmark attack comparison toward reliable, uncertainty-aware, and reproducible privacy auditing.

{}

\bigskip
 \begin{center}
 \textbf{\large Appendices: Supplementary Materials}
 \end{center}
 
The supplementary materials are organized as follows.
\appref{app:setup} provides additional details on the model-training protocol and hyperparameters for the image-dataset experiments.
\appref{app:result} presents supplementary experiments on limited reference-model budgets, augmentation aggregation strategies, the Gaussian assumption, and MLP models on Purchase-100.
\appref{app:proof} presents detailed proofs of the theoretical results in Section~4 of the main manuscript.

 \appendix
\section{Model Training Protocol and Hyperparameters}
\label{app:setup}

We describe the overall experimental setup, including the datasets, attack modes, baselines, and evaluation metrics in  Section~5.1 of the main manuscript.
Here, we provide additional details on the model-training protocol and hyperparameters for the image-dataset experiments.
Following prior work~\citep{lira,rmia}, each target and reference model is trained on a $50\%$ subset of the full dataset.
For each target model, the training subset defines the member set, while the remaining half is held out as non-members.
Each reference model is trained on an independently sampled $50\%$ subset, so that each query point is included in approximately half of the reference models, yielding approximately balanced IN and OUT reference sets.

For CIFAR-10, CIFAR-100, and CINIC-10, we use Wide-ResNet as the backbone for both target and reference models.
All models are trained using stochastic gradient descent (SGD) with cosine learning-rate decay and a brief linear warm-up over the first $1\%$ of the training epochs.
We do not apply data augmentation during model training.
\tabref{tab:training_protocol} summarizes the training hyperparameters and the average training/test accuracy across target models.

\begin{table}[htbp]
\centering
\caption{Training hyperparameters and average training/test accuracy (\%) for the image-dataset experiments.}
\label{tab:training_protocol}
\resizebox{\linewidth}{!}{%
\begin{tabular}{lccccccc}
\toprule
Dataset & Epoch & Batch size &
Learning rate & Momentum &
Weight decay & Train Acc. & Test Acc.\\
\midrule
\cifar  & 100 & 256 & 0.1 & 0.9 & $5\times10^{-4}$ & 100.0 & 91.7 \\
\cifarb & 100 & 256 & 0.1 & 0.9 & $5\times10^{-4}$ & 99.9  & 67.1 \\
\cinic  & 100 & 256 & 0.1 & 0.9 & $5\times10^{-4}$ & 99.7  & 78.0 \\
\bottomrule
\end{tabular}
}
\end{table}

\section{Supplementary Experimental Results}\label{app:result}

This section presents additional experimental results related to the analyses in Section~5.3 of the main manuscript.

\subsection{Performance under Limited Reference-Model Budgets}

\begin{table}[!htbp]
\centering
\small
\renewcommand{\tabcolsep}{0.3pc}
\renewcommand{\arraystretch}{0.5}
\caption{Attack performance on CIFAR-100 under different reference-model budgets. Here, \# Ref denotes the total number of reference models; \# Ref $=4$ corresponds to two IN and two OUT reference models per query point. ``PL-MIA (pool var.)'' denotes the pooled-variance variant of PL-MIA, which uses a pooled variance estimate to stabilize Gaussian modeling when only a small number of reference models is available. Results are averaged over target models and reported as mean $\pm$ standard deviation.}
\label{tab:cifar100_num_ref_results}
\begin{tabular}{c l|ccc}
\toprule
\# Ref & Attack & AUC & \multicolumn{2}{c}{TPR@FPR} \\
\cmidrule(lr){4-5}
 &  &  & 0.01\% & 0.0\% \\
\midrule
\multirow{4}{*}{4}
 & LiRA              & $87.74 \pm 0.12$ & $3.71 \pm 1.17$ & $1.49 \pm 0.94$ \\
 & RMIA              & $\mathbf{89.54 \pm 0.13}$ & $\mathbf{3.84 \pm 1.76}$ & $\mathbf{2.15 \pm 1.74}$ \\
 & PL-MIA            & $67.55 \pm 0.15$ & $0.01 \pm 0.01$ & $0.00 \pm 0.01$ \\
 & PL-MIA (pool var.) & $86.03 \pm 0.14$ & $0.68 \pm 0.27$ & $0.21 \pm 0.23$ \\
\midrule
\multirow{4}{*}{8}
 & LiRA              & $89.27 \pm 0.13$ & $6.07 \pm 1.44$ & $2.88 \pm 1.65$ \\
 & RMIA              & $\mathbf{90.19 \pm 0.15}$ & $\mathbf{6.97 \pm 1.36}$ & $\mathbf{4.28 \pm 2.52}$ \\
 & PL-MIA            & $85.76 \pm 0.20$ & $0.12 \pm 0.09$ & $0.06 \pm 0.06$ \\
 & PL-MIA (pool var.) & $88.92 \pm 0.14$ & $1.96 \pm 0.84$ & $0.73 \pm 0.32$ \\
\midrule
\multirow{4}{*}{16}
 & LiRA              & $90.29 \pm 0.14$ & $7.41 \pm 1.80$ & $\mathbf{4.39 \pm 1.90}$ \\
 & RMIA              & $\mathbf{90.55 \pm 0.14}$ & $\mathbf{10.82 \pm 1.10}$ & $4.38 \pm 3.69$ \\
 & PL-MIA            & $90.06 \pm 0.17$ & $4.32 \pm 0.86$ & $2.16 \pm 1.17$ \\
 & PL-MIA (pool var.) & $90.49 \pm 0.15$ & $6.00 \pm 1.82$ & $3.16 \pm 1.88$ \\
\midrule
\multirow{4}{*}{32}
 & LiRA              & $90.90 \pm 0.10$ & $9.91 \pm 2.26$ & $6.07 \pm 3.40$ \\
 & RMIA              & $90.78 \pm 0.13$ & $\mathbf{11.50 \pm 2.21}$ & $5.63 \pm 5.36$ \\
 & PL-MIA            & $91.24 \pm 0.14$ & $10.29 \pm 2.45$ & $6.18 \pm 2.50$ \\
 & PL-MIA (pool var.) & $\mathbf{91.33 \pm 0.13}$ & $11.12 \pm 2.41$ & $\mathbf{6.84 \pm 2.33}$ \\
\midrule
\multirow{4}{*}{64}
 & LiRA              & $91.22 \pm 0.12$ & $10.67 \pm 1.98$ & $7.17 \pm 3.63$ \\
 & RMIA              & $90.89 \pm 0.14$ & $11.80 \pm 3.06$ & $6.61 \pm 5.53$ \\
 & PL-MIA            & $91.73 \pm 0.14$ & $13.15 \pm 2.89$ & $10.09 \pm 2.95$ \\
 & PL-MIA (pool var.) & $\mathbf{91.75 \pm 0.14}$ & $\mathbf{13.43 \pm 2.69}$ & $\mathbf{10.35 \pm 2.49}$ \\
\bottomrule
\end{tabular}
\end{table}

To complement the cost analysis in Section~5.3.2 of the main manuscript, we further compare attacks on CIFAR-100 under limited reference-model budgets.
We start with $\#\mathrm{Ref}=4$, which corresponds to approximately two IN and two OUT reference models per query point.
We do not consider $\#\mathrm{Ref}=2$, since this leaves only one IN and one OUT reference model per query point on average, precluding reliable estimation of the corresponding distributions.

\tabref{tab:cifar100_num_ref_results} reveals a clear budget-dependent trade-off.
When the reference-model budget is extremely limited ($\#\mathrm{Ref}\in\{4,8,16\}$), \rmia achieves the strongest low-FPR performance, whereas the standard \plmia is more sensitive to unstable Gaussian variance estimates.
Pooling variance across augmented views substantially stabilizes \plmia in this regime. The largest gain occurs at $\#\mathrm{Ref}=4$, where the pooled variance increases AUC from $67.55$ to $86.03$.

As the reference-model budget increases, \plmia rapidly closes the gap.
At \#Ref $=32$, \plmia surpasses \rmia in AUC and TPR@0.0\%FPR while remaining competitive at TPR@0.01\%FPR.
At \#Ref $=64$, both the standard and pooled-variance variants outperform \rmia across all reported metrics.
These results indicate that \rmia is preferable under extremely limited reference-model budgets, whereas \plmia becomes more effective once the Gaussian parameters can be estimated with sufficient stability. Pooled variance partially mitigates this requirement in low-budget settings.

\subsection{Effect of Augmentation Aggregation Strategies}

In addition to the augmentation-budget analysis in Section~5.3.2 of the main manuscript, we further investigate how membership signals should be aggregated across multiple augmented views.
Given $K$ augmented views for each query $q$ and a population point $z$, we compute augmentation-specific pairwise differences and aggregate them into a single pairwise signal before converting it into membership evidence.
We compare three aggregation strategies: \emph{Mean}, \emph{Median}, and \emph{Voting}~\citep{rmia}, where \emph{Voting} records the fraction of augmented comparisons for which $q$ appears more member-like than $z$.

\begin{table}[!htbp]
\centering
\small
\renewcommand{\tabcolsep}{0.3pc}
\renewcommand{\arraystretch}{0.5}
\caption{Effect of augmentation aggregation strategies on attack performance. \emph{Mean} and \emph{Median} aggregate the augmentation-specific pairwise differences, whereas \emph{Voting} records the fraction of positive pairwise comparisons.}
\label{tab:aug_agg_cifar10}
\begin{tabular}{l|ccc}
\toprule
Aggregation & AUC & \multicolumn{2}{c}{TPR@FPR} \\
\cmidrule(lr){3-4}
 &  & 0.01\% & 0.0\% \\
\midrule
Voting & 71.71 & \textbf{4.49} & 3.06 \\
Median & 72.39 & 4.39 & 3.38 \\
Mean   & \textbf{72.49} & 4.45 & \textbf{3.43} \\
\bottomrule
\end{tabular}
\end{table}

As shown in \tabref{tab:aug_agg_cifar10}, \emph{Mean} aggregation provides the strongest overall performance. It achieves the highest AUC and TPR@0\%FPR, while \emph{Voting} is only marginally higher at TPR@0.01\%FPR ($4.49$ versus $4.45$). \emph{Median} aggregation remains competitive but does not improve the overall trade-off.
These results indicate that retaining the magnitude of augmentation-specific signals through averaging is generally more effective than using a robust summary or discretizing each comparison into a binary vote. We therefore use mean aggregation throughout the experiments.

\subsection{Gaussian Assumption Diagnostics and Robustness}

\plmia models the per-point IN and OUT reference distributions using Gaussian approximations, as specified in Assumption~2.2 of the main manuscript.
Although this assumption makes the likelihood-ratio statistic analytically tractable, it may not hold exactly for every query point.
We therefore examine both the empirical plausibility of the Gaussian approximation and \plmia's robustness to departures from Gaussianity.

For each query point $q$, we collect the logit-scaled true-label confidence scores from reference models that include $q$ in their training sets and from those that exclude it, forming the empirical IN and OUT distributions, respectively.
Since our experiments use 254 reference models, each distribution contains 127 observations.
We apply the Shapiro-Wilk normality test~\citep{sw_test} separately to the IN and OUT distributions.
For observations $r_1,\ldots,r_n$, the Shapiro-Wilk statistic is
\begin{equation}
    W = \frac{\left(\sum_{i=1}^{n} \omega_i r_{(i)}\right)^2}{\sum_{i=1}^{n}(r_i-\frac{1}{n}\sum_{i=1}^n r_i)^2},
\end{equation}
where $r_{(i)}$ denotes the $i$-th order statistic and $\omega_i$ denotes the corresponding Shapiro-Wilk weights.
A value of $W$ closer to one indicates stronger agreement with a Gaussian distribution, while a small associated $p$-value provides evidence against the normality hypothesis.

\begin{figure}[!htbp]
    \centering
    \includegraphics[width=\linewidth]{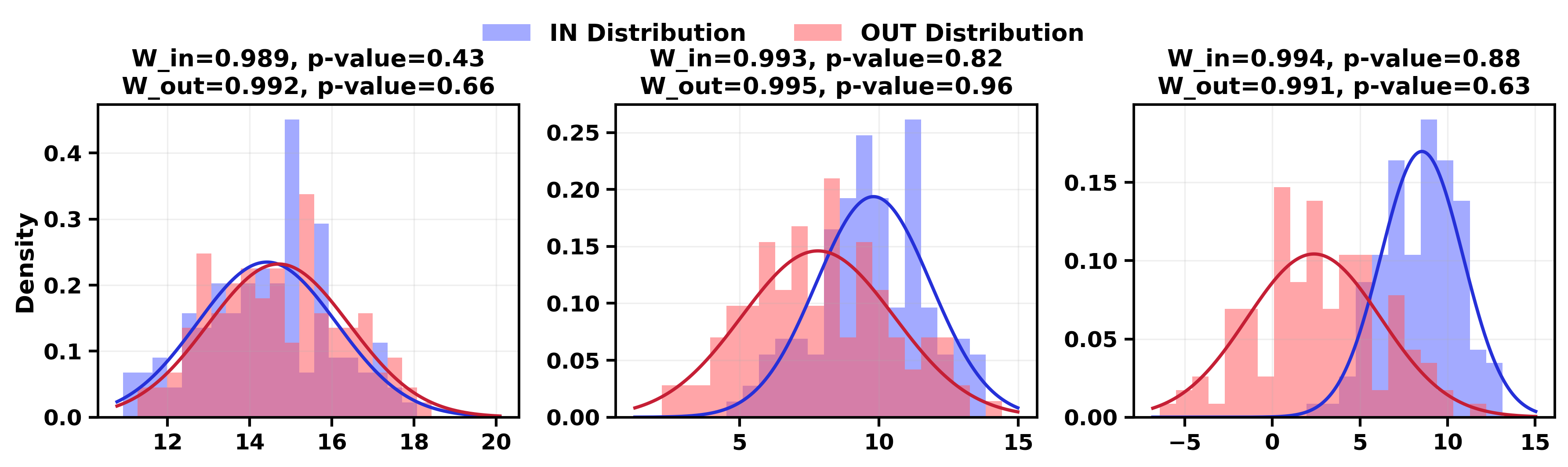}
    \caption{Representative per-point normality diagnostics on CIFAR-10. Each panel shows the empirical IN and OUT distributions of logit-scaled true-label confidence scores, with fitted Gaussian densities overlaid. The annotations report the Shapiro--Wilk statistics and corresponding $p$-values for the two reference distributions.}
    \label{fig:per_example_gaussian}
\end{figure}

As illustrated in \figref{fig:per_example_gaussian}, the empirical histograms are generally aligned with the fitted Gaussian densities, supporting the use of the Gaussian approximation as a practical model for the reference distributions.
However, the degree of agreement varies across query points. To quantify the effect of this variation, we define the Gaussianity score as
\begin{equation}
    G(q)=\min\{W_{\mathrm{in}}(q), W_{\mathrm{out}}(q)\},
\end{equation}
where $W_{\mathrm{IN}}(q)$ and $W_{\mathrm{OUT}}(q)$ are the Shapiro-Wilk statistics for the IN and OUT distributions of $q$, respectively.
This score is conservative: a query point receives a high Gaussianity score only when both its IN and OUT distributions are close to Gaussian.
We sort the audited query points by $G(q)$ and partition them into three equal-sized groups: the most Gaussian third, the middle third, and the least Gaussian third.
We then evaluate \plmia separately on each group.

\begin{table}[!htbp]
\centering
\small
\renewcommand{\tabcolsep}{0.3pc}
\renewcommand{\arraystretch}{0.5}
\caption{\plmia performance under different degrees of Gaussianity. Query points are partitioned into three equal-sized groups by the Gaussianity score $G(q)=\min{W_{\mathrm{in}}(q), W_{\mathrm{out}}(q)}$, where $W_{\mathrm{in}}$ and $W_{\mathrm{out}}$ are the Shapiro-Wilk statistics of the IN and OUT reference distributions. Results are averaged over target models and reported as mean $\pm$ standard deviation.}
\label{tab:plmia_gaussian_robust}
\begin{tabular}{l|ccc}
\toprule
Group & AUC & TPR@1\%FPR & TPR@0.1\%FPR \\
\midrule
Most Gaussian  & $\mathbf{73.03 \pm 0.58}$ & $\mathbf{17.48 \pm 0.66}$ & $\mathbf{9.62 \pm 0.75}$ \\
Middle third   & $72.73 \pm 0.40$ & $16.14 \pm 0.80$ & $7.94 \pm 0.99$ \\
Least Gaussian & $71.68 \pm 0.31$ & $14.19 \pm 0.60$ & $7.45 \pm 0.65$ \\
\bottomrule
\end{tabular}
\end{table}

\tabref{tab:plmia_gaussian_robust} shows that attack performance degrades gradually rather than collapsing as the reference distributions become less Gaussian.
On the most Gaussian third, \plmia achieves the best performance, with an AUC of $73.03$, TPR@1\%FPR of $17.48$, and TPR@0.1\%FPR of $9.62$.
The performance degradation is mild across the three groups.
Even on the least Gaussian third, \plmia still achieves an AUC of $71.68$, a TPR@1\%FPR of $14.19$, and a TPR@0.1\%FPR of $7.45$.
Thus, moving from the most Gaussian group to the least Gaussian group reduces AUC by only $1.35$ percentage points, while the low-FPR TPR remains meaningful.

These results suggest that moderate departures from Gaussianity do not prevent \plmia from extracting useful membership signal.
The Gaussian approximation is used locally to estimate the pointwise IN/OUT likelihood ratio, whereas the final score aggregates evidence over many population comparisons; this aggregation may reduce the impact of local distributional misspecification.
We therefore view Assumption~2.2 of the main manuscript as a practical modeling approximation rather than a strict empirical requirement.
Replacing the Gaussian estimator with more flexible parametric or nonparametric density estimators is a natural direction for future work.

\subsection{Multi-Layer Perceptron on Purchase-100}

To test whether \plmia generalizes to tabular data while the target model remains a neural network, we further evaluate \plmia on Purchase-100 using a Multi-Layer Perceptron (MLP)~\citep{mlp}.
This experiment complements the GBDT results in Section~5.3.4 of the main manuscript, which evaluates \plmia on a tree-based model.
We use a four-layer MLP with hidden-layer sizes $[512, 256, 128, 64]$, trained on 25k samples for 50 epochs.
All attacks follow the same member/non-member evaluation protocol and use the same sets of target and reference models as in the main experiments.

\begin{table}[!htbp]
\centering
\small
\renewcommand{\tabcolsep}{0.3pc}
\renewcommand{\arraystretch}{0.5}
\caption{Attack performance on Purchase-100 using the four-layer MLP model. Results are averaged over target models and reported with standard deviations.}
\label{tab:purchase_mlp}
\begin{tabular}{l|ccc}
\toprule
Method & AUC & TPR@0.01\%FPR & TPR@0\%FPR \\
\midrule
PL-MIA & $\mathbf{84.23 \pm 0.33}$ & $\mathbf{3.70 \pm 1.01}$ & $\mathbf{2.07 \pm 0.90}$ \\
LiRA   & $83.23 \pm 0.30$ & $1.95 \pm 0.36$ & $0.83 \pm 0.28$ \\
RMIA   & $83.71 \pm 0.30$ & $3.20 \pm 0.56$ & $1.59 \pm 0.64$ \\
\bottomrule
\end{tabular}
\end{table}

\begin{figure}[htb]
    \centering
    \includegraphics[width=0.4\linewidth]{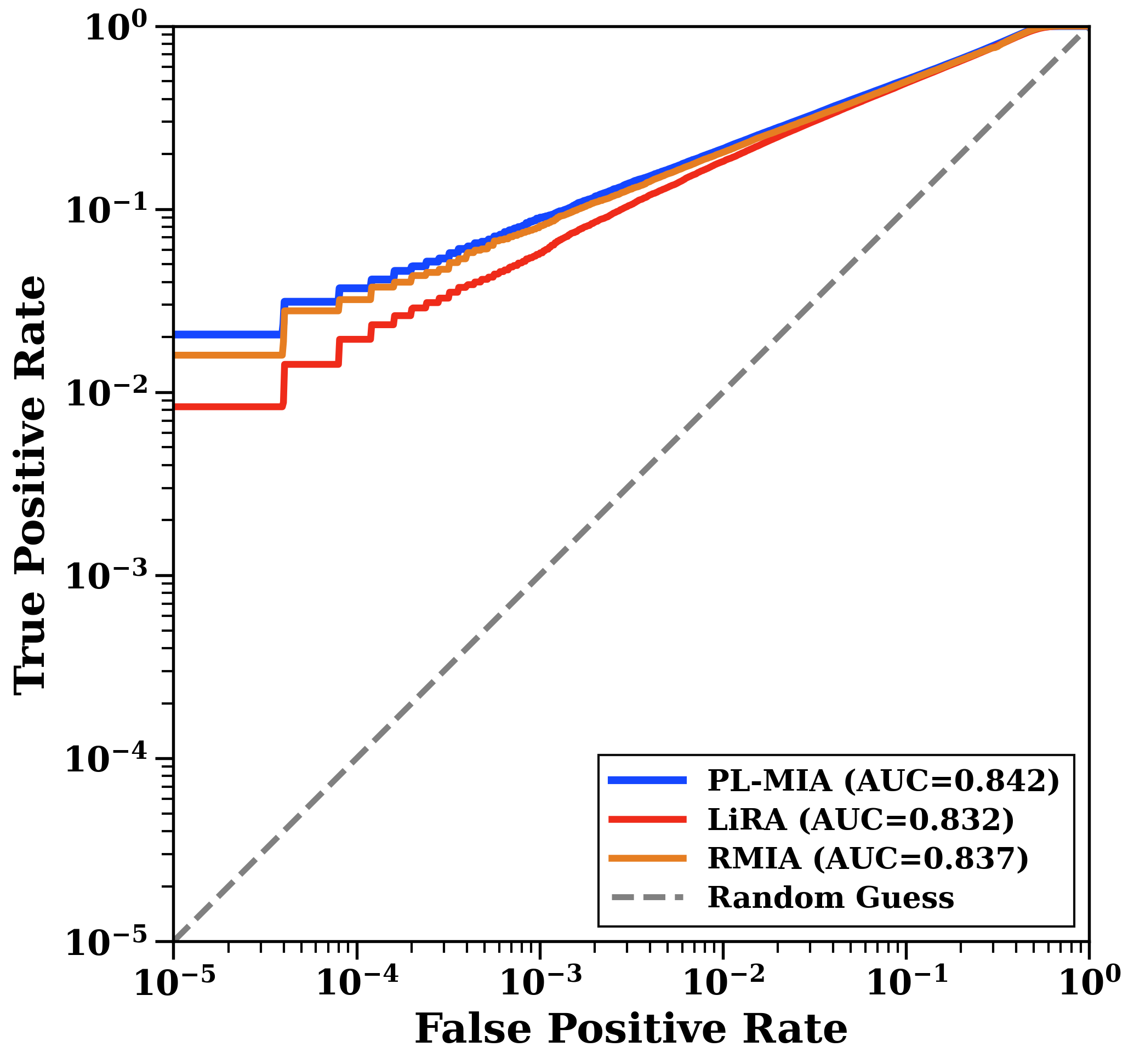}
    \caption{ROC curves on Purchase-100 with the four-layer MLP model. The comparison evaluates \plmia against baselines under the tabular neural-network setting.}
    \label{fig:purchase100_mlp}
\end{figure}

As shown in \tabref{tab:purchase_mlp} and \figref{fig:purchase100_mlp}, \plmia achieves the strongest performance among the compared attacks.
It obtains an AUC of $84.23$, compared with $83.23$ for \lira and $83.71$ for \rmia.
The advantage becomes more pronounced in the low-FPR regime: \plmia achieves $3.70$ TPR@0.01\%FPR and $2.07$ TPR@0\%FPR, improving over \rmia{}'s $3.20$ and $1.59$, respectively.
These results further demonstrate that \plmia remains effective on tabular data even when the target model is a neural network.

\section{Proofs}\label{app:proof}

This section provides detailed proofs of the theoretical results in Section~4 of the main manuscript. We first derive the GLR and BLR statistics (Lemmas~4.1 and 4.2), then establish the pairwise-difference distributions and the neutral baselines induced by population calibration (Lemmas~4.3 and 4.4). Finally, we derive the attack-power expressions and prove the resulting theoretical ordering (Theorems~4.5 and 4.6).

\subsection{Proof of Lemma~4.1}

\begin{proof}
For a Gaussian distribution $\mathcal{N}(\mu, \sigma^2)$, the log-density is:
\begin{align}
    \log \mathrm{pdf}(x \mid \mathcal{N}(\mu, \sigma^2)) = -\frac{1}{2}\log(2\pi) - \log\sigma - \frac{(x-\mu)^2}{2\sigma^2}.
\end{align}
It follows that the Gaussian LR (GLR) is
\begin{align}
    \varphi_{\mathrm{GLR}}^{(d)} &= \log \mathrm{pdf}\big(\mathrm{conf}^{(d)} \mid \mathcal{N}(\mu_{\mathrm{in}}^{(d)},(\sigma_{\mathrm{in}}^{(d)})^2)\big) - \log \mathrm{pdf}\big(\mathrm{conf}^{(d)} \mid \mathcal{N}(\mu_{\mathrm{out}}^{(d)},(\sigma_{\mathrm{out}}^{(d)})^2)\big) \nonumber\\
    &= \left(-\log\sigma_{\mathrm{in}}^{(d)} - \frac{(\mathrm{conf}^{(d)} - \mu_{\mathrm{in}}^{(d)})^2}{2(\sigma_{\mathrm{in}}^{(d)})^2}\right) - \left(-\log\sigma_{\mathrm{out}}^{(d)} - \frac{(\mathrm{conf}^{(d)} - \mu_{\mathrm{out}}^{(d)})^2}{2(\sigma_{\mathrm{out}}^{(d)})^2}\right) \nonumber\\
    &= \log\frac{\sigma_{\mathrm{out}}^{(d)}}{\sigma_{\mathrm{in}}^{(d)}} + \frac{(\mathrm{conf}^{(d)} - \mu_{\mathrm{out}}^{(d)})^2}{2(\sigma_{\mathrm{out}}^{(d)})^2} - \frac{(\mathrm{conf}^{(d)} - \mu_{\mathrm{in}}^{(d)})^2}{2(\sigma_{\mathrm{in}}^{(d)})^2}. \label{eq:glr_general_form}
\end{align}

Using
$\sigma_{\mathrm{in}}^{(d)}=\rho\sigma_{\mathrm{out}}^{(d)}$ and $\eta^{(d)}=(\mu_{\mathrm{in}}^{(d)}-\mu_{\mathrm{out}}^{(d)})/\sigma_{\mathrm{out}}^{(d)}$, we obtain the two forms stated in Lemma~4.1. Let $W\sim\mathcal{N}(0,1)$ denote a standard normal variable.

\textit{Non-member case.} Under $H_0$, write
$\mathrm{conf}^{(d)}=\mu_{\mathrm{out}}^{(d)}+\sigma_{\mathrm{out}}^{(d)}W$. Substitution into \eqref{eq:glr_general_form} gives
\begin{align}
    \varphi_{\mathrm{GLR}}^{(d)} \mid H_0
    &= -\log\rho + \frac{W^2}{2} - \frac{(W-\eta^{(d)})^2}{2\rho^2}. \label{eq:glr_h0_form}
\end{align}

\textit{Member case.} Under $H_1$, write
$\mathrm{conf}^{(d)}=\mu_{\mathrm{in}}^{(d)}+\sigma_{\mathrm{in}}^{(d)}W=\mu_{\mathrm{out}}^{(d)}+\sigma_{\mathrm{out}}^{(d)}(\eta^{(d)}+\rho W)$. Substitution into \eqref{eq:glr_general_form} gives
\begin{align}
    \varphi_{\mathrm{GLR}}^{(d)} \mid H_1
    &= -\log\rho + \frac{(\eta^{(d)}+\rho W)^2}{2} - \frac{W^2}{2}. \label{eq:glr_h1_form}
\end{align}

We next compute the mean separation of GLR for $\rho \leq 1$. From \eqref{eq:glr_h1_form},
\begin{align}
    \mathbb{E}[\varphi_{\mathrm{GLR}}^{(d)} \mid H_1]
    &= -\log\rho + \frac{\mathbb{E}[(\eta^{(d)}+\rho W)^2]}{2} - \frac{\mathbb{E}[W^2]}{2} \nonumber\\
    &= -\log\rho + \frac{\rho^2 + (\eta^{(d)})^2 - 1}{2}. \label{eq:glr_mean_h1}
\end{align}
From \eqref{eq:glr_h0_form},
\begin{align}
    \mathbb{E}[\varphi_{\mathrm{GLR}}^{(d)} \mid H_0]
    &= -\log\rho + \frac{\mathbb{E}[W^2]}{2} - \frac{\mathbb{E}[(W-\eta^{(d)})^2]}{2\rho^2} \nonumber\\
    &= -\log\rho + \frac{1}{2} - \frac{1 + (\eta^{(d)})^2}{2\rho^2}. \label{eq:glr_mean_h0}
\end{align}
Subtracting the two expectations gives
\begin{align}
    \Delta\mu_{\mathrm{GLR}}^{(d)}
    &= \mathbb{E}[\varphi_{\mathrm{GLR}}^{(d)} \mid H_1] - \mathbb{E}[\varphi_{\mathrm{GLR}}^{(d)} \mid H_0] \nonumber\\
    &= \left(\frac{\rho^2 + (\eta^{(d)})^2 - 1}{2}\right) - \left(\frac{1}{2} - \frac{1 + (\eta^{(d)})^2}{2\rho^2}\right) \nonumber\\
    &= \frac{(\rho^2-1)^2 + (\eta^{(d)})^2(1+\rho^2)}{2\rho^2}. \label{eq:delta_mu_lambda}
\end{align}
When $\rho<1$, the separation exceeds $(\eta^{(d)})^2$ and grows as $O(1/\rho^2)$ as $\rho$ decreases. Thus, GLR captures variance contraction as an additional membership signal.

For the pairwise analysis in Lemma~4.3, setting $\rho=1$ in \eqref{eq:glr_h0_form} and \eqref{eq:glr_h1_form} gives
\begin{align}
    \varphi_{\mathrm{GLR}}^{(d)} \mid H_1
    &= \eta^{(d)}W + \frac{(\eta^{(d)})^2}{2}
    \sim \mathcal{N}\left(\frac{(\eta^{(d)})^2}{2}, (\eta^{(d)})^2\right), \\
    \varphi_{\mathrm{GLR}}^{(d)} \mid H_0
    &= \eta^{(d)}W - \frac{(\eta^{(d)})^2}{2}
    \sim \mathcal{N}\left(-\frac{(\eta^{(d)})^2}{2}, (\eta^{(d)})^2\right).
\end{align}
These Gaussian forms provide the GLR inputs to the pairwise-difference derivation.
\end{proof}

\subsection{Proof of Lemma~4.2}

\begin{proof}
Recall that the BLR pointwise statistic is
$\varphi_{\mathrm{BLR}}^{(d)}
:=\log(f_{\theta_t}(x_d)[y_d]/\bar c^{(d)})$, where
$\bar c^{(d)}=\mathbb{E}[f_{\theta}(x_d)[y_d]]$ is the average TLC over all reference models.
Let $h(u)=\operatorname{sigmoid}(u)$,
$g(u)=\log h(u)$, and
$s^{(d)}=h(\mu_{\mathrm{out}}^{(d)})$.
Under $H_0$ and $H_1$, respectively,
\begin{align*}
    f_{\theta_t}(x_d)[y_d]\mid H_0
    &=h\!\left(\mu_{\mathrm{out}}^{(d)}
      +\sigma_{\mathrm{out}}^{(d)}W\right), \\
    f_{\theta_t}(x_d)[y_d]\mid H_1
    &=h\!\left(\mu_{\mathrm{out}}^{(d)}
      +\sigma_{\mathrm{out}}^{(d)}(\eta^{(d)}+\rho W)\right),
\end{align*}
where $W\sim\mathcal{N}(0,1)$. By definition,
$\xi^{(d)}=[1-s^{(d)}]\sigma_{\mathrm{out}}^{(d)}$.

We first expand the normalizing term $\bar c^{(d)}$.
Since the reference models are balanced between the IN and OUT populations,
$\bar c^{(d)}$ is the average of the two conditional expectations above.
The first-order expansion
\begin{align*}
    h\!\left(\mu_{\mathrm{out}}^{(d)}+u\right)
    &\approx
    s^{(d)}+s^{(d)}(1-s^{(d)})u
\end{align*}
gives
\begin{align*}
    \mathbb{E}\!\left[
      h\!\left(
        \mu_{\mathrm{out}}^{(d)}
        +\sigma_{\mathrm{out}}^{(d)}W
      \right)
    \right]
    &\approx
    s^{(d)}
    +s^{(d)}(1-s^{(d)})
     \sigma_{\mathrm{out}}^{(d)}\mathbb{E}[W] \\
    &=s^{(d)},
\end{align*}
and
\begin{align*}
    &\mathbb{E}\!\left[
      h\!\left(
        \mu_{\mathrm{out}}^{(d)}
        +\sigma_{\mathrm{out}}^{(d)}
         (\eta^{(d)}+\rho W)
      \right)
    \right] \\
    &\qquad\approx
    s^{(d)}
    +s^{(d)}(1-s^{(d)})
     \sigma_{\mathrm{out}}^{(d)}
     \mathbb{E}\!\left[\eta^{(d)}+\rho W\right] \\
    &\qquad=
    s^{(d)}
    +s^{(d)}(1-s^{(d)})
     \eta^{(d)}\sigma_{\mathrm{out}}^{(d)},
\end{align*}
where we use $\mathbb{E}[W]=0$. Therefore,
\begin{align}
    \bar c^{(d)}
    &=\frac{1}{2}\Bigl\{
      \mathbb{E}\bigl[h(\mu_{\mathrm{out}}^{(d)}
      +\sigma_{\mathrm{out}}^{(d)}W)\bigr]
      +\mathbb{E}\bigl[h(\mu_{\mathrm{out}}^{(d)}
      +\sigma_{\mathrm{out}}^{(d)}(\eta^{(d)}+\rho W))\bigr]
      \Bigr\} \nonumber\\
    &\approx\frac{1}{2}\Bigl\{
      s^{(d)}+s^{(d)}
      +s^{(d)}(1-s^{(d)})
       \eta^{(d)}\sigma_{\mathrm{out}}^{(d)}
      \Bigr\} \nonumber\\
    &=s^{(d)}\left(1+\frac{\eta^{(d)}\xi^{(d)}}{2}\right),
      \nonumber
\end{align}
Using $\log(1+u)\approx u$ then gives
\begin{align}
    \log\bar c^{(d)}
    &\approx
      \log s^{(d)}
      +\log\!\left(
        1+\frac{\eta^{(d)}\xi^{(d)}}{2}
      \right) \nonumber\\
    &\approx
      \log s^{(d)}
      +\frac{\eta^{(d)}\xi^{(d)}}{2}.
    \label{eq:blr_normalizer_approx}
\end{align}

\textit{Member case.}
Since $g(\mu_{\mathrm{out}}^{(d)})=\log s^{(d)}$ and
$g'(\mu_{\mathrm{out}}^{(d)})=1-s^{(d)}$, under $H_1$,
\begin{align}
    \log f_{\theta_t}(x_d)[y_d]
    &=g\!\left(\mu_{\mathrm{out}}^{(d)}
      +\sigma_{\mathrm{out}}^{(d)}(\eta^{(d)}+\rho W)\right)
      \nonumber\\
    &\approx\log s^{(d)}
      +(1-s^{(d)})\sigma_{\mathrm{out}}^{(d)}
       (\eta^{(d)}+\rho W) \nonumber\\
    &=\log s^{(d)}+\eta^{(d)}\xi^{(d)}
      +\rho\xi^{(d)}W.
    \label{eq:blr_member_tlc_approx}
\end{align}
Substituting \eqref{eq:blr_normalizer_approx} into the BLR statistic yields
\begin{align}
    \varphi_{\mathrm{BLR}}^{(d)}\mid H_1
    &\approx\frac{\eta^{(d)}\xi^{(d)}}{2}
      +\rho\xi^{(d)}W.
    \label{eq:blr_h1_form}
\end{align}

\textit{Non-member case.}
Under $H_0$, the same first-order expansion gives
\begin{align}
    \log f_{\theta_t}(x_d)[y_d]
    &=g\!\left(\mu_{\mathrm{out}}^{(d)}
      +\sigma_{\mathrm{out}}^{(d)}W\right) \nonumber\\
    &\approx\log s^{(d)}
      +(1-s^{(d)})\sigma_{\mathrm{out}}^{(d)}W \nonumber\\
    &=\log s^{(d)}+\xi^{(d)}W.
    \label{eq:blr_nonmember_tlc_approx}
\end{align}
Combining this expression with \eqref{eq:blr_normalizer_approx} yields
\begin{align}
    \varphi_{\mathrm{BLR}}^{(d)}\mid H_0
    &\approx-\frac{\eta^{(d)}\xi^{(d)}}{2}
      +\xi^{(d)}W.
    \label{eq:blr_h0_form}
\end{align}

Since $\mathbb{E}[W]=0$, the first-order mean separation of BLR is
\begin{align}
    \Delta\mu_{\mathrm{BLR}}^{(d)}
    &=\mathbb{E}[\varphi_{\mathrm{BLR}}^{(d)}\mid H_1]
      -\mathbb{E}[\varphi_{\mathrm{BLR}}^{(d)}\mid H_0]
      \approx\eta^{(d)}\xi^{(d)}.
\end{align}
Thus, the contraction ratio $\rho$ affects the first-order variance term of the BLR statistic, but not its first-order mean separation. This confirms that BLR does not use variance contraction as a mean-separation signal.
\end{proof}

\subsection{Proof of Lemma~4.3}
\label{proof:lem:pairwise}

\begin{proof}
At $\rho=1$, the GLR forms derived above are Gaussian. For fixed points $q$ and $z$, let $W_q,W_z\overset{\mathrm{ind}}{\sim}\mathcal{N}(0,1)$.
For independent random variables $X$ and $Y$,
\begin{align}
    \mathbb{E}[X - Y] = \mathbb{E}[X] - \mathbb{E}[Y], \quad \mathrm{Var}[X - Y] = \mathrm{Var}[X] + \mathrm{Var}[Y].
\end{align}

\textit{GLR-induced difference.}
From Lemma~4.1, conditional on $\eta^{(q)}$ and $\eta^{(z)}$,
\begin{itemize}
    \item a member query satisfies
    $\varphi_{\mathrm{GLR}}^{(q)} \sim \mathcal{N}\left(\frac{(\eta^{(q)})^2}{2}, (\eta^{(q)})^2\right)$;
    \item a non-member population point satisfies
    $\varphi_{\mathrm{GLR}}^{(z)} \sim \mathcal{N}\left(-\frac{(\eta^{(z)})^2}{2}, (\eta^{(z)})^2\right)$.
\end{itemize}
Therefore, for a member query,
\begin{align}
    \varphi_{\mathrm{GLR}}^{(q)}-\varphi_{\mathrm{GLR}}^{(z)}\mid H_1
    &\sim\mathcal{N}\!\left(
    \frac{(\eta^{(q)})^2+(\eta^{(z)})^2}{2},
    (\eta^{(q)})^2+(\eta^{(z)})^2
    \right).
    \label{eq:pairwise_glr_h1}
\end{align}

\textit{BLR-induced difference.}
From Lemma~4.2, conditional on $(\eta^{(q)},\xi^{(q)})$ and $(\eta^{(z)},\xi^{(z)})$,
\begin{itemize}
    \item a member query satisfies
    $\varphi_{\mathrm{BLR}}^{(q)} \sim \mathcal{N}\left(\frac{\eta^{(q)}\xi^{(q)}}{2}, (\xi^{(q)})^2\right)$;
    \item a non-member population point satisfies
    $\varphi_{\mathrm{BLR}}^{(z)} \sim \mathcal{N}\left(-\frac{\eta^{(z)}\xi^{(z)}}{2}, (\xi^{(z)})^2\right)$.
\end{itemize}
Therefore, for a member query,
\begin{align}
    \varphi_{\mathrm{BLR}}^{(q)}-\varphi_{\mathrm{BLR}}^{(z)}\mid H_1
    &\sim\mathcal{N}\!\left(
    \frac{\eta^{(q)}\xi^{(q)}+\eta^{(z)}\xi^{(z)}}{2},
    (\xi^{(q)})^2+(\xi^{(z)})^2
    \right).
    \label{eq:pairwise_blr_h1}
\end{align}

We finally consider a non-member query. Under $H_0$, both $q$ and $z$ follow their respective non-member distributions, giving
\begin{align}
    \varphi_{\mathrm{GLR}}^{(q)}-\varphi_{\mathrm{GLR}}^{(z)}\mid H_0
    &\sim\mathcal{N}\!\left(
    \frac{(\eta^{(z)})^2-(\eta^{(q)})^2}{2},
    (\eta^{(q)})^2+(\eta^{(z)})^2
    \right),
    \label{eq:pairwise_glr_h0}\\
    \varphi_{\mathrm{BLR}}^{(q)}-\varphi_{\mathrm{BLR}}^{(z)}\mid H_0
    &\sim\mathcal{N}\!\left(
    \frac{\eta^{(z)}\xi^{(z)}-\eta^{(q)}\xi^{(q)}}{2},
    (\xi^{(q)})^2+(\xi^{(z)})^2
    \right).
    \label{eq:pairwise_blr_h0}
\end{align}

Although the conditional means need not be zero, the marginal null distributions are symmetric about zero. Indeed, exchanging the i.i.d. non-member draws $Q$ and $Z$ reverses the sign of the pairwise difference without changing its distribution. This symmetry leads directly to the neutral baselines in Lemma~4.4.
\end{proof}

\subsection{Proof of Lemma~4.4}

\begin{proof}
We establish the two neutral baselines in turn.

\textit{Binary-voting population calibration.}
For binary-voting population calibration, we use the neutral threshold $\gamma=0$. Let $Q$ and $Z$ be independent non-member points drawn from the same data distribution. Under $H_0$, the pair $(Q,Z)$ is exchangeable, and hence
\begin{align}
    \Delta(Q,Z) &\overset{d}{=} \Delta(Z,Q)=-\Delta(Q,Z).
\end{align}
Since the pairwise difference is continuous, ties have probability zero. Therefore,
\begin{align}
    \Pr\left[\Delta(Q,Z)>0\mid H_0\right]
    &= \Pr\left[\Delta(Q,Z)<0\mid H_0\right]
    = \frac{1}{2}. \label{eq:proof_binary_win_prob}
\end{align}
The binary score is the query point's population win probability. Taking the expectation over a random non-member query gives
\begin{align}
    \mathbb{E}\left[\mathrm{Score}_{\mathrm{Binary}}(Q)\mid H_0\right]
    &= \frac{1}{2},
\end{align}
which establishes the binary-voting baseline in Lemma~4.4 of the main manuscript.

\textit{Cauchy-combination population calibration.}
Let $P=p(Q,Z)$ be an exact continuous pairwise null $p$-value. Under $H_0$, $P\sim\mathrm{Uniform}(0,1)$. Define the Cauchy evidence transformation~\citep{cct}
\begin{align}
    T(P) &= \tan((0.5-P)\pi).
\end{align}
For any $t\in\mathbb{R}$,
\begin{align}
    \Pr\left[T(P)\leq t\mid H_0\right]
    &= \Pr\left[P\geq \frac{1}{2}-\frac{1}{\pi}\arctan(t)\mid H_0\right] \nonumber\\
    &= \frac{1}{2}+\frac{1}{\pi}\arctan(t). \label{eq:proof_cauchy_cdf}
\end{align}
This is the CDF of a standard Cauchy random variable. Hence, each null pairwise $p$-value is mapped to Cauchy evidence with median zero and undefined mean:
\begin{align}
    \mathrm{Median}\left(T(p(Q,Z))\mid H_0\right)=0. \label{eq:proof_cauchy_pairwise_median}
\end{align}
The Cauchy-combined \plmia score averages these transformed pairwise evidence terms. Under independence, Cauchy stability gives an exact median-zero Cauchy score. In the pairwise setting, the terms share the same query point; the score-level approximation used in the main manuscript is therefore
\begin{align}
    \mathrm{Median}\left(\mathrm{Score}_{\mathrm{PL}}(Q)\mid H_0\right)
    &\approx 0. \label{eq:proof_score_pl_median}
\end{align}
This establishes the Cauchy-combination baseline in Lemma~4.4 of the main manuscript. The median is used in place of the expectation because Cauchy evidence has no finite mean.
\end{proof}

\subsection{Proof of Theorem~4.5}

\begin{proof}
We first derive the Gaussian-CDF form for \lira, \rmia, and \plmiap.
For each attack $A$, let $S_A$ denote the corresponding pointwise statistic before binary population calibration. Binary population calibration applies an increasing null-CDF transformation to this statistic and therefore preserves its rejection region and attack power. It is thus sufficient to derive the Gaussian power expression using $S_A$.
At $\rho=1$, for $A\in\{\mathrm{LiRA},\mathrm{RMIA},\mathrm{PL\text{-}MIA}^{+}\}$, write the null and member score distributions as
\begin{align}
    S_A\mid H_0 &\sim \mathcal{N}(\mu_0^A,(\sigma_A)^2), \\
    S_A\mid H_1 &\sim \mathcal{N}(\mu_1^A,(\sigma_A)^2).
\end{align}
At fixed FPR $\alpha$, the null threshold is
\begin{align}
    \tau_\alpha^A
    &= \mu_0^A+\sigma_A\Phi^{-1}(1-\alpha). \label{eq:proof_gaussian_threshold}
\end{align}
Therefore,
\begin{align}
    \mathrm{TPR}_{A}(\alpha)
    &= \Pr\left[S_A>\tau_\alpha^A\mid H_1\right] \nonumber\\
    &= 1-\Phi\left(\frac{\tau_\alpha^A-\mu_1^A}{\sigma_A}\right) \nonumber\\
    &= \Phi\left(\frac{\mu_1^A-\mu_0^A}{\sigma_A}-\Phi^{-1}(1-\alpha)\right). \label{eq:proof_gaussian_tpr}
\end{align}
This is the standard fixed-level Gaussian power calculation used in likelihood-ratio testing and LiRA-style MIA analyses~\citep{neyman1933ix}.
By the rejection-region preservation above, the same expression gives the attack power of the corresponding binary-calibrated attack. Defining
\begin{align}
    \kappa_A
    &:= \frac{\mu_1^A-\mu_0^A}{\sigma_A}
\end{align}
gives the Gaussian attack-power expression in Theorem~4.5 of the main manuscript. Since $\Phi(\cdot)$ is strictly increasing, a larger $\kappa_A$ gives a higher TPR at the same FPR.

We next derive the low-FPR expression for the Cauchy-combined \plmia score. The score is
\begin{align}
    \mathrm{Score}_{\mathrm{PL}}(Q)
    &= \frac{1}{|Z|}\sum_{z\in Z}T(p(Q,z)),
    \qquad T(p)=\tan((0.5-p)\pi). \label{eq:proof_pl_score_cauchy}
\end{align}
The analytic Cauchy upper-tail threshold at FPR $\alpha$ is
\begin{align}
    \tau_\alpha
    &=\tan(\pi(0.5-\alpha)). \label{eq:proof_cauchy_threshold}
\end{align}
\textbf{In the low-FPR regime, the upper tail of the Cauchy average is dominated by one or a few very large transformed terms~\citep{cct}.}
A transformed term reaches the scale $|Z|\tau_\alpha$ when
\begin{align}
    T(p(Q,z))\geq |Z|\tau_\alpha.
\end{align}
Since $T(p)$ is strictly decreasing in $p$, this corresponds to
\begin{align}
    p(Q,z)
    &\leq \frac{1}{2}-\frac{1}{\pi}\arctan\left(|Z|\tau_\alpha\right)
    := u_{\alpha,|Z|}. \label{eq:proof_u_alpha}
\end{align}
For small $\alpha$, $u_{\alpha,|Z|}\approx \alpha/|Z|$. Let
\begin{align}
    r_\alpha
    &= \Pr\left[p(Q,Z)\leq u_{\alpha,|Z|}\mid H_1\right]. \label{eq:proof_r_alpha}
\end{align}
Here, $r_\alpha$ is the probability that one pairwise comparison from a member query produces such a very small $p$-value. Approximating the upper-tail event of the Cauchy average by the event that at least one population comparison produces such evidence gives
\begin{align}
    \mathrm{TPR}_{\mathrm{PL\text{-}MIA}}(\alpha)
    &\approx \Pr\left[\exists z\in Z: p(Q,z)\leq u_{\alpha,|Z|}\mid H_1\right]. \label{eq:proof_cauchy_tail_event}
\end{align}
Applying the product approximation over the $|Z|$ population comparisons yields
\begin{align}
    \mathrm{TPR}_{\mathrm{PL\text{-}MIA}}(\alpha)
    &\approx 1-(1-r_\alpha)^{|Z|}
    \label{eq:proof_cauchy_tpr}
\end{align}
\eqref{eq:proof_cauchy_tpr} establishes the Cauchy-combination attack-power expression in Theorem~4.5 of the main manuscript. It characterizes the low-FPR regime in which one or a few very small pairwise $p$-values drive the upper tail of the Cauchy average. Lemma~4.3 explains why member queries are more likely to produce such evidence: larger pairwise differences correspond to smaller $p$-values.
\end{proof}

\subsection{Proof of Theorem~4.6}

\begin{proof}
We prove the three comparisons in turn.

\textbf{Part (I): $\mathrm{PL\text{-}MIA}^{+}$ versus \lira.}
Let $G(q):=\varphi_{\mathrm{GLR}}^{(q)}$, and let $F_{0,G}(t):=\Pr[G(Z)\leq t\mid H_0]$ denote the marginal null CDF of the GLR score, where $Z$ is an independent non-member population draw.
Under $\rho=1$, Lemma~4.1 shows that the null GLR score is conditionally Gaussian for every point. Hence, $F_{0,G}$ is continuous and strictly increasing.
With population calibration, the binary score is the probability that the query GLR score exceeds that of a non-member population point. Conditional on $G(q)$, the query score is fixed, and therefore
\begin{align}
    \mathrm{Score}_{\mathrm{Binary}}(q) =\Pr_Z\!\left[G(Z)<G(q)\mid G(q)\right]  =F_{0,G}\!\left(G(q)\right).
    \label{eq:proof_score_transform}
\end{align}
Thus, population calibration maps the raw GLR score of $q$ to \textbf{its percentile under the non-member distribution.}

For a non-member query $Q$, the random variable $G(Q)$ follows the same null CDF $F_{0,G}$. The probability-integral transform therefore gives
\begin{align}
    \mathrm{Score}_{\mathrm{Binary}}(Q)\mid H_0 \sim\mathrm{Uniform}(0,1).
\end{align}
Consequently, the size-$\alpha$ rejection region of $\mathrm{PL\text{-}MIA}^{+}$ is
\begin{align}
\left\{F_{0,G}(G(Q))>1-\alpha\right\} = \left\{G(Q)>F_{0,G}^{-1}(1-\alpha)\right\}.
\label{eq:proof_rejection_equivalence}
\end{align}
The event on the right is precisely the size-$\alpha$ rejection event obtained by thresholding the original GLR score, as in \lira. Hence, the two attacks have the same rejection event and therefore the same attack power:
\begin{align}
\mathrm{TPR}_{\mathrm{PL\text{-}MIA}^{+}}(\alpha) =\mathrm{TPR}_{\mathrm{LiRA}}(\alpha).
\label{eq:proof_equal_lira}
\end{align}
When $\sigma_\eta^2>0$, the raw GLR distributions vary across query
points. \eqref{eq:proof_score_transform} expresses these heterogeneous scores on the common null-percentile scale $[0,1]$, while preserving their ordering and attack power. This establishes Part (I).

\textbf{Part (II): $\mathrm{PL\text{-}MIA}^{+}$ versus RMIA.}
Both attacks use population calibration, so their difference lies in the pointwise statistic: $\mathrm{PL\text{-}MIA}^{+}$ uses GLR, whereas RMIA uses BLR.
Let
\begin{equation}
    G:=\varphi_{\mathrm{GLR}}^{(Q)}, \qquad B:=\varphi_{\mathrm{BLR}}^{(Q)}.
\end{equation}
For either pointwise statistic $S\in\{G,B\}$, binary population calibration maps $S$ to its null percentile $F_{0,S}(S)$. Since this transformation is increasing, it leaves the rejection region induced by $S$ unchanged. It is therefore sufficient to compare GLR and BLR before calibration.

\textit{Variance contraction: $\rho<1$.}
By Equation~(5) and Lemma~4.1 of the main manuscript, GLR is the likelihood-ratio statistic for the observable confidence score.
Under $H_0$, expanding its
expression gives
\begin{align}
    G(W)
    &=
    -\log\rho
    +\frac{W^2}{2}
    -\frac{(W-\eta^{(Q)})^2}{2\rho^2}
    \nonumber\\
    &=
    \frac{1-\rho^{-2}}{2}W^2
    +\frac{\eta^{(Q)}}{\rho^2}W
    -\log\rho
    -\frac{(\eta^{(Q)})^2}{2\rho^2}.
    \label{eq:proof_glr_quadratic}
\end{align}
Its coefficient on $W^2$ is $(1-\rho^{-2})/2<0$. Hence, $G(W)$ is a concave quadratic function, and a nontrivial upper GLR rejection region therefore has the form
$\{W:G(W)>c_G\}=(w_-,w_+)$, which is a bounded interval.

In contrast, the exact BLR statistic is
\begin{align}
    B(W) &=
    \log\operatorname{sigmoid}\!\left(
        \mu_{\mathrm{out}}^{(Q)}
        +\sigma_{\mathrm{out}}^{(Q)}W \right)
    -\log\bar c^{(Q)}.
\end{align}
For a fixed query, $\bar c^{(Q)}$ is constant, and
\begin{align}
    \frac{\mathrm{d}B(W)}{\mathrm{d}W} &=
    \sigma_{\mathrm{out}}^{(Q)}
    \left[
        1-\operatorname{sigmoid}\!\left(
            \mu_{\mathrm{out}}^{(Q)}
            +\sigma_{\mathrm{out}}^{(Q)}W
        \right)
    \right]
    >0.
    \label{eq:proof_blr_monotonicity}
\end{align}
Thus, BLR is strictly increasing in $W$, and its upper rejection
region has the form $\{W:B(W)>c_B\}=(w_B,\infty)$, which is a half-line.

A bounded interval and a half-line cannot coincide. Their symmetric difference therefore contains a nonempty interval, which has positive null probability because $W\sim\mathcal{N}(0,1)$ has a strictly positive density on $\mathbb{R}$.
By the Neyman-Pearson lemma~\citep{neyman1933ix}, the GLR rejection region is most powerful among size-$\alpha$ tests.
Since its null distribution is continuous and the two rejection regions differ on a set of positive null probability, the comparison is strict. Population calibration preserves both rejection regions, and hence
\begin{align}
\mathrm{TPR}_{\mathrm{PL\text{-}MIA}^{+}}(\alpha) > \mathrm{TPR}_{\mathrm{RMIA}}(\alpha), \qquad \rho<1.
\label{eq:proof_rho_ordering}
\end{align}

\textit{$\xi$-heterogeneity: $\rho=1$.}
We now fix $\eta^{(Q)}=\eta_0>0$ to isolate $\xi$-heterogeneity. Let $\bar\xi:=\mathbb{E}[\xi^{(Q)}]$ and $\operatorname{Var}(\xi^{(Q)})=\sigma_\xi^2$.
At $\rho=1$, Lemma~4.1 gives
\begin{align}
    G\mid H_1
    &=
    \eta_0W+\frac{\eta_0^2}{2},
    &
    G\mid H_0
    &=
    \eta_0W-\frac{\eta_0^2}{2}.
\end{align}
The GLR mean separation and common within-class variance are therefore
\begin{align}
    \Delta\mu_{\mathrm{GLR}}
    &=\eta_0^2,
    &
    \sigma_{\mathrm{GLR}}^2
    &=\eta_0^2.
\end{align}
Since $\eta_0>0$,
\begin{align}
    \kappa_{\mathrm{GLR}}
    &=
    \frac{\Delta\mu_{\mathrm{GLR}}}
         {\sigma_{\mathrm{GLR}}}
    =\eta_0.
    \label{eq:proof_glr_kappa}
\end{align}

Using the first-order BLR score in Lemma~4.2 gives
\begin{align}
    B\mid H_1
    &\approx
    \xi^{(Q)}
    \left(W+\frac{\eta_0}{2}\right),
    &
    B\mid H_0
    &\approx
    \xi^{(Q)}
    \left(W-\frac{\eta_0}{2}\right).
    \label{eq:proof_blr_rho_one}
\end{align}
Because $W$ is independent of $\xi^{(Q)}$, with
$\mathbb{E}[W]=0$ and $\mathbb{E}[W^2]=1$, the class means satisfy
\begin{align}
    \mathbb{E}[B\mid H_1]
    &\approx
    \frac{\eta_0\bar\xi}{2},
    &
    \mathbb{E}[B\mid H_0]
    &\approx
    -\frac{\eta_0\bar\xi}{2}.
\end{align}
Thus, the BLR mean separation is $\Delta\mu_{\mathrm{BLR}}\approx \eta_0\bar\xi$.
Moreover, $\mathbb{E}\!\left[(\xi^{(Q)})^2\right] = \bar\xi^2+\sigma_\xi^2$.
The covariance between $\xi^{(Q)}W$ and $\xi^{(Q)}$ is zero because
$W$ is independent of $\xi^{(Q)}$ and $\mathbb{E}[W]=0$. Hence, under
either hypothesis,
\begin{align}
    \operatorname{Var}(B\mid H_b)
    &\approx
    \operatorname{Var}\!\left(\xi^{(Q)}W\right)
    +\frac{\eta_0^2}{4}
     \operatorname{Var}\!\left(\xi^{(Q)}\right)
    \nonumber\\
    &=
    \mathbb{E}\!\left[(\xi^{(Q)})^2\right]
    +\frac{\eta_0^2}{4}\sigma_\xi^2
    \nonumber\\
    &=
    \bar\xi^2+
    \left(1+\frac{\eta_0^2}{4}\right)\sigma_\xi^2,
    \qquad b\in\{0,1\}.
    \label{eq:proof_blr_common_variance}
\end{align}
It follows that
\begin{align}
    \kappa_{\mathrm{BLR}}
    &=
    \frac{\eta_0\bar\xi}
    {\sqrt{
        \bar\xi^2+
        \left(1+\eta_0^2/4\right)\sigma_\xi^2
    }}.
\end{align}
Under Assumption~2.2, $\xi^{(Q)}>0$ and hence $\bar{\xi}>0$. Therefore, when $\sigma_\xi^2>0$,
\begin{align}
\kappa_{\mathrm{BLR}} &=
\frac{\eta_0\bar\xi}
{\sqrt{\bar\xi^2+
\left(1+\eta_0^2/4\right)\sigma_\xi^2}}
< \eta_0 =\kappa_{\mathrm{GLR}},
\qquad \sigma_\xi^2>0.
\label{eq:proof_kappa_ordering}
\end{align}
Since the Gaussian attack-power expression in Theorem~4.5 is strictly increasing in $\kappa_A$, this yields
\begin{align}
\mathrm{TPR}_{\mathrm{PL\text{-}MIA}^{+}}(\alpha)
&>\mathrm{TPR}_{\mathrm{RMIA}}(\alpha).
\label{eq:proof_xi_ordering}
\end{align}
Together, \eqref{eq:proof_rho_ordering} and~\eqref{eq:proof_xi_ordering} establish Part (II).

\textbf{Part (III): \plmia versus $\mathrm{PL\text{-}MIA}^{+}$.}
Recall that $r_\alpha$ is the probability that a single pairwise comparison from a member query produces an exceptionally small $p$-value. The low-FPR expression in Theorem~4.5 approximates the power of \plmia by the probability that at least one of the $|Z|$ comparisons produces such evidence:
\begin{align}
    \mathrm{TPR}_{\mathrm{PL\text{-}MIA}}(\alpha)
    &\approx 1-(1-r_\alpha)^{|Z|}. \label{eq:proof_plmia_power_part3}
\end{align}
Since $1-x\leq\exp(-x)$ for $x\in[0,1]$, we have $(1-r_\alpha)^{|Z|}\leq \exp(-|Z|r_\alpha)$ and therefore
\begin{align}
    1-(1-r_\alpha)^{|Z|}
    &\geq 1-\exp(-|Z|r_\alpha). \label{eq:proof_exp_bound}
\end{align}
Under the condition in Part (III),
\begin{align}
    1-\exp(-|Z|r_\alpha)
    &> \mathrm{TPR}_{\mathrm{PL\text{-}MIA}^{+}}(\alpha). \label{eq:proof_part3_condition}
\end{align}
Within this low-FPR approximation, combining Equations~\ref{eq:proof_plmia_power_part3}--\ref{eq:proof_part3_condition} gives
\begin{align}
    \mathrm{TPR}_{\mathrm{PL\text{-}MIA}}(\alpha)
    &> \mathrm{TPR}_{\mathrm{PL\text{-}MIA}^{+}}(\alpha). \label{eq:proof_ordering_plmia_plplus}
\end{align}
This establishes Part (III) and completes the proof.
\end{proof}


\bibliographystyle{imsart-nameyear} 
\bibliography{plmia260911}       


\end{document}